%% file: arxiv.tex
\documentclass[11pt]{article}

\usepackage[letterpaper, margin=0.98in, top=1in, bottom=1in]{geometry}

\usepackage{microtype}
\usepackage{graphicx}
\usepackage{subcaption}
\usepackage{booktabs}
\usepackage{tabularx}
\usepackage{arydshln}

\usepackage[utf8]{inputenc}
\usepackage[T1]{fontenc}
\usepackage{url}

\usepackage{amsmath}
\usepackage{amssymb}
\usepackage{mathtools}
\usepackage{amsfonts}
\usepackage{amsthm}
\usepackage{bbm}
\usepackage{dsfont}
\usepackage{nicefrac}
\usepackage{xfrac}

\usepackage[usenames,dvipsnames]{xcolor}
\definecolor{DarkRed}{rgb}{0.368,0.097,0.078}
\definecolor{DarkBlue}{rgb}{0.2,0.2,0.6}

\usepackage{hyperref}

\usepackage{natbib}

\usepackage[capitalize,noabbrev]{cleveref}

\usepackage{enumitem}
\setlist[enumerate,1]{leftmargin=0.6cm}
\setlist[itemize,1]{leftmargin=0.4cm}
\usepackage[textsize=tiny]{todonotes}

\usepackage{comment}
\usepackage{xspace}
\usepackage{makecell}

\makeatletter
\renewcommand{\paragraph}{%
  \@startsection{paragraph}{4}%
  {\z@}{0ex}{-1em}%
  {\normalfont\normalsize\bfseries}%
}
\makeatother

\Crefname{assumption}{Condition}{Conditions}
\crefname{assumption}{Condition}{Conditions}
\crefname{algorithm}{Analyzable SGD}{Analyzable SGD}
\Crefname{algorithm}{Analyzable SGD}{Analyzable SGD}

\theoremstyle{plain}
\newtheorem{theorem}{Theorem}[section]

\newtheorem{lemma}[theorem]{Lemma}

\newtheorem{condition}[theorem]{Condition}

\newtheorem{question}[theorem]{Open Question}

\theoremstyle{definition}
\newtheorem{definition}[theorem]{Definition}

\newtheorem{algorithm}[theorem]{Analyzable SGD}

\newcommand{\celltwo}[2]{\parbox[t]{\hsize}{\centering #1\\ #2}}
\newcommand{\cellthree}[3]{\parbox[t]{\hsize}{\centering #1\\ #2\\ #3}}

\newcommand{\marko}[1]{\textcolor{olive}{[Marko: #1]}}

\newcommand{\natinote}[1]{\todo[color=blue!20]{{#1}}}

\newcommand{\gal}[1]{{\color{red}{[Gal: {#1}]}}}
\newcommand{\tm}[1]{\textcolor{orange}{[Theo: #1]}}

\input{math_commands}

\input{symbols}

\input{def}

\title{\textbf{Positive Distribution Shift as a Framework for Understanding Tractable Learning}}
\usepackage{authblk}
\makeatletter
\renewcommand\AB@affilsepx{\protect\\\protect\Affilfont}
\makeatother

\author[1]{Marko Medvedev}
\author[2,3]{Idan Attias}
\author[4]{Elisabetta Cornacchia}
\author[5]{Theodor Misiakiewicz}
\author[6]{\\Gal Vardi}
\author[2]{Nathan Srebro}

\affil[1]{University of Chicago}
\affil[2]{Toyota Technological Institute at Chicago}
\affil[3]{University of Illinois Chicago}
\affil[4]{Inria, DI/ENS, PSL}
\affil[5]{Yale University}
\affil[6]{Weizmann Institute of Science}

\date{}

\begin{document}

\maketitle


\begin{abstract}
We study a setting where the goal is to learn a target function $f(x)$ with respect to a target distribution $D(x)$, but training is done on i.i.d. samples from a different training distribution $D’(x)$, labeled by the true target $f(x)$.  Such a distribution shift (here in the form of covariate shift) is usually viewed negatively, as hurting or making learning harder, and the traditional distribution shift literature is mostly concerned with limiting or avoiding this negative effect. In contrast, we argue that with a well-chosen $D'(x)$, the shift can be positive and make learning easier -- a perspective called \emph{Positive Distribution Shift (PDS)}. Such a perspective is central to contemporary machine learning, where much of the innovation is in finding good training distributions $D’(x)$, rather than changing the training algorithm. We further argue that the benefit is often computational rather than statistical, and that PDS allows computationally hard problems to become tractable even using standard gradient-based training.  We formalize different variants of PDS, show how certain hard classes are easily learnable under PDS, and make connections with membership query learning. 
\end{abstract}

\section{Introduction}

We consider a setting where our goal is to obtain good performance on some {\em target distribution} $\cD$, but instead train on i.i.d.~data from a different {\em training distribution} $\cD'$. This type of distribution shift scenario is usually seen as less preferable than training on data from the target $\cD$ itself. For example, we may want accurate pedestrian detection on images taken from cars in New York, but only have training data collected in California. Or we may wish to classify sentiment in scientific papers, yet train on Reddit posts. In practice, we rely on $\cD'$ as a proxy because $\cD$ is inaccessible, too costly to collect, or only available in very small amounts, while data from a different, but related, $\cD'$ is easier to obtain. Indeed, most of the distribution shift literature is concerned with ensuring that training on $\cD'$ is not (much) worse than training on $\cD$.

The phenomenon that we study here is how training on an alternate training distribution $\cD'\neq\cD$ can be {\em beneficial}, and lead to improved learning compared to training {\em on the same number of samples} from $\cD$. We are particularly interested in how this can be achieved without changing the training algorithm, but by simply running the ``standard'' training algorithm, such as Stochastic Gradient Descent (SGD) on some standard architecture.  This matches current training practices, where much of the innovation and competitive advantage is not from new training algorithms nor new models, but rather from finding good training distributions $\cD'$.  In this paper, we set out to provide a concrete framework and theoretical foundation for studying this phenomenon.

We emphasize the {\em computational} benefit of the distribution shift, where the main benefit of training on an alternative $\cD'$ is not in reducing the number of training examples required information theoretically, but in allowing training to be {\em tractable} (e.g.~possible in polynomial time)\footnote{In this regard, our work differs from work on ``helpful teachers'' and the teaching dimension \citep{goldman1993learning,goldman1995complexity,zilles2011models}, which focus on reducing the number of examples required ignoring computational issues.
}.  We argue that this is the main benefit in practice, since sophisticated deep models are information theoretically learnable with sample complexity corresponding to the number of parameters, but also provably (worst case) computationally hard to learn.  The real challenge in deep learning is therefore computational, and success rests on not being in this theoretical ``worst case''.  Our goal is to show that with an appropriate distribution shift, tractable learning is possible, even with SGD. 
\citet{medvedev2025shift} recently introduced the notion of positive distribution shift for mixture distributions, showing that even with independent domains, mismatching training mixture proportions can improve test performance.
%

The specific form of distribution shift that we consider here is {\em covariate shift} in supervised classification, where the goal is to learn a predictor $h:\cX\to\cY$ under some target distribution $\cD$ over $\cX$, and we train using a different distribution $x\sim\cD'$ but the same target conditional distribution $y|x$.
We use $y|x=f(x)$ to denote this conditional distribution, where one can think of $f(x)$ as a random variable with a law determined by $x$.\footnote{Formally, $f$ specifies a conditional distribution law for $y|x$, and for every observed $x_i$, the corresponding label $y_i$ is drawn independently from this conditional distribution--this is not a random function drawn once where $f(x_i)$ and $f(x_j)$ would then be dependent.}
We use $(x,y)\sim(\cD,f)$ to denote the joint distribution over $(x,y)$ specified by $x\sim\cD,y=f(x)$.  
We are therefore interested in a predictor $\hat h=A(S)$ obtained by running some (possibly randomized) learning rule $A$ on a training set $S\sim{(\cD',f)}^m$ of $m$ i.i.d.~samples from $\cD'$. The performance is measured by the error $L_{\cD,f}(\hat h) := \Pr_{(x,y)\sim (\cD,f)}\left[\hat h(x) \neq y\right]$, namely, the probability that $\hat h$ misclassifies an example drawn from the true target distribution $
\cD$. For a hypothesis class $\cH \subseteq \mathcal{Y}^{\mathcal{X}}$, we denote by
$L_{\cD,f}(\cH) = \inf_{h \in \cH} L_{\cD,f}(h)$
the minimal error achievable by a hypothesis in $\cH$.

Our \emph{Positive Distribution Shift (PDS)} learning framework is thus captured by the following definition.

    \begin{definition}[PDS Learning]\label{def:pds}A learning rule $A$ {\bf PDS learns} $(\cD,f)$ and $\cH$ using the training distribution $\cD'$, with sample size $m(\epsilon)$ and runtime $T(\epsilon)$ if for every\footnote{For simplicity, we state learnability in expectation, however, all results hold with high probability, except for \Cref{thm:main:parity-GD-ddspac}.} $\epsilon>0$, $$  \E_{S' \sim (\cD',f)^{m(\epsilon)}} 
   \big[ L_{\cD,f}(A(S'))  \big] \leq L_{\cD,f}(\cH)  +\epsilon$$
and $A(S')$ runs in time at most $T(\epsilon)$. 
\end{definition}
\removed{
We say that a learning rule $A$ \emph{Positive Distribution Shift (PDS) learns $(\cD,f)$}, using sample size $m(\epsilon)$ and runtime $T(\epsilon)$, if there exists a distribution $\cD'$ such that $   \E_{S' \sim (\cD',f)^{m(\epsilon)}} 
   \big[ L_{\cD,f}(A(S')) - L^* \big] \leq \epsilon$, for any $\epsilon>0$,
\footnote{For simplicity, we state learnability in expectation, however, all results hold with high probability, except for \Cref{thm:main:parity-GD-ddspac}.}
where $L_{\cD,f}(\cH) := L_{\cD,f}(h^*)$ denotes the minimal error achievable by hypotheses in $\cH$, and the expectation is taken over both the random draw of $S'$ and the internal randomness of $A$. Note that performance is always measured with respect to the original distribution $\cD$.}
%
%
\removed{
\natinote{I think it's actually better without a content.  We'll see at the end of it reads.  For now I'm leaving this out.  But the next Section, about parities, needs to start with some lead-in, as eg in the next paragraph, also currently removed}
We begin, in Section \ref{}, with seeing how distribution shift can have computational advantages for the (computationally hard) problem of learning (noisy) parities. We then ask, in Section \ref{}, whether essentially {\em all} functions are learnable using appropriate distribution shift.  I.e.,~whether the right way to think about the computational hardness of learning neural nets is not by identifying specific subclasses that are learnable, but by identifying for each target function (representable by a circuit or network), the right distribution under which it is (tractably) learnable.  We then turn to a more practical form of positive distribution shift (PDS), where the training distribution does not depend on the target function, suggesting a concrete definitional framework, understanding learnability of parities and juntas under this framework, and showing connections with membership query learning (in Section \ref{}).
}
\removed{
To demonstrate the phenomena of Positive Distribution Shift, and how it can help computationally, we start in Section \ref{}, by discussing learning parities.  Learning a parity over $d$ input bits is statistically easy (requiring only $O(d)$ samples, even in a noisy or agnostic setting), but notoriously computationally hard (believed to be impossible in $\poly(d)$ time with random label noise, even for $\log(d)$-sparse parities, and in particular, not possible by training a neural net with SGD for $\poly(d)$ time, even though a small neural net can easily represent a parity).  In particular, this is the case in a standard learning setting, when we consider a uniform target distribution $\cD$ and learning based on samples from the target, i.e.~$\cD'=\cD$.  But 
}
What Definition \ref{def:pds} is missing, and we will make concrete in different ways in subsequent Sections, are quantifiers over the learning rule $A$ and training distribution $\cD'$, which are essential for discussing learning non-trivially.\footnote{Any single function $f$ is trivially and meaninglessly ``learnable'' with hard-wired $A$ that just always outputs $f$.}

In this paper, we ask and formalize what it means to be able to learn a {\em hypothesis class} with ``positive distribution shift'', and give examples of what is, and is not, tractably learnable in this sense.  To this end, we consider several hypothesis classes which are (information theoretically easy but) computationally hard to learn in a standard PAC setting (i.e.,~no poly-time learning algorithm can ensure learning with matching target and training distributions).  Ultimately, we would like to ask, and rigorously answer, whether such classes are tractably learnable with positive distribution shift (under some concrete formalization), {\em by training a typical neural network using standard SGD}.  Rigorously proving learnability using standard SGD on standard networks remains mostly elusive\footnote{Very few papers, if at all, truly analyze ``standard'' SGD, and even stylized SGD analysis is often technically complex.}.  Towards this goal, for different classes, we give evidence in the form of (a) proving PDS learnability using {\em some} tractable algorithm (but not SGD on a network); (b) proving PDS learnability using a ``stylized'' SGD on some specific network; and/or (c) showing PDS learnability experimentally using standard SGD on a standard network. See the Summary \Cref{sec:summary} for how we view and interpret PDS learning.

\paragraph{Realizability, label-noise, and agnostic learning setups.}
In the \emph{realizable} learning setting, the target function is deterministic and perfectly captured by the hypothesis class, that is, $f(x) = h^*(x) \in \mathcal{H}$ for some $h^* \in \mathcal{H}$, and  $L_{\cD,f}(\cH) =0$.
In the \emph{random classification noise} setting (with binary labels), we assume the existence of some $h^* \in \mathcal{H}$ such that for every input $x$,
$\Pr \left[f(x) = h^*(x) \mid x\right] = 1 - \eta$,
where the noise rate $\eta \in [0, \tfrac12)$ may be known or unknown. In this case, $L_{\cD,f}(\cH) =\eta$.
In the general \emph{agnostic} learning setting, we make no assumptions about the relationship between $f$ and the hypothesis class, and the goal is to compete with the best possible hypothesis in $\mathcal{H}$.
Our Positive Distribution Shift learning framework applies to all of these settings, although we primarily focus on the label-noise setting.

\removed{
We focus on studying learning with a \emph{positive distribution shift (PDS)} when using a \emph{fixed learning algorithm}, which we take to be gradient descent on neural networks. In our PDS frameworks, choosing the training distribution models the selection of the task-specific training distribution (or a training set). 
}



\section{Warm-Up: Parities Are Efficiently Learnable With PDS}\label{sec:warm-up-parities}

To illuminate how positive distribution shift (PDS) can help tractability, we first consider learning parity functions. A parity is a Boolean function $\chi_S:\{ \pm 1 \}^d\to \{ \pm 1 \}$ defined as $\chi_S(x)=\prod_{i \in S} x_i$, where $S \subseteq[d]$ is a subset of the coordinates of size $k\leq d$ called the support. Under uniform input distribution $x \sim \Unif \{ \pm 1 \}^d$, parities are \emph{statistically} easy to learn: since there are ${d \choose k}$ candidate parities, so
$O(k\log(d)) \le \poly(d)$ random samples suffice to identify the true support with high probability. Even in the presence of label noise, the sample complexity for \emph{statistically} learning parities remains $\poly(k\log d)\le \poly(d)$, and distribution shift does not help with this (and is not needed from an information-theoretical standpoint). \emph{Computationally}, however, parities under the uniform distribution are hard to learn. In the \emph{statistical query (SQ)} model parities are provably hard to learn, namely they require $d^{\Omega(k)}$ cost to learn with statistical queries \citep{kearns1998efficient, blum1994weakly, barak2022hidden}. It is widely conjectured that in the PAC model, for any  noise level $\eta>0$, there is no algorithm that can learn $k$-sparse parities in time $\poly(d,k,\eps)$ over uniform distribution with noise $\eta$, and even for $\log(d)$-sparse parities there is no $\poly(d,\epsilon)$ time algorihtm~\citep{kearns1998efficient,blum2003noise, feldman2006new,applebaum2010public, goel2017reliably}. 

However, training on a different distribution $\cD’\neq \cD$ can provide significant computational benefits, as it can reveal structure that is not visible under uniform inputs, which helps learning with respect to the starting distribution $\cD$. For instance, consider training with a product distribution $\cD'$ where all the bits have non-zero bias. For this training distribution $\cD'$, coordinates $x_i$ in the support $i\in S$ have much larger correlation with the target $\chi_S$ than the coordinates $x_i$ not in the support $i\notin S$. This additional structure and correlations (that is not visible on the uniform distribution) gives a simple poly-time algorithm for learning parities: compute the correlation of each coordinate with the target and output the parity over those coordinates whose correlations are closer to this larger value, and this generalizes back to the starting distribution $\cD$. This larger correlation with the target of coordinates in the support makes identifying the support tractably  possible with gradient descent (GD) also, which picks up the support first and then makes the problem analogous to a linear predictor and hence tractably learnable by SGD on a neural network. In the terminology of~\cite{abbe2023sgd},  positive distribution shift makes the target function be a staircase in the Fourier basis of $\cD’$, so PDS makes the function easy to learn even with SGD, and this generalizes back to the starting test distribution. This is our main focus: we want to understand what target functions are learnable with respect to a starting distribution $\cD$ using SGD on a neural network with a shifted training distribution $\cD'$.

Other authors have already considered the effect of input on tractability, but the view we take here is different in several ways. \cite{malach2021quantifying} looked at changing both the test and the training to the same distribution which is still under the standard PAC learning framework. They empirically demonstrate that in this standard PAC setting, training and testing on a distribution that is a mixture of uniform and all-bits-biased distribution, noiseless parities of sparsity $\log(d)$ are tractably learnable. Our perspective differs in that we are interested in keeping the test distribution while only modifying the training distribution. Some of our analysis parallels that of \cite{malach2021quantifying}, but our view is different. \cite{cornacchia2023mathematical} did look at keeping the same test distribution but their emphasis is on curriculum learning as an instance of positive distribution shift with computational benefit. They show that a curriculum where we train first on a randomly biased distribution and then on the uniform distribution enables learning parities with respect to the uniform distribution. By contrast, we consider a single training distribution and argue that the essential factor is not the order of training (curriculum) but the overall training distribution $\cD'$.

In \Cref{sec:parities}, we present our formal results on PDS learning of parities. First, in \Cref{thrm:main:parity-efficiently-ddspac}, we show that there is a simple tractable PDS learning algorithm for learning any parity function. Secondly, we show that parities are PDS learnable with analyzable (i.e. with layerwise training) gradient descent on a standard feed-forward neural network. Finally, we empirically show that even dense parities are PDS learnable with standard gradient descent on a two-layer feed-forward ReLU network. This extends both the results \citet{malach2021quantifying} and \citet{cornacchia2025low}, since their results only consider sparse parities. For a more comprehensive discussion of related work, see \Cref{app:additional-related-work}.

\section{All Functions Are Easy, With the Right Training Distribution}\label{sec:fpds}




In \Cref{sec:warm-up-parities}, we saw that a computationally hard class, namely noisy parities, is easy to learn with PDS. A natural question to ask is: \emph{what classes are tractably PDS learnable? Is it possible to PDS learn all functions representable by neural networks, i.e., all circuits?} Note that even constant-depth circuits are hard in the \pac framework \citep{kharitonov1993cryptographic,daniely2016complexity,daniely2021local}. 
To answer these questions, we first introduce the most basic framework of positive distribution shift (PDS) learning, namely \fpds learning. In this setting, the training distribution may depend not only on the test distribution $\cD$ but also on the target function $f$ (hence the name function-dependent PDS). The training points are sampled from such a distribution $\cD'$, and the learner’s goal is to achieve low error on $\cD$.

\begin{definition}[f-Dependent Positive Distribution Shift (\fpds)]\label{def:fpds}
A hypothesis class $\cH$ is \fpds learnable with a learning algorithm $A$ with sample size $m(\varepsilon)$ and runtime $T(\varepsilon)$ if for every labeling rule $f:\cX \to \Delta(\cY)$ and every distribution $\cD$ over $\cX$, there exists a training distribution $\cD'$ over $\cX$ 
(allowed to depend on both $\cD$ and $f$) such that for any $\varepsilon>0$ with $m(\varepsilon)$ samples from $\cD'$, $A$ has runtime $T(\varepsilon)$ and
$
\E_{S' \sim (\cD',f)^{m(\varepsilon)}} 
\big[ L_{\cD,f}(A(S'))- L_{\cD,f}(\cH) \big] \leq \varepsilon .
$
\end{definition}






The notion of f-PDS is very generous, perhaps even too generous as it allows us to ``transmit'' information about the target $f$ through the choice of training distribution $\cD'$.  Indeed, as we show in the proof of \Cref{thm:fpds} below, the learner can extract $b$ bits encoded in the choice of $\cD$ using $O(b)$ samples, which allows specifying a hypothesis in a class of size $\abs{\cH}=2^b$.  Furthermore, this decoding can even be done by gradient descent on a neural network:

\removed{It turns out that if allow the training distribution to depend on the target $f$, we can learn all poly-sized circuits with label noise. The algorithm is gradient descent on a non-standard neural network, namely, a network with non-standard topology or a fully connected feedforward network with non-standard activation functions and non-standard initialization. Essentially, we encode $f$ in  $\cD_f$ and the algorithm reconstructs the encoding from the samples. 

\textit{Boolean circuits}: Let $B$ be the set of gates $\{$AND, OR, NOT$\}$.
    A Boolean circuit $C$ on $d$ inputs is a finite directed acyclic graph (DAG) with input nodes $x_1,\dots,x_d$, internal nodes labeled by
    gates in $B$, and one output node. For $x\in\{0,1\}^d$, values propagate along edges to define
    $f_C(x)\in\{0,1\}$. The size of the circuit is the number of gate nodes. 
}


\begin{theorem}[Any Poly-sized Circuit with Label Noise is \fpds Learnable with a Non-Standard Network]\label{thm:fpds}
For any $d,s \in \mathbb{N}$ there exists a fixed non-standard feed-forward neural network\footnote{the network is non-standard in either its topology (i.e. not fully connected) and activation with standard initialization or it is fully connected with standard activation function but with special initialization.}, an initialization, and a learning rate, such that any function $f:\{0,1\}^d\to \{0,1\}$ representable by (i) a boolean circuit\footnote{\textit{Boolean circuits}: Let $B$ be the set of gates $\{$AND, OR, NOT$\}$.
    A Boolean circuit $C$ on $d$ inputs is a finite directed acyclic graph (DAG) with input nodes $x_1,\dots,x_d$, internal nodes labeled by
    gates in $B$, and one output node. For $x\in\{0,1\}^d$, values propagate along edges to define
    $f_C(x)\in\{0,1\}$. The size of the circuit is the number of gate nodes.} of size at most $s$, or (ii) a neural network of size at most $s$ with $\mathrm{polylog}(s)$ bits of precision, is \fpds learnable with label noise by SGD on this neural network with samples and runtime $m(\eps), T(\eps)=\poly(d,s,\frac{1}{\eps})$.

\end{theorem}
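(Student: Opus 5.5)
The plan is to encode a description of the target circuit into the training distribution $\cD'$ and have SGD on a fixed network act as a decoder followed by a universal circuit evaluator. A circuit of size $s$ on $d$ inputs has a description of $b=O(s\log(s+d))$ bits; a network of size $s$ with $\mathrm{polylog}(s)$ bits of precision has a description of $b=O(s\,\mathrm{polylog}(s))$ bits. Let $h^*$ be the circuit or network achieving the noise model, so that $\Pr[f(x)=h^*(x)\mid x]=1-\eta$, and let $c\in\{0,1\}^b$ be its description.

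\textbf{Choice of $\cD'$.} Fix $b$ distinguished inputs $z_1,\dots,z_b\in\{0,1\}^d$, for instance points with a fixed short prefix followed by the index $j$ in binary. This needs $d\ge \log b + O(1)$; otherwise we pad by using several queries per bit, or treat $d=O(\log s)$ by brute force. Take $\cD'=\tfrac12\cD+\tfrac12\,\mathrm{Unif}\{z_j : j\in J_c\}$, where $J_c$ encodes $c$. One option is to put $z_j$ in the support with weight proportional to $1+c_j$. A cleaner option avoids relying on labels: use two candidate points $z_j^0,z_j^1$ per bit and include only $z_j^{c_j}$. Either way, $\cD'$ depends only on $f$, through $h^*$, and on $\cD$, as \Cref{def:fpds} allows.

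\textbf{Decoding from samples.} A coupon-collector argument shows that $m=O(b\log(b/\epsilon))$ samples hit every included point with probability $1-\epsilon$. From the presence or absence of each candidate point we read off $c$ exactly, independent of the label noise. The $\cD$-half of the mixture is simply ignored.

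\textbf{Realizing this by SGD on a fixed network.} The network has two parts.
\begin{itemize}
\item \emph{Memory units.} There are $2b$ hidden units, each a gated indicator $\mathbbm{1}[x=z_j^{\beta}]$, implementable with a threshold or ReLU over the $d$ coordinates. Each indicator multiplies a trainable weight $w_{j\beta}$ initialized at $0$. Under SGD on a loss whose gradient with respect to $w_{j\beta}$ is nonzero only on that point, the weight moves to a constant as soon as the point is sampled. Taking $w\gets w+\eta\,\mathbbm{1}[x=z]$ with a bounded, saturating activation gives a stored bit $c_j=\mathrm{sat}(w_{j1})$. Crucially, no gradient flows to the memory from other inputs.
\item \emph{Universal evaluator.} A fixed, untrained sub-network takes $(x,c)$ and outputs $U(c,x)=h^*(x)$. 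It is obtained by compiling the universal circuit for size-$s$ circuits, which has size $\poly(s,d)$, into threshold and ReLU gates. For precision-limited networks we evaluate bit-by-bit arithmetic on the stored weights, which is again a poly-size circuit.
\end{itemize}
The gradient must not corrupt the evaluator. We ensure this either by a non-fully-connected topology in which the evaluator weights have zero gradient (for example through zero-derivative activations such as step or saturated regions), or, for the fully connected standard-activation version, by a special initialization placing those weights in flat regions.

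\textbf{Error bound.} With probability $1-\epsilon$ the output equals $h^*$ on all inputs, so the $\cD$-error is $\eta+\epsilon$, matching $L_{\cD,f}(\cH)+\epsilon$. The runtime is $m$ SGD steps on a $\poly(d,s)$ network. The indicator weights must not overlap with $\cD$-mass in a harmful way; since we only care about which points appear, any $\cD$-sample that coincides with some $z_j^\beta$ could spuriously set a bit. To fix this we use a dedicated extra input coordinate or a prefix pattern, or make the bit be $c_j=[w_{j1}>w_{j0}]$ with weights accumulated from $\cD'$-frequencies.

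\textbf{Main obstacle.} I expect the hardest step to be making the gradient dynamics provably clean: ensuring the memory weights converge to exactly the encoded bits while every evaluator weight stays fixed under noisy labels and the $\cD$-part of the data. I would first try the construction in which stored bits depend only on sample presence, with zero-gradient evaluator activations, and only then port it to the fully connected, special-initialization version.
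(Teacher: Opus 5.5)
Your encoding/decoding half is essentially the paper's. The target's description is planted in $\cD'$ and recovered from the sampled inputs alone, so label noise costs nothing. The paper does this with marker-bit atoms whose frequencies are $\tfrac{2}{5r}$ versus $\tfrac{3}{5r}$; you do it with presence of $z_j^{c_j}$.

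The gap is the step you yourself flag as the main obstacle: showing that SGD on one fixed network actually performs this decoding. The paper does not build this by hand. It observes that ``decode, then run the decoded circuit (output $0$ on failure)'' is a poly-time PAC algorithm for $\cD_f=\tfrac12\cD_\theta+\tfrac12\cD$. It then invokes the PAC-universality of mini-batch SGD (Theorem 1a of \citet{abbe2021power}, equivalently Theorem 17 of \citet{abbe2020poly}) to get a fixed poly-size non-standard network, initialization and step size that emulate this algorithm. Finally it transfers the guarantee from $\cD_f$ to $\cD$ via $\E_{\cD}[(N-f)^2]\le 2\,\E_{\cD_f}[(N-f)^2]-\eta$. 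Without such an emulation theorem your sketch has to stand on its own, and as written it does not.

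Concretely, there are three failures.

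(i) The memory write is not label-free. Under square loss the gradient on $w_{j\beta}$ at $z_j^\beta$ is $2(\hat y-y)\,\partial_{w_{j\beta}}\hat y$. While $\hat y(z_j^\beta)\in[0,1]$, its sign depends on the noisy label, and it vanishes whenever $\hat y=y$. With noiseless labels this happens at every visit if the evaluator's initial output at $z_j^\beta$ already equals $h^*(z_j^\beta)$. So your update $w\gets w+\eta\,\mathbbm{1}[x=z]$ is not what SGD produces. Forcing a nonzero write requires $\hat y(z_j^\beta)\notin[0,1]$ at the first visit. That offset then persists at the never-sampled candidate $z_j^{1-c_j}$, so the trained network does not equal $h^*$ there unless you add a zero-gradient correction (e.g.\ gated by the partner bit). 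This matters whenever $\cD$ charges those points.

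(ii) A parameter on an edge gated by $\mathbbm{1}[x=z_j^\beta]$ is invisible at every other test input. The evaluator must read the same parameter through an $x$-independent, zero-gradient path, which needs weight sharing or an explicit gadget you have not given.

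(iii) Your fixes for $\cD$-mass on the code points fail. $\cX=\{0,1\}^d$ has no spare coordinate, and an arbitrary $\cD$ can charge any prefix. With weight $\tfrac12$ on $\cD$ against $\tfrac{1}{2b}$ per code point, $\cD$ can also swamp the comparison $w_{j1}>w_{j0}$. The clean fix is to drop $\cD$ from $\cD'$, which \Cref{def:fpds} allows, since your evaluator is meant to be correct everywhere. The paper mixes in $\cD$ only because the emulation guarantee is stated on the training distribution, and its marker-bit atoms face the same objection.

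Finally, placing evaluator weights in ``flat regions'' cannot realize the fully connected ReLU variant, because a ReLU unit in its flat region outputs $0$ and so computes nothing. The statement only requires one non-standard network, so you can simply drop that variant.
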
 

For the proof of \Cref{thm:fpds} see \Cref{app:fpds-nonstandard}. Note that the sample complexity and the number of steps do not depend on the noise because we encode $f$ in $\cD_f$. Encoding $f$ in $\cD_f$ amounts to a form of ``cheating'', and the result rests on simulating an algorithms extracting the encoding from the distribution by gradient descnet on a specially constructed and highly non-standard network, following \citet{abbe2020poly,abbe2021power}.
If we allow such non-standard networks, the restriction to training by GD on a neural network becomes meaningless, and gradient descent on such a specially constructed network does something very different from gradient descent on more typical networks, allowing the network to extract information encoded in subtleties of the input distribution, completely ignoring the label, instead of trying to actually match the training labels.  This is not what we are after.  In order to try to understand whether the a target-function dependent $\cD'$

It shows that the notion of \fpds might not be restrictive enough, i.e. it allows us to take an impossible choice for $\cD_f$ that we couldn't be able to construct during training. It also makes the gradient descent variant considered here highly non-standard. Yet, \Cref{thm:fpds} suggests the broader promise of \fpds learning and naturally leads us to ask:

\begin{question}[Universality of \fpds Learnability]\label{question:crazy-conjecture}
Is every function over $d$ inputs representable by a neural network of size $s$ (i.e. any computable function) \fpds learnable by standard gradient descent on a neural network with standard\footnote{Here, standard initialization refers to sampling from a fixed, reasonable distribution (e.g., Gaussian) with parameters independent of the target function $f$.} activation and initialization with runtime $\poly(d,s)$ and with $\poly(d,s)$ samples? 
\end{question}
This shifts the study about tractable learnability with neural networks to asking under what assumptions on the training distribution $\cD'$ is the target PDS learnable.


\section{A New Learning Framework: DS-PAC}
\label{sec:DSPAC}

In this section, we prefer to not have the training distribution $\cD'$ depend on the target $f$, but rather allow $\cD'$ to depend only on the hypothesis class $\cH$ and the target distribution $\cD$. 
We introduce two variants of such positive distribution shift: \emph{deterministic}, where the training set is sampled directly from the training distribution, and \emph{randomized}, where we first draw a distribution from a meta-distribution (a distribution over distributions) and then sample the training set from it. 
\begin{definition}[Deterministic Distribution-Shift PAC (\ddspac)]\label{def:ddspac}
A hypothesis class $\cH$ is \emph{Deterministic Distribution Shift PAC} (\ddspac) learnable with a learning algorithm $A$ with
sample size $m(\varepsilon)$ and runtime $T(\varepsilon)$ if for every
distribution $\cD$ over $\cX$ and labeling rule $f:\cX \to \Delta(\cY)$, there exists a
training distribution $\cD'$ over $\cX$ (allowed to depend on $\cD$ and $\cH$ but not on $f$) such that for any $\varepsilon>0$, $A$ has runtime $T(\varepsilon)$ on $m(\varepsilon)$ i.i.d. samples from $\cD'$ and $\E_{S' \sim (\cD',f)^{m(\varepsilon)}} 
\Big[ L_{\cD,f}(A(S')) - L_{\cD,f}(\cH) \Big] \leq \varepsilon$. Formally, we say that a class $\cH$ is \ddspac learnable with label noise if this definition holds for a labeling rule that is induced by some $h^* \in \cH$ with a label flipping noise $\eta \in [0,1/2)$.
\end{definition}
%

%
\begin{definition}[Randomized Distribution Shift PAC (\rdspac)]\label{def:rdspac}
A hypothesis class $\cH$ is \emph{Randomized Distribution-Shift PAC} (\rdspac) learnable with a learning algorithm $A$ with
sample size $m(\varepsilon)$ and runtime $T(\varepsilon)$ if for every
distribution $\cD$ over $\cX$ and labeling rule $f:\cX \to \Delta(\cY)$, there exists a
meta-distribution $\mathcal{M}_{\cD}$ over distributions on $\cX$ (allowed to depend on $\cD$ and $\cH$ but not on $f$) such that for any $\varepsilon>0$, $A$ has runtime $T(\varepsilon)$ on $m(\varepsilon)$ i.i.d. samples from $\cD'\sim \cM_{\cD}$ and $\E_{\cD' \sim \mathcal{M}_{\cD}} 
\E_{S' \sim (\cD',f)^{m(\varepsilon)}} 
\Big[ L_{\cD,f}(A(S')) - L_{\cD,f}(\cH) \Big] \leq \varepsilon$. Formally, we say that a class $\cH$ is \rdspac learnable with label noise if this definition holds for a labeling rule that is induced by some $h^* \in \cH$ with a label flipping noise $\eta \in [0,1/2)$.
\end{definition}

%
In what follows, we study parities and juntas from the three perspectives on PDS learning: tractable, with analyzable GD, and empirical with standard GD.

\paragraph{Parities.}\label{sec:parities} 
For $S\subseteq \{1,\dots,d \}$, define the parity $\chi_S:\{\pm 1\}^{d} \to \{\pm 1 \}$ as 
    $\chi_S(x)=\prod_{i \in S} x_i$, where $x=(x_1,\ldots,x_d)\in \{\pm1\}^d$.
    The parity class over $d$ bits is defined as $ \parityd= \{\chi_S: S\subset\{1,\dots,d\} \}.$ We consider also $k$-sparse parities, $\parityd^k =\{ \chi_S: S\subset \{1,\dots,d\}, |S|=k\}$. In this part, we establish the following:
\textbf{(i)} noisy parities are \ddspac learnable with a tractable algorithm (\Cref{thrm:main:parity-efficiently-ddspac});
\textbf{(ii)} noisy parities are \ddspac learnable with a slightly stylized, analyzable variant of gradient descent (where the initialization depends on the sparsity of the parity; \Cref{thm:main:parity-GD-ddspac}). Moreover, we show that noisy parities are \fpds learnable using gradient descent on a feed-forward fully connected neural network for any test distribution, where for the uniform distribution such a result follows from \cite{daniely2020learning};
\textbf{(iii)} empirically, standard gradient descent on a feed-forward neural network successfully learns noisy parities within the \ddspac framework (\Cref{fig:main:parity-high-dim}).
This is in contrast to learning parity functions with noise in the \pac learning framework, which is believed to be computationally hard \citep{kearns1998efficient,blum2003noise,applebaum2010public,feldman2006new}, both in general and even in the $\log(d)$-sparse case. 


\begin{theorem}[Noisy Parities are Tractably \ddspac Learnable]\label{thrm:main:parity-efficiently-ddspac}
    There is an efficient \ddspac learning algorithm $A$ that uses $\cD' =  \frac{1}{2}{\rm Rad} (1-\frac{1}{d})^{\otimes d}+\frac{1}{2} \Unif(\{\pm 1 \}^d)$\footnote{A random variable has distribution $\mathrm{Rad}(p)$ if it takes the value $+1$ with probability $p$ and the value $-1$ with probability $1-p$.}, such that for any distribution $\cD$, the algorithm $A$ \ddspac learns the class of parity functions $\parityd$ with label noise $\eta<\frac{1}{2}$ using $m(\eps)=O\left(\frac{d^2 \log \frac{2d}{\eps}}{(1-2\eta)^2}\right)$ samples from $\cD'$ with runtime $T(\eps)=O\left(\frac{d^3 \log \frac{2d}{\eps}}{(1-2\eta)^2}\right)$. 
\end{theorem}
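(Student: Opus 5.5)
The plan is to \emph{identify the support $S$ exactly} from samples of $\cD'$. Once $\chi_S$ is recovered exactly, its error under \emph{any} test distribution $\cD$ is precisely $\eta = L_{\cD,f}(\parityd)$, so the choice of $\cD$ plays no role. Write $\mu := \E[\mathrm{Rad}(1-\tfrac1d)] = 1-\tfrac2d$, and let $\cB$ denote the biased component $\mathrm{Rad}(1-\tfrac1d)^{\otimes d}$. The key facts are:
\begin{itemize}
\item for every $T\subseteq[d]$, $\E_{\cB}[\chi_T] = \mu^{|T|} \ge (1-\tfrac2d)^d \ge c$ for an absolute constant $c>0$ (for $d\ge 3$);
\item $\E_{\Unif}[\chi_T] = \mathbb{1}[T=\emptyset]$;
\item with noisy labels, $\E[y\, g(x)] = (1-2\eta)\,\E[\chi_S(x) g(x)]$ for any $g$ depending only on $x$.
\end{itemize}

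\textbf{Step 1 (coordinate correlations).} Compute $c_i := \E_{\cD'}[y x_i] = (1-2\eta)\,\E_{\cD'}[\chi_{S\triangle\{i\}}]$. The uniform half contributes only when $S=\{i\}$, and then only to a coordinate in $S$, which can only help. The biased half gives $\tfrac12\mu^{|S|-1}$ for $i\in S$ and $\tfrac12\mu^{|S|+1}$ for $i\notin S$. Hence every coordinate in $S$ beats every coordinate outside $S$ by at least
\[
\tfrac12(1-2\eta)\mu^{|S|-1}(1-\mu^2) \;\ge\; c'(1-2\eta)/d .
\]
Estimating each $c_i$ empirically to accuracy $c'(1-2\eta)/(3d)$ simultaneously for all $i$ requires, by Hoeffding and a union bound over $d$ coordinates with failure probability $\delta$, $O\!\left(\frac{d^2\log(2d/\delta)}{(1-2\eta)^2}\right)$ samples. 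On this event, $S$ is exactly the set of the top-$|S|$ coordinates when ranked by empirical correlation. The cost is $O(d)$ per sample, giving the stated $O(d^3\log(2d/\eps)/(1-2\eta)^2)$ runtime.

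\textbf{Step 2 (choosing $|S|$ without knowing it or $\eta$).} Since neither $|S|$ nor $\eta$ is known, I form the $d+1$ candidates $S_k$, $k=0,\dots,d$, where $S_k$ is the top-$k$ set. Note that $S$ equals some $S_k$, and that this also covers $S=\emptyset$ and $S=[d]$. I then select among the candidates by empirical error on a fresh batch from $\cD'$. This works because for $T\neq S$,
\[
\Pr_{\cD'}[\chi_T\ne\chi_S] = \tfrac12\bigl(1-\E_{\cD'}[\chi_{S\triangle T}]\bigr) = \tfrac12\bigl(1-\tfrac12\mu^{|S\triangle T|}\bigr) \ge \tfrac14 .
\]
So under $\cD'$ the noisy error of a wrong candidate is at least $\eta+\tfrac14(1-2\eta)$, while the true one has error exactly $\eta$. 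Hoeffding over $d+1$ candidates needs only $O\!\left(\log(d/\delta)/(1-2\eta)^2\right)$ extra samples, which is dominated by Step 1.

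\textbf{Step 3 (conclusion).} With probability at least $1-\delta$ the output is exactly $\chi_S$, whose $\cD$-error is $\eta$. Otherwise the error is at most $1$. Thus $\E[L_{\cD,f}(A(S'))]\le \eta+\delta$, and setting $\delta=\eps$ gives the claim.

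The main obstacle is Step 1's separation. One must check that the gap $\mu^{|S|-1}(1-\mu^2)$ stays $\Omega(1/d)$ uniformly in $|S|$, which is exactly why the bias is set at $1-\tfrac1d$ so that $\mu^{d}$ remains constant. One must also check that the uniform mixture component never hurts the ranking. The lack of knowledge of $|S|$ and $\eta$ is what forces the validation step rather than a fixed threshold.
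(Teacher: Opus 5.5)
Your proposal is correct, and its core matches the paper's proof. Both estimate the coordinate correlations $\E[yx_i]$ under $\cD'$. Both use the gap $\tfrac12(1-2\eta)\mu^{|S|-1}(1-\mu^2)=\Theta((1-2\eta)/d)$ between $i\in S$ and $i\notin S$, and apply Hoeffding with a union bound over the $d$ coordinates. Both conclude that exact recovery of $\chi_S$ gives error $\eta$ under every test distribution $\cD$.

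You differ in how the support is read off. The paper thresholds at the midpoint, $\hat S=\{i:\hat T_i>\tfrac12(\min_j\hat T_j+\max_j\hat T_j)\}$. This needs one batch and no knowledge of $|S|$ or $\eta$, but it silently assumes that both $S$ and $[d]\setminus S$ are nonempty. For $S=\emptyset$ or $S=[d]$, both of which belong to $\parityd$, all true correlations coincide: they equal $\tfrac12(1-2\eta)\mu$ and $\tfrac12(1-2\eta)\mu^{d-1}$ respectively. The midpoint rule then returns an essentially arbitrary subset.

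Your nested top-$k$ candidates followed by hold-out validation handle these cases. The validation rests on a fact you state correctly: $\E_{\cD'}[\chi_U]=\tfrac12\mu^{|U|}\le\tfrac12$ for $U\neq\emptyset$, so every wrong parity disagrees with $\chi_S$ on at least a $\tfrac14$ fraction of $\cD'$. The extra cost is only $O(\log(d/\eps)/(1-2\eta)^2)$ samples.

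So your argument proves the statement for the full class $\parityd$ within the claimed sample and runtime bounds. The paper's argument, as written, covers only $\emptyset\neq S\neq[d]$, although it is slightly simpler.
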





 Next, we show that parities are \ddspac learnable with a stylized analyzable stochastic gradient descent (SGD) on a 2-layer network, namely with layerwise training, $\ell_1$ regularization, and initializations that depend on the sparsity of the parity. The layerwise training analysis is standard in the theory of neural network learning literature~\citep{abbe2023sgd,barak2022hidden,bietti2022learning,dandi2023two}, while the $\ell_1$ regularization and the dependence of initialization on sparsity are used to make the approach more analyzable. Our experiments use standard joint training of both layers and with standard initialization and confirm the theoretical results. Note that, in contrast to \fpds, here the training distribution may depend only on the class of parities (and the sparsity) but not on the target function itself. For the proofs of \Cref{thrm:main:parity-efficiently-ddspac} and \Cref{thm:main:parity-GD-ddspac}, see \Cref{app:parity}.



\begin{algorithm}[Layerwise SGD with $\ell_1$-regularization]\label{alg:main:l1-gd}
We perform stochastic gradient descent (SGD) on a two-layer network
\[
f_{\NN} (\bx ;\btheta) =  \sum_{j \in[N] } a_j \ReLu ( \< \bw_j,\bx\> + b_j),
\]
where $\btheta = (\ba,\bW,\bb) \in \R^{N(d+2)}$ are the parameters of the network. We do layerwise training, where we first train the bottom layer while holding the top layer fixed and then train the top layer while holding the bottom layer fixed, with square loss and with $\ell_1$ regularization on the first layer with regularization parameter $\lambda_1$ with stepsize $s_1$, and $\ell_2$ regularization on second layer weights with stepsize $s_2$. We initialize hyperparameters as $N=(4d+6)$, $a^0_j \in \{\pm 1\}$, $\bw^0_j = \bzero$, and $b^0_j \in \{-d-1,-d, \ldots,d,d+1\}$, so that we have one for each combination of $(a^0,b^0)$\footnote{We do this for simplicity, but alternatively, we can take $N$ polynomial in $d$ and sufficiently large, and initialize hyper parameters as $a_j^0 \sim_{i.i.d} \Unif(\{\pm 1\})$ and $b^0_j \sim_{i.i.d}\Unif(\{-d-1,-d, \ldots,d,d+1\})$.}.
\end{algorithm}

\begin{theorem}[Noisy Parities are \ddspac Learnable with Stylized Analyzable GD]\label{thm:main:parity-GD-ddspac}
For any fixed sparsity $k\le d$, say $k=d/2$, of a parity function, the class $\parityd^{k}$ with label noise $\eta<\frac{1}{2}$ is \ddspac learnable with \Cref{alg:main:l1-gd} with  $\lambda$ and $s$ that depend on $k$. For any distribution $\cD$, if we train with $\cD'=\frac{1}{2}{\rm Rad} (1-\frac{1}{d})^{\otimes d}+\frac{1}{2} \cD_2 $, where $\cD_2$ samples $l$ uniformly over $\{d,d-2,\dots,-d\}$ then samples $\textbf{x}$ uniformly conditioned on $\textbf{1}^T \textbf{x}=l$, 
  using $m$ samples and $T$ steps of \Cref{alg:main:l1-gd} with $m,T =O\left( \frac{d^9 \log(\frac{d}{\eps^2(1-2\eta)^2})}{\eps^3(1-2\eta)^6}\right)$, for any $k$ sparse noisy parity target $f$ we have $\E_{S'\sim (\cD',f)^m}[L_{ \cD,f}(f_{NN}(x;\theta^{(T)}))]<\eps+\eta$.
\end{theorem}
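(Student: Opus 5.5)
Because the learned network will turn out to be a function of $\mathbf{1}^T\bx$ restricted to the true support $S$, the argument has two phases matching the two stages of \Cref{alg:main:l1-gd}. In the first phase, $\ell_1$-regularized SGD on the bottom layer recovers $S$ exactly: at the end of it every hidden unit satisfies $\bw_j \approx c\,\mathbf{1}_S$ for a common scale $c$. In the second phase, with the first layer frozen, the top layer only has to fit a fixed function of $\langle \mathbf{1}_S,\bx\rangle$, which is a convex ridge regression. We then argue that this fit transfers to an arbitrary test distribution $\cD$.

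\emph{Phase one: support recovery.} Start from $\bw^0_j=\bzero$, so the unit computes $\ReLu(b_j)$ and its gradient in $\bw_j$ is $-a_j\,\E[(y-f_{\NN})\mathbb{1}\{b_j>0\}\bx]$. With the biased half of $\cD'$, $\E[y x_i]$ equals $(1-2\eta)\cdot\tfrac12\mu^{k-1}$ for $i\in S$ and $(1-2\eta)\cdot\tfrac12\mu^{k+1}$ for $i\notin S$, where $\mu=1-2/d$. Since $\mu^k\ge e^{-2k/d}\cdot(1-o(1))$ is a constant for $k\le d$, the gap between these two correlations is $\Theta((1-2\eta)/d)$.

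The uniform-on-Hamming-level half $\cD_2$ is symmetric under coordinate permutations, so it contributes the same value to every coordinate and cannot swap the ordering. I will pick $\lambda_1$ strictly between the two correlation levels. Under the soft-threshold (proximal) $\ell_1$ dynamics, off-support coordinates then stay at $0$ and on-support coordinates all grow at the same rate. The step size $s_1$ is tuned in terms of $k$ so that after the bottom-layer steps $\bw_j=c\,\mathbf{1}_S$, with $c$ a known constant, for example $c=1$.

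Concentration via Hoeffding/Bernstein on the stochastic gradients needs accuracy $O((1-2\eta)/d)$ per coordinate, uniformly over all $d$ coordinates and $N$ units and over all steps. This produces the polynomial factors in $d$ and $(1-2\eta)^{-1}$ in $m$ and $T$. The main obstacle is here: during these steps $f_{\NN}$ itself moves, so the residual $y-f_{\NN}$ is not just $y$. I would handle this with a small step size and few steps, so that the drift of $f_{\NN}$ is $O(1/d)$ and cannot push any coordinate across the threshold. Symmetry among coordinates in $S$ keeps their weights equal throughout.

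\emph{Phase two: fitting the top layer.} Once $\bw_j=\mathbf{1}_S$, each unit is $\ReLu(t+b_j)$ with $t=\langle\mathbf{1}_S,\bx\rangle\in\{-k,\dots,k\}$. The biases $b_j$ range over every integer in $[-d-1,d+1]$ and both signs of $a_j$ are present. Hence the features span all piecewise-linear functions on the integer grid, and in particular the exact interpolant $g(t)=(-1)^{(k-t)/2}=\chi_S(\bx)$. Top-layer training is then convex ridge regression in $\ba$ with a bounded-norm solution ($\|\ba\|=\poly(d)$). Standard SGD guarantees give excess square loss $\epsilon'$ under $\cD'$ in $\poly(d)/\epsilon'^2$ steps.

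\emph{Transfer to arbitrary $\cD$.} The component $\cD_2$ puts mass at least $\Omega(1/d)$ on every level $\mathbf{1}^T\bx=l$. Conditional on that level, $t$ takes every attainable value with probability at least $1/\binom{d}{k}$, which is too small. So I would instead argue pointwise per value of $t$: the marginal of $t$ under $\cD_2$ charges each attainable $t$ with probability at least $1/\poly(d)$, and I would verify this via a hypergeometric bound, choosing the level $l$ matched to $t$. A square-loss excess of $\epsilon'$ under $\cD'$ then gives $|f_{\NN}-g|\le\sqrt{\poly(d)\,\epsilon'}$ at every $t$, hence $\mathrm{sign}(f_{\NN})=\chi_S$ everywhere once $\epsilon'$ is small. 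Because this holds for every input, the test error under any $\cD$ is $\eta$ plus the failure probability. Setting $\epsilon'\sim\epsilon/\poly(d)$ and bounding the failure probability by $\epsilon$ gives the stated $m,T$, with the in-expectation guarantee obtained by absorbing the failure events into $\epsilon$.
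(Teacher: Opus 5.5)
Your overall architecture matches the paper's. A soft-thresholded correlation step isolates $S$. The top layer is then fit by convex regression against a ReLU certificate for $g_S(t)$, with $t=\langle\mathbf{1}_S,\bx\rangle$. Finally, an $L^2(\cD')$ bound is converted to a pointwise one through the $\cD_2$-marginal of $t$, so signs are correct on every input and any $\cD$ is covered.

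The gap is in phase one, at the step you flag as the main obstacle. Your fix (small steps, drift of $f_{\NN}$ kept at $O(1/d)$) is incompatible with your requirement that the weights end at scale $c=1$. Phase two genuinely needs $c$ of that order. With integer biases, once $ck<1$ every unit with $b_j\ge 1$ computes an affine function of $t$ on $\{-k,\dots,k\}$. The features then collapse to $\mathrm{span}\{1,t,|t|\}$ and cannot represent $g_S$. But as $c$ grows, $f_{\NN}\approx(2d+3)c\,t$, and coordinates are positively correlated under $\cD'$ ($\E_{\cD'}[x_ix_{i'}]\approx 2/3$). So $\E[f_{\NN}x_i]$ cancels the $\Theta((1-2\eta)/d)$ in-support surplus already at $c=O((1-2\eta)/(d^2k))$, and small proximal steps stall far below $c=1$.

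The missing observation is that there is no drift to control. Since every bias value carries both $a_j=+1$ and $a_j=-1$, $f_{\NN}(\cdot;\btheta^0)\equiv 0$. The paper therefore takes a single proximal step from initialization, where the residual is exactly $y$. It uses a large step $s_1=\Theta(d/(1-2\eta))$, chosen so that the surplus maps to weight exactly $1$.

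There are three further points to correct.

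(i) Your gradient carries the factor $a_j\ind\{b_j>0\}$, so the step gives $\bar\bw_j=a_j^0\mathbf{1}_S\ind[b_j\ge 0]$, not $c\,\mathbf{1}_S$ for every unit. Units with $b_j<0$ stay dead, and the usable features are $\ReLu(\pm t+b)$ with $b\in\{0,\dots,d+1\}$. Your spanning argument (kinks from $\ReLu(t+b)$ over all $b\in[-d-1,d+1]$) must be redone for this family. It still goes through, and the certificate \eqref{eq:certificate} is exactly such a combination.

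(ii) $\cD_2$ is not neutral across coordinates. The product $\chi_S x_i$ is a parity of degree $k\mp 1$, and $\cD_2$ is the mixture over $q\sim\Unif[0,1]$ of i.i.d.\ ${\rm Rad}(q)$ coordinates. Hence $\E_{\cD_2}[\chi_T]=1/(|T|+1)$ for even $|T|$ and $0$ for odd $|T|$. For odd $k$ this adds $\frac{1}{k}$ versus $\frac{1}{k+2}$, which only widens the gap but must be built into $\lambda_1$ and $s_1$. The same representation settles your hypergeometric check: under $\cD_2$ the number of $+1$'s in $S$ is exactly uniform on $\{0,\dots,k\}$.

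(iii) Symmetry keeps in-support weights equal only for population gradients. With $m$ samples, soft-thresholding does give exact zeros off $S$, but the in-support entries fluctuate by $s_1\cdot O(\sqrt{\log(dN/\delta)/m})$. So $f_{\NN}$ is not exactly a function of $t$. Before your pointwise-in-$t$ step you need a uniform perturbation bound such as the paper's $|f_{\NN}(\hat\ba,\hat\bW)-f_{\NN}(\hat\ba,\bar\bW)|\le dN\|\hat\ba\|_\infty\sup_j\|\hat\bw_j-\bar\bw_j\|_\infty$. That step should also compare $f_{\NN}$ with the regression function $(1-2\eta)\chi_S$, not with $\chi_S$.
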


We empirically verify \Cref{thm:main:parity-GD-ddspac} using standard gradient descent on a two-layer fully connected ReLU network with standard initialization and training (\Cref{fig:main:parity-high-dim}). For $\mu \in [-1,1]$, let $\cD_{\mu}$ be the distribution on $\{\pm 1\}^d$ with i.i.d. coordinates from ${\rm Rad}(\tfrac{\mu+1}{2})$, so that $\cD_0=\Unif \{\pm 1 \}^d$. We study a degree-$k=25$ parity over $d=50$ bits with label noise, targeting the uniform distribution $\cD=\cD_0$. 
In the PDS setting, training samples come from $\cD'=\tfrac12\cD_{0.96}+\tfrac12\cD_0$; in the standard setting from $\cD$. 
Figure~\ref{fig:main:parity-high-dim} (left) shows the evolution of the test error on $\cD$ during training, where PDS yields significantly faster learning. The right panel compares training on $\cD'=\cD_0$ (left), $\cD'=\tfrac12\cD_{0.96}+\tfrac12\cD_0$ (center), and $\cD'=\cD_{0.96}$,
showing that only the mixture distribution (center), supports PDS generalization to the target $\cD$.

\begin{figure}
    \centering
   \includegraphics[width=0.49\linewidth]{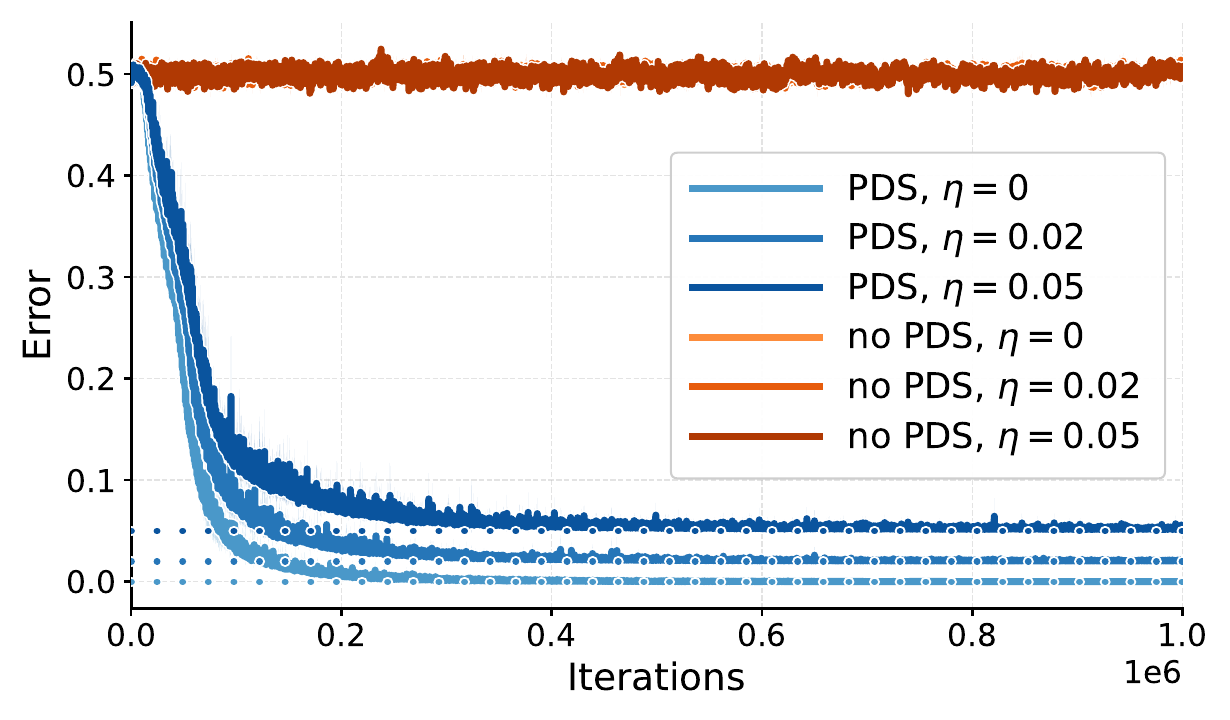}
   \includegraphics[width=0.49\linewidth]{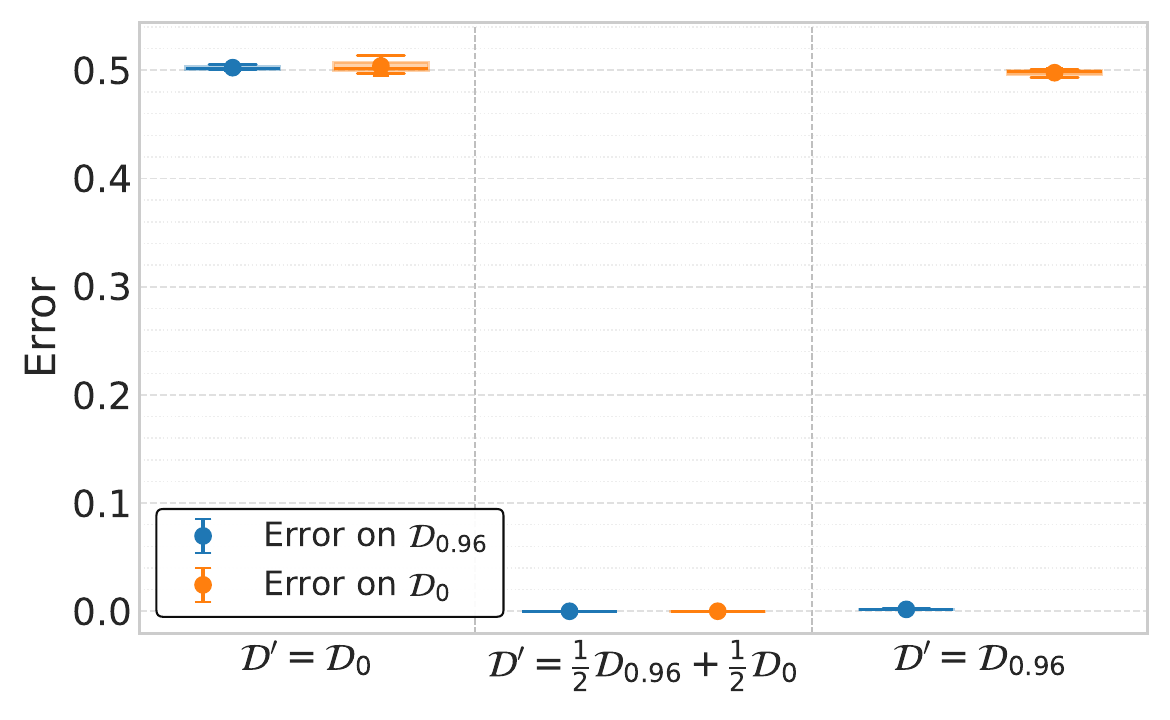}
    \caption{
    \textbf{Noisy Parities.} We study learning a degree-$25$ parity over $50$ bits with label noise $\eta \in \{0,0.02,0.05\}$, using a two-layer $\ReLU$ network with $1024$ hidden units trained by SGD (batch size $64$, fresh samples, square loss, learning rate $0.01$, 
both layers trained jointly). In the PDS setting, training samples are drawn from 
$\cD'=\tfrac12\cD_{0.96}+\tfrac12\cD_0$, where 
$\cD_\mu={\rm Rad}((\mu+1)/2)^{\otimes d}$ for $\mu\in[-1,1]$; 
in the standard setting, from $\cD'=\cD_0$. 
The left panel shows test error on $\cD_0$ during training, where PDS yields 
markedly more efficient learning; dotted lines indicate Bayes error ($\eta$). 
The right panel compares $\cD'=\cD_0$, 
$\cD'=\tfrac12\cD_{0.96}+\tfrac12\cD_0$, and $\cD'=\cD_{0.96}$, 
plotting test error on $\cD_{0.96}$ (blue) and $\cD_0$ (orange) after $10^6$ steps. 
Only the mixture distribution achieves PDS generalization to the target $\cD_0$.}
    \label{fig:main:parity-high-dim}
\end{figure}
%

\textit{\fpds learnability of parities.} \fpds learnability of parities with analyzable GD with respect to the uniform test distribution follows form the existing result on learning parities with GD of \citet{daniely2020learning}. Namely, they show that an analyzable version of gradient descent on $2$-layer network learns parities with respect to the distribution $\cD_{1} = \frac{1}{2} \text{unif}(\{\pm 1\}^d)+\frac{1}{2} \cD_{S}$, where $\cD_{S}$ is uniform over inputs outside the support of the parity, $S$, and on $S$ the inputs are all the same with probability $1/2$. This implies \fpds learnability with respect to the uniform distribution (with training data still coming from $\cD_1$, as needed in the definition of \fpds). We further show that parities are \emph{\fpds learnable (i.e. with respect to any test distribution $\cD$)} in \cref{app:parity} in \Cref{thm:app:parity-GD-fpds}.  

\paragraph{Juntas.} For $k\le d$, the $k$-junta class is $\juntakd= \{f:\{0,1\}^d \to \{0,1\}: \exists S\subseteq[d], |S|\le k, \exists g:\{0,1\}^S \to\{0,1\} \text{ s.t.}\ f(x)=g(x_S)\}$, where $x_S$ denotes the restriction of $x$ to the coordinates in set $S$.
%
In this part, we establish the following: \textbf{(i)} noisy juntas are \ddspac learnable by a Correlational Statistical Query (CSQ) algorithm (\Cref{thm:main:juntas-efficiently-ddspac}), \textbf{(ii)} noisy juntas are \rdspac learnable 
with a stylized layerwise GD on a two-layer network with covariance loss, see Def.~\ref{def:covariance_loss} (\Cref{thm:main:juntats-rdspac-GD}), and \textbf{(iii)} empirically, noisy juntas are \rdspac learnable with standard stochastic gradient descent with square loss on a feed-forward neural network (\Cref{fig:junta_main}). 
On the other hand, $\log(d)$-juntas are believed to be hard to learn in the PAC model, even in the realizable case \citep{applebaum2010public,chandrasekaran2025learning}. 

The algorithm in \Cref{thm:main:juntas-efficiently-ddspac} is a Correlational Statistical Queries (CSQ) algorithm~(\cite{bendavid1995learning,bshouty2002using}), which are algorithms that access the data via queries $\phi:\mathbb{R}^d \to [-1,1]$ and return $\E_{x,y}[\phi(x)y]$ up to some error tolerance $\tau$. 

%
\begin{theorem}[Noisy Juntas are \ddspac Learnable]\label{thm:main:juntas-efficiently-ddspac} 
    There exists an input distribution $\cD'$ and an algorithm $\cA$  
    such that for every target distribution $\cD$ and any $k$, the class of $k$-sparse juntas $\juntakd$, with label noise $\eta<1/2$, is \ddspac learnable using $\cA$ with $n=O(dk+2^k)$ queries on $\cD'$ of error tolerance $ \tau = O((1-2\eta)2^{-k})$. 
\end{theorem}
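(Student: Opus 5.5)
The plan is to use a fixed training distribution $\cD'$ that mixes the uniform distribution with a family of coordinate-wise biased product distributions, in the spirit of the parity construction of \Cref{thrm:main:parity-efficiently-ddspac}. The CSQ algorithm then works in two phases. First it identifies the relevant set $S$ of the target junta, using $O(dk)$ correlational queries. Then it reads off the truth table of $g$ on $\{0,1\}^S$ using $2^k$ queries. Throughout I work with $\pm1$ labels $y'=2y-1$, so that $\E[y'\mid x]=(1-2\eta)(2g(x_S)-1)$. Every correlation of the noisy target is therefore the noiseless correlation multiplied by $(1-2\eta)$, which is where the factor $(1-2\eta)$ in the tolerance comes from.

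The key reduction is that \emph{exact} recovery of $(S,g)$ suffices for every target $\cD$. Take $S$ to be the minimal set of relevant variables. The output hypothesis is then exactly $h^*$, and its error is $\eta=L_{\cD,f}(\juntakd)$ whatever $\cD$ is. This is what makes $\cD'$ independent of both $f$ and $\cD$.

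\emph{Phase 2, the truth table.} Suppose $S$ is known with $|S|\le k$. For each $a\in\{0,1\}^S$, query $\phi_a(x)=\mathbb{1}\{x_S=a\}$ against the uniform component. The answer is $\Pr[x_S=a]\,(1-2\eta)(2g(a)-1)=2^{-|S|}(1-2\eta)(\pm1)$. The mixture weight only changes this by a constant factor, and the CSQ query may be taken to be $\phi_a$ composed with nothing else, since the other components contribute a known sign-free offset that can be bounded. A tolerance $\tau=c(1-2\eta)2^{-k}$ therefore determines the sign of each answer, and hence $g(a)$. This uses $2^{k}$ queries.

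\emph{Phase 1, the support.} This is the main obstacle. For each coordinate $i$ I want a constant number of queries per "level", $k$ in total, that certify whether $i$ is relevant, with a signal of order $(1-2\eta)2^{-k}$. My first attempt is the biased-component idea. Under a product component with bias vector $p$, the query $\phi(x)=x_i-p_i$, suitably normalized, returns $(1-2\eta)\,p_i(1-p_i)\,\E_p[\partial_i g]$. Here $\E_p[\partial_i g]$ is a multilinear polynomial in $(p_j)_{j\in S\setminus\{i\}}$ of degree $<k$, and it is nonzero when $i$ is relevant. To prevent symmetric cancellations such as $x_1-x_2$ (which vanish if all biases are equal), I would include in $\cD'$ roughly $k$ components with distinct bias levels assigned to coordinates through a fixed hashing/rotation scheme. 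I would then argue that a nonzero multilinear polynomial of degree $<k$ with coefficients in $2^{-O(k)}\mathbb{Z}$ cannot be small at all $k$ chosen evaluation points, by Lagrange interpolation / Schwartz–Zippel-type bounds. This gives dk queries in total.

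If the anti-concentration bound degrades beyond $2^{-k}$, my fallback is an adaptive greedy search using the uniform component. Having found $T\subseteq S$, the algorithm finds a new relevant coordinate via restricted queries $\mathbb{1}\{x_T=a\}\cdot(\text{influence test for }i)$, with $T$ grown one variable at a time. Each new variable costs $O(d)$ queries, so at most $k$ rounds cost $O(dk)$. The delicate point here is controlling the number of assignments $a$ that must be tried, and I expect the tolerance bookkeeping in this step to be where most of the work lies.
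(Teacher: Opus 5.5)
\textbf{The gap is in Phase 1, and it is structural.} You rely on an anti-concentration claim that is false for any $O(k)$ product components, however the biases are distributed over coordinates.

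For relevant $i$ and $R=S\setminus\{i\}$, the quantity $\E_p[\partial_i g]$ is the multilinear extension of a $\{-1,0,1\}$-valued function on $\{0,1\}^R$. It lives in a $2^{k-1}$-dimensional space. Lagrange interpolation only controls univariate polynomials of degree $<k$, and Schwartz--Zippel concerns random points, not a fixed set of $k$.

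Concretely, fix any bias vectors $p^{(1)},\dots,p^{(M)}$. The map $u\mapsto(\E_{p^{(r)}}[u])_{r\le M}$ sends the $2^{2^{k-1}}$ Boolean functions $u$ on $\{0,1\}^R$ into $[0,1]^M$. By pigeonhole, two distinct $u\neq u'$ have images within $2^{-\Omega(2^k/M)}$ in every coordinate. Let $g(x)=u(x_R)$ if $x_i=1$ and $g(x)=u'(x_R)$ if $x_i=0$. This $g$ is a $k$-junta that depends on $x_i$, with $\partial_i g=u-u'$. Its first-order signal at $i$ is therefore at most $(1-2\eta)2^{-\Omega(2^k/M)}$ under every component.

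For $M=O(k)$ this is far below $\tau=\Theta((1-2\eta)2^{-k})$ once $k$ exceeds a constant. Within tolerance, $i$ is then indistinguishable from an irrelevant coordinate, so Phase 1 must drop it. Phase 2 then outputs a function not depending on $x_i$, and a point mass $\cD$ on a suitable point gives excess error $1-2\eta$. Rational biases with small denominators do not help, since the collision $u,u'$ then becomes exact.

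The fallback does not repair this either. Influence is quadratic in $f$, so it is not a correlational query. Under the uniform component, every restricted first-order correlation of a $k$-parity vanishes as long as two relevant variables remain unrestricted. So the greedy search cannot find even its first variable; this is exactly the CSQ hardness of parities.

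\textbf{Comparison with the paper.} The paper avoids the multivariate problem by giving every coordinate the same bias $\mu_r$, at $k$ Chebyshev nodes $\mu_r=\frac12\cos\frac{(2r-1)\pi}{2k}$. It isolates each component with density-ratio queries $\phi_r=\frac1k p_r/p$, which you would also need in order to read off a single mixture component. It then uses the debiased statistic $v_{i,r}-\mu_r m_r=(1-2\eta)(1-\mu_r^2)P_i(\mu_r)$. Here $P_i(\mu)=\sum_{T\ni i}\hat f(T)\mu^{|T|-1}$ is univariate of degree $<k$, so $k$ nodes determine it.

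Your worry about symmetric cancellation hits exactly this step. Take the multiplexer $f(x)=\frac12(x_2+x_3)+\frac12x_1(x_2-x_3)$ in the paper's $\pm1$ notation, a Boolean instance of your $x_1-x_2$ example. Then $\hat f(\{1,2\})=-\hat f(\{1,3\})=\frac12$, so $P_1\equiv0$ although $x_1$ is relevant, and the paper's assertion that $P_i\neq0$ iff $i\in\supp(f)$ fails. So the paper's argument does not supply the missing idea either. What is needed is a relevance test that is not a first-order correlation under $O(k)$ product components.

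The route in the remark after the statement combines a deterministic \namq junta learner with the deterministic case of \Cref{thm:namq-rdspac}. It does give \ddspac learnability of noisy juntas, but with sample-based guarantees rather than the stated CSQ parameters.

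\textbf{Phase 2 is fine and matches the paper.} The answer is exactly $(1-2\eta)(2g(a)-1)\Pr_{\cD'}[x_S=a]$, and every mixture component contributes with the same sign, so there is no offset to bound. Your uniform component keeps $\Pr_{\cD'}[x_S=a]=\Omega(2^{-k})$, whereas the paper's components with $|\mu_r|\le\frac12$ only guarantee about $4^{-k}$.
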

We remark that a similar result can also be derived from \citet{bshouty2016exact} together with the connections between PDS and learning with membership queries established in \Cref{sec:PDS-MQ} (see \Cref{app:hypotheses} for details).
Our proof, however, shows \ddspac learnability of noisy juntas directly by a different method, namely via correlation statistical queries, with an explicit construction of the positively shifted distribution $\cD'$.

Now, we show that juntas are \rdspac learnable with stylized analyzable GD, up to a multiplicative constant in the accuracy.


\begin{algorithm}[Layerwise SGD with Covariance Loss]\label{alg:main:GD-correlation-loss}
    We perform gradient descent (SGD) on a two-layer network
\[
f_{\NN} (\bx ;\btheta) =  \sum_{j \in[N] } a_j \ReLu ( \< \bw_j,\bx\> + b_j),
\]
where $\btheta = (\ba,\bW,\bb) \in \R^{N(d+2)}$ are the parameters of the network. We do layerwise training, where we first train the bottom layer while holding the top layer fixed and then train the top layer while holding the bottom layer fixed, with covariance loss. The covariance loss is defined for a sample $\{(x_i,y_i)\}_{i\in[m]}$ and a predictor $\hat{f}(x)$ as $L_{\text{cov}}(x,y,\hat{f}) = (1-cy \bar{y})(1-y\hat{f}(x))_{+}$, where $\bar{y} = \frac{1}{m} \sum_{i \in [m]}y_i$ and $c$ a positive constant such that $c\cdot \bar{y}<1$. We initialize the first layer weights $w_{ij}^0 =0$ for all $i \in [N],j\in [d]$, and the second layer weights $a_i^0 =\kappa $, for $i \in [N/2]$ and $a_i^0 =-\kappa $, for $i \in [N/2]$, where $\kappa>0$ is a constant. 
The biases are initialized to $b_i^0 =\kappa$, for all $i\in [N]$, and after the first step of training, $ b_i^1 \overset{i.i.d.}{\sim} \Unif [-L,L]$, where $L \geq \kappa $.
\end{algorithm}

In the following, for $\bmu \in [-1,1]^{\otimes d}$, let $\cD_{\bmu}$ be the distribution on $\{ \pm 1 \}^d$ with coordinates $x_i \sim {\rm Rad}((\mu_i+1)/2)$, $i \in [d]$ (thus $\cD_{\boldsymbol{0}}=\Unif\{ \pm 1 \}^d$).

\begin{theorem}[Noisy Juntas are \rdspac Learnable with Stylized Analyzable GD]\label{thm:main:juntats-rdspac-GD}
   For any fixed $k \leq d$, the class of $k$-sparse juntas $\juntakd$, with label noise $\eta<1/2$, is \rdspac learnable with respect to the uniform distribution $\cD$ using \Cref{alg:main:GD-correlation-loss}. That is, there exists a meta distribution $\cM_{\cD'}$ defined as follows: a vector $\bmu \sim \Unif [-1,1]^{\otimes d}$ is first sampled, and then $\cD'=\frac 12 \cD_{\bmu} + \frac 12 \cD_{\boldsymbol{0}}$. For any $\epsilon>0$ and noise level $\eta<1/2$, after sampling $\cD'$ from $\cM_{\cD'}$,
    \Cref{alg:main:GD-correlation-loss} on a two-layer network of width $N=\tilde{\Omega}(\epsilon^{-1}(1-2\eta)^{-1})$, with $m=\tilde{\Omega}(d\log(1/\epsilon)\eps^{-2}(1-2\eta)^{-2})$\footnote{Here, for $c\in \mathbb{R}$, $\tilde \Omega(d^c) = \Omega(d^c \poly \log(d))$.} samples from $\cD'$ and $T=\tilde{\Omega}(\eps^{-2}(1-2\eta)^{-2})$ steps learns any $k$ noisy junta target $f$ with error $\E_{\cM_\cD}\E_{S'\sim (\cD',f)^m}[L_{ (\cD,f)}(f_{NN}(x;\theta^{(T)}))]<C_k(\eta+\epsilon^{c_k})$, for constants $c_k,C_k>0$ that depend only on $k$.
\end{theorem}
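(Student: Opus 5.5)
The proof runs in two phases, mirroring the layerwise structure of \Cref{alg:main:GD-correlation-loss}: the first step of bottom-layer training recovers, inside the hidden weights, the degree-one Fourier coefficients of the target under the biased component $\cD_{\bmu}$. The top-layer phase is then a convex problem over random-bias ReLU features.

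\textbf{Step 1: the first gradient step.} At initialization $\bw_j=\bzero$ and $b_j=\kappa$, so $f_{\NN}\equiv \kappa\sum_j a_j=0$ and the hinge part $(1-y\hat f)_+$ is in its linear regime. The gradient of the covariance loss with respect to $w_{ji}$ is therefore, up to the sign $a_j$, a scaled empirical estimate of
\[
\E_{\cD'}\big[(y-c\bar y)\,x_i\big]\approx \tfrac12\,\widehat{\mathrm{Cov}}_{\cD_{\bmu}}(y,x_i),
\]
since the uniform half of $\cD'$ contributes $\E_{\cD_{\boldsymbol 0}}[yx_i]$, which vanishes for $i\notin S$. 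The mean-centering by $c\bar y$ is exactly what removes the spurious term $\E[y]\mu_i$ for $i\notin S$: under $\cD_{\bmu}$, $x_i$ with $i\notin S$ is independent of $y$, so its covariance with $y$ is $0$.

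For $i\in S$, write the junta in the $\bmu$-biased Fourier basis. Then
\[
\mathrm{Cov}_{\cD_{\bmu}}(g(x_S),x_i)=(1-2\eta)\,(1-\mu_i^2)\,\partial_i \tilde g(\bmu_S),
\]
where $\tilde g$ is the multilinear extension of $g$. This is a nonzero polynomial in $\bmu_S$ whenever $g$ depends on $x_i$. Because $\bmu\sim\Unif[-1,1]^d$, anti-concentration of low-degree polynomials (Carbery--Wright) gives that with probability at least $1-\epsilon^{c_k}$ this covariance is at least $\epsilon^{C}$ in magnitude, for constants depending on $k$. I expect this to be the main obstacle. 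It is also where the $C_k\epsilon^{c_k}$ loss in the theorem comes from: we only obtain accuracy in expectation over $\cM$, and we pay on the bad event.

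With $m=\tilde\Omega(d\log(1/\epsilon)\epsilon^{-2}(1-2\eta)^{-2})$ samples, a Hoeffding bound plus a union bound over $d$ coordinates makes the empirical estimates uniformly accurate. After one step every neuron therefore has $\bw_j^1=\pm s\,\mathbf v$, where $\mathbf v$ is concentrated on $S$ and has off-support entries of size $O(\epsilon)$.

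\textbf{Step 2: the top layer.} The biases are resampled as $\Unif[-L,L]$, so the features are $\ReLu(\pm s\langle \mathbf v,x\rangle+b_j)$. The projection $z=\langle \mathbf v_S,x_S\rangle$ takes at most $2^k$ values on $\{\pm1\}^k$. For generic $\bmu$ these values are distinct with gaps bounded below, again with probability at least $1-\epsilon^{c_k}$ by anti-concentration, because $\langle \mathbf v_S,\cdot\rangle$ separates points. Any function of $x_S$ is then a function of $z$. Such a function is representable as a combination of $O(2^k)$ ReLU bumps with bounded weights, and with width $N=\tilde\Omega(\epsilon^{-1}(1-2\eta)^{-1})$ random biases approximate it, by standard random-features arguments. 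The off-support perturbation shifts $z$ by at most $\epsilon\cdot O(\sqrt d)$, which I will control through the scaling or by thresholding in the analysis.

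The top-layer objective is convex (hinge loss, linear in $\ba$), so $T=\tilde\Omega(\epsilon^{-2}(1-2\eta)^{-2})$ SGD steps reach near-optimal hinge loss on the training distribution $\cD'$. Since $\cD'$ contains $\tfrac12\cD_{\boldsymbol 0}=\tfrac12\cD$, training risk at most $\delta$ gives risk at most $2\delta$ on $\cD$. Converting hinge loss to 0-1 error and accounting for the noise $\eta$ yields $C_k(\eta+\epsilon^{c_k})$. Finally, averaging over $\bmu$ and absorbing the bad-event probabilities gives the stated expectation bound.
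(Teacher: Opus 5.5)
Your two-phase plan has the same architecture as the paper's proof:
\begin{itemize}
\item one first-layer step under the covariance loss;
\item anti-concentration over $\bmu$ to separate the $2^k$ projected values (the paper invokes Lemma~10 of \citet{cornacchia2025low});
\item a lower-triangular ReLU certificate with biases placed in the gaps;
\item convex SGD on the top layer.
\end{itemize}

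However, Step~1 rests on an identity that is false. The loss centers with the \emph{pooled} mean $\bar y$ of the mixture $\cD'=\frac12\cD_{\boldsymbol 0}+\frac12\cD_{\bmu}$, times a fixed constant $c$. Under a mixture, an irrelevant coordinate is correlated with $y$, because the mixture component affects both. For $i\notin S$,
\[
\E_{\cD'}[yx_i]-c\,\E_{\cD'}[y]\,\E_{\cD'}[x_i]=(1-2\eta)\,\mu_i\Big(\big(\tfrac12-\tfrac c4\big)\E_{\cD_{\bmu}}[f]-\tfrac c4\,\E_{\cD_{\boldsymbol 0}}[f]\Big).
\]
For non-constant $f$ this vanishes for almost every $\bmu$ only if $c=2$ and $\E_{\cD}[f]=0$. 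Otherwise, for almost every $\bmu$, every off-support coordinate gets a first-step weight proportional to $\mu_i$ with a common nonzero factor.

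All neurons share the direction $\pm\mathbf v$, so the network is a function of the single scalar $\langle\mathbf v,x\rangle$. Under $\cD$, its off-support part then has standard deviation $\Theta(\sqrt d)$. This swamps the $O(1)$ signal $\langle\mathbf v_S,x_S\rangle$, and Step~2 breaks down. The error enters at your approximation $\E_{\cD'}[(y-c\bar y)x_i]\approx\frac12\mathrm{Cov}_{\cD_{\bmu}}(y,x_i)$. That approximation also drops the on-support term $\frac12(1-2\eta)\hat f(\{i\})$ coming from the uniform half, though that omission is harmless.

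The paper's formal version, \Cref{thm:junta_RDSPACGD_formal}, is proved only for balanced juntas $\{f\in\juntakd:\E_{\cD}[f]=0\}$, with $c=2$. Under that restriction:
\begin{itemize}
\item the off-support terms cancel exactly;
\item the first-step coefficient is $\alpha_j=\frac12\kappa(1-2\eta)\big(\hat f(\{j\})+\hat f_{\bmu}(\{j\})\sqrt{1-\mu_j^2}\big)$;
\item $|\E_{\cD'}[f]|<\frac12$, so the requirement $c\bar y<1$ is met;
\item $f$ is non-constant, which the hinge-to-0-1 conversion needs.
\end{itemize}
You must add this hypothesis and fix $c=2$. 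For arbitrary juntas, nothing in your argument removes the spurious direction $\E_{\cD}[f]\,\bmu_{-S}$.

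A smaller repair is also needed in Step~2. You bound the off-support shift by $\epsilon\sqrt d$ and propose ``thresholding in the analysis,'' but that is not available: the trained network does no thresholding. What works is per-coordinate gradient accuracy $o(1/\sqrt d)$, which $m=\tilde\Omega(d)$ provides, together with the independence of the test point from the learned weights. This gives $\E_{\cD}\big[\langle\bw^1_{j,-S},x_{-S}\rangle^2\big]=o(1)$, as in the paper's $O(1/\log^2 d)$ bound. Keep in mind that this shift is multiplied by certificate weights of size $\|a^*\|_\infty=O(\epsilon^{-1}(1-2\eta)^{-1})$ on $2^k$ neurons, so it must be small on that scale, not merely $o(1)$.
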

\Cref{thm:main:juntats-rdspac-GD} builds on~\citet[Theorem~5]{cornacchia2025low}, 
extending it to include label noise and our PDS training distribution, which is not a product measure.
It remains an open question whether \ddspac learnability can be established using GD.
The proofs of \Cref{thm:main:juntas-efficiently-ddspac} and \Cref{thm:main:juntats-rdspac-GD} are in \Cref{app:juntas}.

\label{eq:junta_def_exp} Finally, we empirically verify \Cref{thm:main:juntats-rdspac-GD} holds for standard gradient descent training on a two-layer fully-connected ReLU network in \Cref{fig:junta_main}. For $\bmu \in [-1,1]^{\otimes d}$, let $\cD_{\bmu}$ be the distribution on $\{ \pm 1 \}^d$ with coordinates $x_i \sim {\rm Rad}((\mu_i+1)/2)$, $i \in [d]$ (thus $\cD_{\boldsymbol{0}}=\Unif\{ \pm 1 \}^d$). For $k \in \{7,9\}$, we consider learning the following $k$-juntas: $f_k(x) := \frac{1}{2} \prod_{i=1}^{k-2} x_i \cdot (1+x_{k-1}+x_k-x_{k-1}x_k)$, with $x \in \{ \pm 1 \}^d$ and label noise, on the uniform target distribution (i.e. $\cD=\cD_{\boldsymbol{0}}$). Thus, $f_k$ is a $k$-junta with $\{\pm 1 \}$ labels and minimum degree Fourier coefficient $k-2$. In the PDS setting (\rdspac), training uses samples from $\cD' = \frac 12 \cD_{\bmu}+ \frac{1}{2} \cD_{\boldsymbol{0}}$, with $\bmu \sim \Unif [-1,1]^{\otimes d}$, while in the standard setting (no PDS) it uses $\cD$. In the left panel of Figure~\ref{fig:junta_main}, we plot the evolution of the test error during training for learning $f_9$ on $d=50$ bits, and observe that PDS yields far more efficient learning.
In the right panel, we consider learning $f_7$ on varying input dimensions, and observe that the benefits of PDS increase with input dimension. We refer to Appendix~\ref{sec:experiments} for the experiment's details and additional plots.

\begin{figure}
    \centering
    \includegraphics[width=0.49\linewidth]{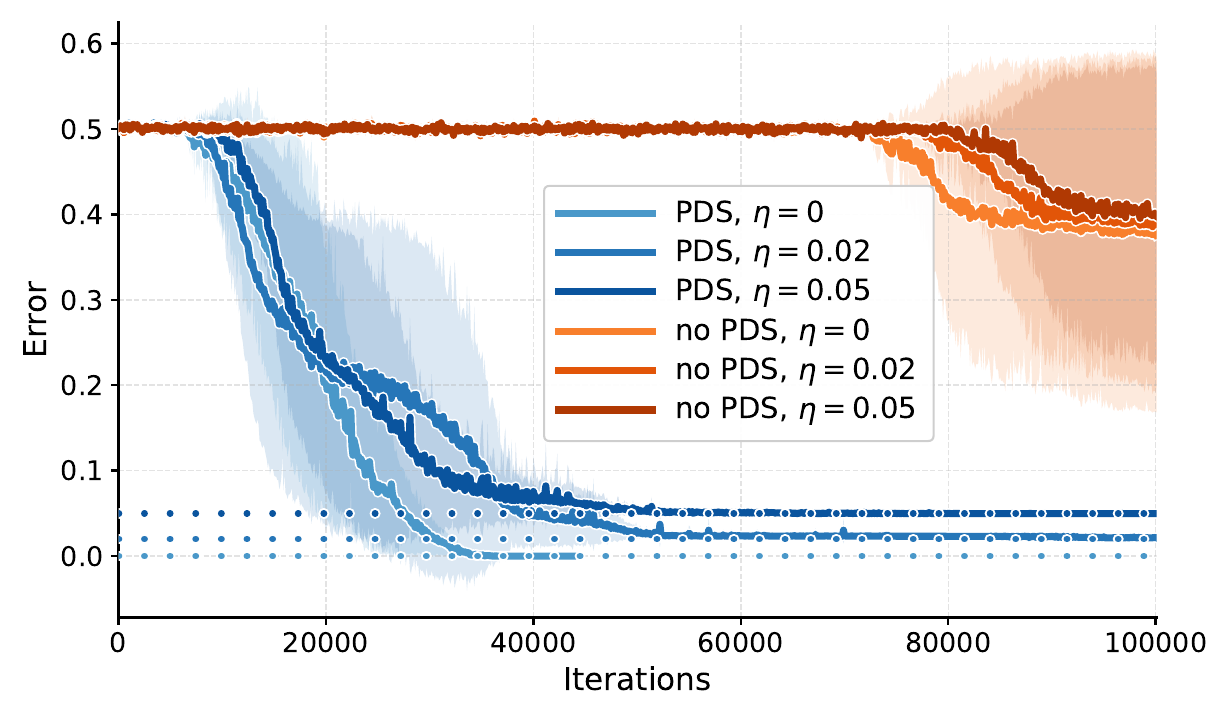}
    \includegraphics[width=0.49\linewidth]{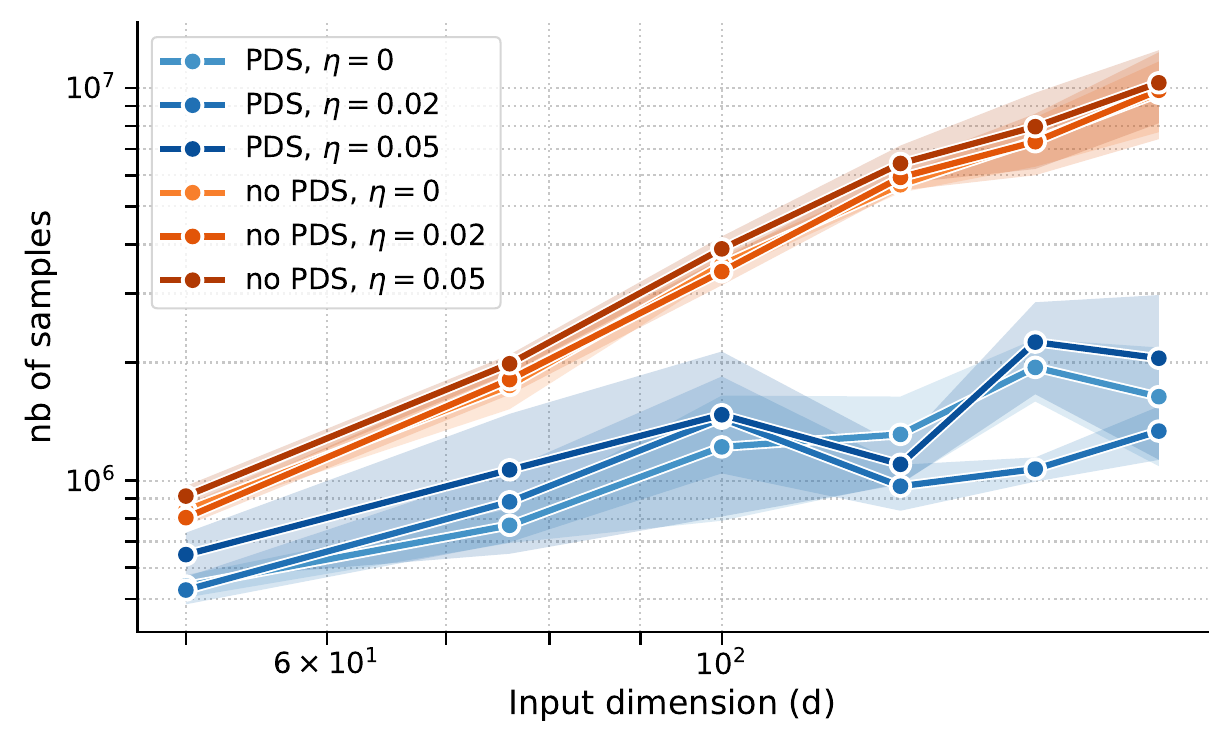}
    \caption{\textbf{Noisy juntas.} 
    (Left) We consider learning $f_9$ (see Sec.~\ref{eq:junta_def_exp}) over $d=50$ bits on $\cD =\Unif\{ \pm 1 \}^d$ with a two-layer $\ReLU$ network ($1024$ hidden units) trained with SGD (batch size $64$, fresh samples, square loss, l.r. $0.01$, both layers trained jointly). In the PDS setting we train on $\cD' = \frac 12 \cD_{\bmu} + \frac 12  \cD_{\boldsymbol{0}}$, with $\bmu \sim \Unif[-1,1]^{\otimes d}$ (and where $\cD_\bmu := \otimes_{i \in [d]} {\rm Rad}((\mu_i+1)/2) $), while in the standard (no PDS) setting $\cD'=\cD$. We plot the test error on $\cD$ during training; the dotted lines show the Bayes error (i.e. $\eta$). (Right) We consider learning $f_7$ on $\cD=\Unif\{ \pm 1 \}^d$ and we plot the sample complexity needed to reach within $0.01$ of Bayes error versus the input dimension. In both plots, PDS training is markedly more efficient.}
    \label{fig:junta_main}
\end{figure}

We would like to explore PDS learning for additional well-studied hypothesis classes. We will now do so by connecting PDS learning to learning with membership queries.
\section{DS-PAC and Membership Queries}\label{sec:PDS-MQ}
In this section, we connect our PDS framework to the classical model of learning with membership queries (MQ). In this model, the learning algorithm can actively query specific points and obtain their labels from the target function $f$, in addition to sampling random labeled examples from the target distribution. The MQ model is strictly stronger than PAC (see \Cref{app:additional-related-work}). Importantly, queries can be issued \emph{adaptively}, that is, based on previously obtained labels, making it strictly more powerful than our “passive” PDS model, where examples are drawn non-adaptively from a training distribution.
The non-adaptive setting, defined below, has also been studied for specific hypothesis classes (in the realizable case), including parities and DNFs under the uniform distribution \citep{feldman2007attribute}, juntas \citep{bshouty2016exact}, and decision trees of logarithmic depth in the input dimension \citep{bshouty2018exact}. 


\begin{definition}[Non-Adaptive Membership Queries (\namq)]\label{def:namq}
A hypothesis class $\cH$ over domain $\cX$ is \namq learnable with sample size $m(\varepsilon)$ if there exist 
integers $m_{\mathrm{rand}}(\varepsilon)$ and $m_{\mathrm{mq}}(\varepsilon)$ with 
$m_{\mathrm{rand}}(\varepsilon) + m_{\mathrm{mq}}(\varepsilon) = m(\varepsilon)$, 
a distribution $\cQ$ over batches $\tilde{S}_{\cX} \in \cX^{m_{\mathrm{mq}}(\varepsilon)}$, 
and a learning algorithm $A$, such that the following holds. 
For every distribution $\cD$ over $\cX$ and every labeling rule $f:\cX \to \Delta(\cY)$, 
the distribution $\cQ$ may depend on $\cD$ and on known complexity parameters of $\cH$, 
but not on the unknown target $f$ and not on any observed labels. 
The learner has access to an i.i.d.\ labeled sample 
$
S_{\mathrm{rand}} = \{(x,f(x)) : x \sim \cD\}
$
of size $m_{\mathrm{rand}}(\varepsilon)$, and to a non-adaptive query batch $\tilde{S}_{\cX} \sim \cQ$ of size $m_{\mathrm{mq}}(\varepsilon)$ 
with labels 
$
S_{\mathrm{mq}} = \{(x,f(x)) : x \in \tilde{S}_{\cX}\}.
$ 
The algorithm outputs $A(S_{\mathrm{rand}},S_{\mathrm{mq}})$ such that 
$
\E_{S_{\mathrm{rand}} \sim (\cD,f)^{m_{\mathrm{rand}}(\varepsilon)}, \tilde{S}_{\cX} \sim \cQ} 
\Big[L_{\cD,f}(A(S_{\mathrm{rand}},S_{\mathrm{mq}})) - L_{\cD,f}(\cH)\Big] \leq \varepsilon.
$
We say the learner is \emph{deterministic \namq} if $\cQ$ is a point mass (the query set is fixed), and \emph{randomized \namq} otherwise.

\end{definition}
Note that random classification noise is defined in the same way as in the PAC model, namely by flipping labels returned by $f$.
%
As we show below, this non-adaptive model plays a central role in the DS-PAC framework. The proof is in \Cref{app:DSPAC-MQ}.



\begin{theorem}[$\ddspac \to \namq$]\label{thm:ddspac-namq}
    For any hypothesis class $\mathcal{H}$, every $\ddspac$-learning algorithm for $\mathcal{H}$ under label noise yields a $\namq$-learning algorithm for $\mathcal{H}$ under label noise with the same sample complexity and running time.
\end{theorem}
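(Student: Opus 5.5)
The plan is a direct simulation. Let $A$ be a $\ddspac$ learner for $\cH$ under label noise with sample size $m(\varepsilon)$ and running time $T(\varepsilon)$. By \Cref{def:ddspac}, for every target distribution $\cD$ there is a training distribution $\cD'=\cD'(\cD,\cH)$ that depends only on $\cD$ and $\cH$, not on $f$ and not on any labels. The $\namq$ learner is defined as follows. Set $m_{\mathrm{rand}}(\varepsilon)=0$ and $m_{\mathrm{mq}}(\varepsilon)=m(\varepsilon)$. Let the query distribution be $\cQ=(\cD')^{m(\varepsilon)}$, the law of an i.i.d.\ batch of $m(\varepsilon)$ points from $\cD'$. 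Query the labels of this batch and run $A$ on the resulting labeled set $S_{\mathrm{mq}}$.

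The first step is to check that this $\cQ$ is admissible under \Cref{def:namq}. Since $\cD'$ depends only on $\cD$ and the class (plus known complexity parameters such as sparsity), $\cQ$ depends on nothing else either. It is fixed before any label is seen, so the queries are non-adaptive. In general $\cQ$ is a genuinely random batch, so the result is a randomized $\namq$ learner. If one wants a deterministic $\namq$ learner, that would need a derandomization step, but the statement does not require it.

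The second step is to match distributions. In the $\namq$ model each queried point $x\in\tilde S_{\cX}$ gets a label $f(x)$ drawn independently from the conditional law given $x$; under random classification noise this is $h^*(x)$ flipped independently with probability $\eta$. Because the points of $\tilde S_{\cX}$ are i.i.d.\ from $\cD'$, the labeled set $S_{\mathrm{mq}}$ has exactly the law $(\cD',f)^{m(\varepsilon)}$, which is the input law that $A$ sees in the $\ddspac$ setting. It follows that
\[
\E_{\tilde S_{\cX}\sim\cQ}\big[L_{\cD,f}(A(S_{\mathrm{mq}}))-L_{\cD,f}(\cH)\big]=\E_{S'\sim(\cD',f)^{m(\varepsilon)}}\big[L_{\cD,f}(A(S'))-L_{\cD,f}(\cH)\big]\le\varepsilon .
\]
The sample complexity is $m(\varepsilon)$, and the running time is $T(\varepsilon)$ plus the cost of generating the queries.

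The one delicate point is the noise model. $\namq$ with noise must return independent fresh noise for each query, including repeated queries of the same point, since $\cD'$ may have atoms. This must match the footnote convention that $f$ is a conditional law sampled independently per example, not a random function fixed once. The paper's remark that noise is "defined in the same way as in the PAC model" gives exactly this, so I will state it explicitly. A related point concerns the running time: if "same running time" is read strictly, the cost of sampling from $\cD'$ must be counted as part of drawing the query batch rather than charged to the learner, in the same way the $\ddspac$ model does not charge $A$ for sampling $\cD'$.
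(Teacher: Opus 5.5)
Your proposal is correct and is essentially the paper's proof: the \namq learner draws its non-adaptive query batch i.i.d.\ from the label-independent training distribution $\cD'$, queries those points, and runs $A$, so the labeled batch has exactly the law $(\cD',f)^{m(\varepsilon)}$ and the \ddspac guarantee carries over unchanged. Your explicit remarks that the resulting learner is in general a randomized \namq learner, and that the simulation needs fresh, independent noise on repeated queries, are details the paper leaves implicit; the paper's extra majority-vote remark about realizable learners is not needed for the statement as posed.
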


\begin{theorem}[\namq $\rightarrow$ \rdspac]\label{thm:namq-rdspac}
Fix a hypothesis class $\cH$ and a distribution $\cD$ over $\cX$.  
Suppose there is a (possibly randomized) non-adaptive MQ learner $A$ for $\cH$ under $\cD$ that, using at most $m_0(\varepsilon)$ queries, outputs $\hat h$ with $\mathbb{E}\big[L_{\cD,f}(\hat h)\big] \le L_{\cD,f}(\cH) + \varepsilon $.
Then $\cH$ is R-DS-PAC-learnable under $\cD$ with sample size $m(\varepsilon) = O \left(m_0(\varepsilon/2)\big(\log m_0(\varepsilon/2)+\log(1/\varepsilon)\big)\right).$
Moreover, if $A$ is deterministic non-adaptive, then $\cH$ is D-DS-PAC-learnable under $\cD$ with the same sample bound.
\end{theorem}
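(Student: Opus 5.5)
The idea is to simulate the non-adaptive MQ learner $A$ by drawing i.i.d.\ samples from a training distribution $\cD'$ built from the query distribution $\cQ$ and the target distribution $\cD$. Fix $\varepsilon$ and write $m_0=m_0(\varepsilon/2)$. The learner $A$ is run at accuracy $\varepsilon/2$, with $m_{\mathrm{rand}}$ random examples from $\cD$ and a query batch $\tilde S_{\cX}=(q_1,\dots,q_{m_{\mathrm{mq}}})\sim\cQ$, where $m_{\mathrm{rand}}+m_{\mathrm{mq}}\le m_0$.

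\textbf{Randomized case.} The meta-distribution $\cM_\cD$ is defined by first drawing a batch $\tilde S_{\cX}\sim\cQ$. We then set
\[
\cD'=\tfrac12\,\cD+\tfrac12\,\Unif\{q_1,\dots,q_{m_{\mathrm{mq}}}\}.
\]
This depends only on $\cD$ and $\cQ$, hence only on $\cD$ and $\cH$, and not on $f$. When $\tilde S_{\cX}$ is a point mass, $\cD'$ is a single fixed distribution, which gives the D-DS-PAC statement.

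To tell the two mixture components apart, we let the PDS learner know the query points. Formally, the learner's description may depend on $\cD'$ and therefore on the fixed batch. In the randomized case this is slightly more delicate, and I would make the points self-identifying, for example by tagging each sample with a component bit. Alternatively, I would note that the learner may treat any sampled point lying in the finite set $\{q_j\}$ as a query answer; this set is known once $\cD'$ is drawn. If $\cD$ puts mass on some $q_j$, such a point is still a correctly labeled draw from $\cD'$, and I would handle this by randomly routing each such point to either role with the appropriate probabilities.

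We draw $m=C\,m_0(\log m_0+\log(1/\varepsilon))$ samples from $(\cD',f)$. The key steps are the following.

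\textbf{(i) Random part.} Conditioned on component membership, the points from the $\cD$ component are i.i.d.\ from $(\cD,f)$. There are $\mathrm{Bin}(m,1/2)\ge m_{\mathrm{rand}}$ of them except with probability $e^{-\Omega(m)}$.

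\textbf{(ii) Query part.} This is coupon collecting. Each $q_j$ is hit with probability $1/(2m_{\mathrm{mq}})$ per sample, so the probability that some $q_j$ is never seen is at most $m_{\mathrm{mq}}(1-\tfrac1{2m_{\mathrm{mq}}})^{m}\le m_0e^{-m/(2m_0)}$. This is at most $\varepsilon/4$ for our choice of $m$. Moreover, each time $q_j$ appears its label is a fresh draw from $f(q_j)$, which is exactly the distribution of an MQ answer under random classification noise. We feed $A$ the first observed label for each $q_j$; labels from independent appearances are i.i.d., so the oracle is simulated faithfully.

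\textbf{(iii) Error accounting.} On the good event $G$, the input to $A$ has exactly the distribution of $(S_{\mathrm{rand}},S_{\mathrm{mq}})$. Hence the conditional excess risk, averaged appropriately, is at most $\varepsilon/2$. On $G^c$, whose probability is at most $\varepsilon/2$, we output an arbitrary hypothesis, and its excess loss is at most $1$. The expected excess loss is therefore at most $\varepsilon$.

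The main obstacle is step (iii). Conditioning on $G$ could in principle bias the distribution of the batch or the labels. I would avoid this by a coupling instead of conditioning. We generate an ideal MQ transcript and the PDS sample on a common probability space so that they agree on $G$. This gives
\[
\E[\text{excess}_{\mathrm{PDS}}]\le\E[\text{excess}_{\mathrm{MQ}}]+\Pr[G^c]\le \varepsilon/2+\varepsilon/2 .
\]

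The runtime is that of $A$ plus $O(m\log m_0)$ bookkeeping, which gives the claimed sample bound.
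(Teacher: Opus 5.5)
Your skeleton is the paper's: a training distribution on the drawn query batch, $m=O(m_0(\log m_0+\log(1/\varepsilon)))$ samples so that coupon collecting covers every query except with probability $\varepsilon/2$, first-occurrence labels, and the failure event charged at most its probability. Two of your departures are improvements.

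\emph{Coupling instead of conditioning.} Your bound $\E[\text{excess}_{\mathrm{PDS}}]\le\E[\text{excess}_{\mathrm{MQ}}]+\Pr[G^c]$ is more careful than the paper's conditioning on coverage. Conditioning reweights batches by their coverage probability, which depends on $|U|$, so ``conditioned on coverage the transcript is exactly what $A$ expects'' is not literally true.

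\emph{Random examples in the deterministic case.} Mixing in $\cD$ handles $m_{\mathrm{rand}}>0$ when the learner is deterministic, a case the paper's choice $\cD'=\Unif(U)$ silently ignores. There the batch is a fixed function of $\cD$ that the learner can compute. Routing draws that land in $\{q_j\}$ by their posterior component probability (using $\cD(q_j)$) is then legitimate.

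The gap is in the randomized case. In R-DS-PAC the learner is $A(S')$, a function of the sample alone. It never sees which $\cD'$ was drawn from $\cM_{\cD}$, so from its point of view the batch $\{q_j\}$ is not ``known once $\cD'$ is drawn.''

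Tagging is not available either. A training example is just $(x,f(x))$ with $x\in\cX$, and modifying $q_j$ to carry a component bit would change its label. So in general the learner cannot tell query answers from random examples, for instance when $\cD$ has atoms or its support meets the query set. It then cannot assemble $(S_{\mathrm{rand}},S_{\mathrm{mq}})$, and the agreement on $G$ that your coupling relies on fails.

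The paper avoids this by not mixing in $\cD$ at all. The theorem's hypothesis counts only queries, and with $\cD'=\Unif(U)$ every draw is a query answer. On the coverage event the batch's support is read directly off $S'$, with no membership test needed.

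In the randomized case you should do the same. Random examples, if any, can be drawn from $\cD$ as part of the randomized batch, since $\cQ$ may depend on $\cD$. This still tacitly assumes $A$ can finish from the labeled set of queried points without the hidden seed that fixed the batch's ordering and which points are random examples versus queries. The paper makes this assumption silently, and your coupling needs it as well.
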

Moreover, we can show that a deterministic \namq algorithm implies \ddspac.
As an immediate application, we obtain that DNFs under the uniform distribution and decision trees of logarithmic depth are \rdspac learnable. 


Natural open problems include the following:
\textbf{(i)} Can DNFs and decision trees be learned with PDS by training a neural network with gradient descent? 
\textbf{(ii)} DNFs under the uniform distribution and log-depth decision trees are learnable in \namq by identifying heavy Fourier coefficients. A natural broader question is whether the class of sparse functions, those with only a few non-zero Fourier coefficients, is also \namq learnable under the uniform distribution using a number of queries polynomial in the dimension and the sparsity parameter, and hence R-\dspac learnable. This is known to hold in the adaptive MQ model \citep{mansour1994learning}. 
\textbf{(iii)} Can we separate learning with adaptive MQ from NA-MQ/\rdspac for natural classes? We note that \citet{feldman2007attribute} sketches such a separation for certain artificial classes. For functions representable by polynomial-size circuits, we know that they are not \rdspac learnable because constant-depth circuits are known to be hard to learn even with adaptive membership queries \citep{kharitonov1993cryptographic}. 



\section{Summary and Open Questions}\label{sec:summary}
Much of the effort in the current practice of ML is in ``dataset selection'' or finding the best training distribution $\cD'$ in order to get good results on a target $\cD$. It is important to develop a framework, language, methodology and theory to capture this. This paper is our attempt to make progress toward this goal.

In \Cref{sec:fpds} on \fpds, we asked which targets are learnable using some training $\cD'$, with the \fpds framework. The definition allows $\cD'$ to depend on $f$, so this is not a ``recipe'' where one can construct $\cD'_f$ during training. In particular, our \Cref{thm:fpds} shows the existence of $\cD'_f$ by ``cheating'' and leaking information through an unnatural and impossible choice of $\cD'_f$, thus not providing insight but rather helping us refine the question to ask. This is done in \Cref{question:crazy-conjecture} 
where we limit training to a fixed training rule that does not process ``leaked'' information: SGD on a regular neural net---the main contribution of this Section is thus in formulating \Cref{question:crazy-conjecture}. The insight is two-fold: (a) by understanding {\em which} $\cD'_f$ allows learning $(\cD,f)$, we can gain insight on the type of properties of ``good'' training distributions $\cD'$, and hopefully guidance into the ``dataset selection'' problem; (b) shift the study about tractable learnability of neural networks from asking ``which subset of functions representable by neural networks are tractably learnable'' (e.g. in \cite{daniely2020hardness,daniely2023computational}), or even ``under what simple, e.g. uniform/Gaussian/etc input distribution'' (e.g. as in \cite{chen2022hardness,daniely2021local}), to asking, ``for any function $f$ representable by a neural network and input $\cD$, under what assumptions on the training distribution $\cD'$ is it PDS learnable?''.

In Sections~\ref{sec:DSPAC} and~\ref{sec:PDS-MQ} on DS-PAC and membership queries, we turned to a more prescriptive order of quantifiers, where here the training distribution $\cD'$ depends only on the hypothesis class $\cH$ and input distribution $\cD$, but not on the target $f$.  We present concrete definitions of a notion of learnability that we advocate studying, and initial results showing its power and depth.  In particular, we show strong connections with membership query learning, and especially {\em non adaptive} membership query learning.  NA-MQ has only been sporadically studied in the past, and frequently not explicitly but by providing MQ methods that do not require adaptation. We hope this will reignite interest in NA-MQ, and more directly in DS-PAC learning.  

{
\begingroup
\setlength{\extrarowheight}{0.5pt}   
\renewcommand{\arraystretch}{2} 

\setlength{\aboverulesep}{0.5pt}
\setlength{\belowrulesep}{0.5pt}
\setlength{\cmidrulesep}{2pt}

\begin{table}[]
  \centering
  \small
  \setlength{\tabcolsep}{6pt}
  \fontsize{8}{9}\selectfont 
  \begin{tabularx}{\linewidth}{@{}l *{4}{>{\centering\arraybackslash}X}@{}}
    \toprule
    \textbf{Class} &
    \textbf{Tractable Algorithm} &
    \textbf{``Analyzable'' GD} &
    \celltwo{\textbf{Experiment}}{\textbf{(realistic GD)}} \\
    \midrule
    Parity  &
    \celltwo{$\ddspac$}{\Cref{thrm:main:parity-efficiently-ddspac}} &
    \celltwo{$\ddspac^{\mathrm{GD}}$}{\Cref{thm:main:parity-GD-ddspac}} &
   \Cref{fig:main:parity-high-dim}  \\
    
    $k$-Juntas  &
    \celltwo{$\ddspac$}{\Cref{thm:main:juntas-efficiently-ddspac}} &
    \celltwo{$\rdspac^{\mathrm{GD}}$}{\Cref{thm:main:juntats-rdspac-GD}} &
    \Cref{fig:junta_main} \\
    $\log(d)$-depth Decision Trees  &
    \celltwo{$\rdspac/\namq$}{\cite{bshouty2018exact}} &
    ? & ? \\
    
    DNF (uniform dist.)  &
    \celltwo{$\rdspac/\namq$}{\cite{feldman2007attribute}} &
    ? & ? \\
    \noalign{\vskip 2ex}
    \hdashline
    Poly-Sized Circuits  &
    \cellthree{$\fpds$}{\Cref{thm:fpds}}{(not \rdspac)} &
    \celltwo{\fpds (with special unrealistic network)}{\Cref{thm:fpds}} & ? \\
    \bottomrule
  \end{tabularx}
  \caption{Learnability of hypothesis classes under random classification noise across multiple distribution shift frameworks. 
    These classes are believed to be computationally hard to learn in the PAC model (or, at least, no polynomial-time algorithms are currently known). We ask whether they can instead be learned efficiently under positive distribution shift, using (i) arbitrary tractable algorithms, (ii) provably analyzable gradient descent on neural networks, and (iii) empirically, using standard gradient descent. 
  For each hypothesis class, the test distribution is arbitrary, except for DNFs, where it is restricted to the uniform distribution. 
  Notably, \Cref{thm:main:juntats-rdspac-GD} was also established implicitly by \citet{cornacchia2025low}. For the class of $k$-juntas, the algorithm runs in time polynomial in $2^k$; this dependence is unavoidable, since the sample complexity is $\Omega(2^k)$.}
  \label{tab:class-vs-shift-gd}
\end{table}
\endgroup
}

If we allow an arbitrary tractable algorithm and to randomly choose a training distribution $\cD'$, then DS-PAC is ``equivalent'' (with a small increase in sample complexity) to NA-MQ. With DS-PAC, unlike NA-MQ, we are mainly interested in using a particular learning rule, namely SGD on a neural network.  Our work raises the following open directions:
\begin{itemize}[leftmargin=1em,labelsep=0.5em,itemsep=1pt,topsep=0pt]
    \item \emph{Are interesting hypothesis classes that are tractably learnable with NA-MQ (and thus also R-DS-PAC) also DS-PAC learnable using SGD on a neural network?   E.g., decision trees?} An interesting candidate class of particular interest is the class of functions with a sparse Fourier decomposition.
    \item \emph{Are sparse functions DS-PAC learnable? What about with using SGD on a neural network?} On the negative side, it would be good to show hardness of DS-PAC learning, which is equivalent to hardness of NA-MQ learning.
    \item \emph{Are there natural classes that are MQ learnable, but not NA-MQ/DS-PAC learnable?} 
We are not aware of any work specifically on showing adaptivity is necessary for MQ, i.e. seperating A-MQ and NA-MQ for natural classes.
Another open question concerns the necessity of randomization in R-DS-PAC. That is, whether there are classes that are R-DS-PAC learnable but not D-DS-PAC, and whether this gap exists specifically for SGD on a neural network.   In particular, are juntas D-DS-PAC learnable with SGD on a neural network? 
\end{itemize}
 Overall, we hope our work will create interest in PDS learning, which we view as an important learning paradigm. We can summarize the PDS hierarchy implied by our work as \[\pac \twowayarrows \ddspac \rightarrow \rdspac \leftrightarrow \namq \twowayarrows \amq.\]
\Cref{tab:class-vs-shift-gd} provides a summary of the current classification of hypothesis classes in the PDS hierarchy.



\newpage
\subsection*{Acknowledgments}

IA is supported by the National Science Foundation under Grant ECCS-2217023, through the Institute for
Data, Econometrics, Algorithms, and Learning (IDEAL).
EC was supported by the French government under management of Agence Nationale de la Recherche as part of the “Investissements d’avenir” program, reference ANR19-P3IA-0001 (PRAIRIE 3IA Institute).
GV is supported by the Israel Science Foundation (grant No. 2574/25), by a research grant from Mortimer Zuckerman (the Zuckerman STEM Leadership Program), and by research grants from the Center for New Scientists at the Weizmann Institute of Science, and the Shimon and Golde Picker -- Weizmann Annual Grant.

\bibliography{refs}
\bibliographystyle{icml2026}

\newpage
\appendix

\section{Additional Related Work}\label{app:additional-related-work}
\paragraph{Positive Distribution Shift.} \citet{medvedev2025shift} introduce the name \emph{positive distribution shift} (PDS) for the phenomena where training on a distribution from test distribution can improve test performance. They focus on the case where test and train distributions are mixture distributions of $K$ domains or tasks $\cD_1,\dots, \cD_K$, given by test and traing mixing proportions $\textbf{p} = (p_1,\dots,p_K)$ and $\textbf{q} = (q_1,\dots,q_K)$, i.e. $\cD = p_1\cD_1+\dots+p_K\cD_K$ and $\cD' = q_1 \cD_1+\dots+q_k\cD_K$. They consider the problem of finding the optimal train mixing proportions $\textbf{q}^*$ for given fixed test mixing proportions $\textbf{p}$ and fixed compute budget and identify for a number of scenarios $\textbf{q}^*$ and the extend to which PDS can be beneficial. They also show that in this setting with independent domains or tasks, PDS almost always exists in the sense that it's almost always the case that $\textbf{q}^*\neq \textbf{p}$. \citet{medvedev2025shift} constraints the distribution shift considered to just changing the mixing proportion, and their focus is on learning problems that were already tractable under initial distribution and PDS in their case reduces the number of samples needed to reach a certain error. In contrast, in our work we study PDS more broadly by introducing the various PDS learning frameworks and allow for any covariate shift. We focus on learning with SGD on a neural network and highlight the computational benefits of PDS in allowing the training to be tractable.
\paragraph{GD on neural networks.} A large body of work has analyzed the sample and time complexity of learning simple function classes—such as parities, juntas, and single/multi-index models—using stylized gradient descent on shallow networks under standard symmetric input distributions (e.g., uniform Boolean, standard Gaussian)~\citep{arous2021online,bietti2022learning,abbe2023sgd,dandi2023two,dandi2024benefits,lee2024neural,troiani2024fundamental,arnaboldi2024repetita,csimcsek2024learning,damian2025generative,joshi2025learning}. For parities, it is known that under uniform Boolean inputs they are learnable by Stochastic Gradient Descent (SGD) with small batch size on an emulation network~\citep{abbe2020poly}, but require at least $\Omega({d \choose k})$ time in the Statistical Query (SQ) model~\citep{kearns1998efficient} or with GD under limited gradient precision~\citep{abbe2020poly}. Positive results for gradient descent on standard shallow networks exist for sparse parities ($k=O_d(1)$), matching the SQ lower bound~\citep{barak2022hidden,glasgow2023sgd,kou2024matching}, and for dense parities depending on the initialization~\citep{abbe2022non-universality,abbe2024learning}. For juntas under uniform Boolean inputs, complexity is governed by the Fourier-Walsh structure of the target, captured by the leap for square loss~\citep{abbe2022merged,abbe2022learning,abbe2023sgd} and the SQ-leap for general losses~\citep{joshi2024complexity}. Our work instead seeks to overcome these barriers via Positive Distribution Shift (PDS).

Beyond symmetric inputs, several works study structured data distributions—e.g., spike-covariance~\citep{mousavi2023gradient}, hierarchical~\citep{mossel2016deep,cagnetta2024deep,cagnetta2024towards,dandi2025computational}, or non-centered product measures~\citep{malach2021quantifying,cornacchia2025low}—showing that structure can reduce sample complexity relative to unstructured settings. In particular,~\citet{malach2021quantifying} analyze sparse parities under a mixture of uniform and biased distributions, but only for a differentiable model combining a linear predictor with an appended parity module, and with no label noise. By contrast, we consider standard two-layer architectures and demonstrate PDS benefits also for high-degree parities, and with label noise. For juntas,~\citet{cornacchia2025low} show gains from training on randomly shifted inputs, but do not examine whether such favorable distributions can transfer to other target distributions, such as the uniform.

Other works have studied distribution shift between train and test, as we do here. A prominent line of research analyzes models tested outside their training domain (e.g., out-of-distribution, length generalization~\citep{anil2022exploring,abbe2023generalization,zhou2023algorithms,abbe2022learning,power2022grokking}), probing whether neural networks rely on reasoning or memorization. Transfer learning has also been examined as a way to cope with scarce target data~\citep{damian2022neural,gerace2022probing,ingrosso2025statistical}. These works often treat distribution shift negatively—either as a stress test of extrapolation or as a necessity when labeled target data is limited. In contrast, we adopt a positive perspective, developing a unifying theory showing how a carefully chosen training distribution can actively facilitate learning. Closer to our approach, currsiculum learning studies investigate training on a \textit{sequence} of distributions of increasing complexity, and demonstrate benefits of ordering~\citep{saglietti2022analytical,cornacchia2023mathematical,abbe2023provable,mannelli2024tilting,mignacco2025statistical,wang2025learning}. By contrast, we focus on a \textit{single} positively shifted training distribution and its ability to transfer to the target distribution, arguing that one well-chosen source can substantially ease learning.
\paragraph{Smoothed analysis.}
Smoothed analysis~\citep{spielman2004smoothed} blends worst-case and average-case perspectives by measuring the maximum expected performance of an algorithm under slight random perturbations of its inputs. \citet{mossel2004learning} showed that such perturbations alter the Fourier coefficients of Boolean functions and can drastically reduce complexity. Subsequent works established smoothed-analysis guarantees for learning juntas and decision trees under random product distributions~\citep{kalai2008decision}, and for DNFs~\citep{kalai2009learning}. The ID3 decision-tree algorithm was later shown to efficiently learn juntas in the smoothed model~\citep{brutzkus2020id3}, and these ideas were extended to general Markov Random Fields with a smoothed external field~\citep{chandrasekaran2025learning}. Related work also studied smoothing via additive Gaussian noise in the Gaussian input setting~\citep{klivans2013moment}. Our work departs from this literature in two key ways: 1) Positive Distribution Shift (PDS) involves deliberate, often large and typically deterministic shifts (except in the case of \rdspac), rather than small random perturbations; and 2) we focus on providing quantitative computational guarantees for concrete algorithms, including gradient descent on neural networks.

\paragraph{Learning with (adaptive) membership queries.}
Classical results in this model established its power relative to the standard PAC framework. Angluin’s foundational work showed that finite automata can be exactly learned from queries and counterexamples \cite{angluin1987learning}, and that monotone DNF formulas are efficiently learnable with membership queries \cite{angluin1988queries}. Subsequent work extended these results to richer classes: \citet{bshouty1993exact} gave polynomial-time algorithms for decision trees, while \citet{schapire1993learning} developed methods for learning sparse multivariate polynomials over $\mathrm{GF}(2)$. Fourier-analytic techniques played a central role in later breakthroughs: \citet{linial1993constant} showed that constant-depth circuits (AC$^{0}$) can be efficiently learned from membership queries using low-degree Fourier concentration, and \citet{kushilevitz1993learning} applied similar ideas to decision trees under the uniform distribution. Building on this line, \citet{jackson1997efficient} obtained an algorithm for learning DNF formulas with membership queries under the uniform distribution. 

\paragraph{Learning from random walks.} The passive learning model via random walks \citep{aldous1995markovian,bartlett1994exploiting,gamarnik1999extension,roch2007learning} lies strictly between uniform-distribution learning and the membership query model: it is weaker than the latter, yet stronger than the former. This model has enabled several important results, including learning DNFs and decision trees with random classification noise under the uniform distribution \citep{bshouty2005learning}, learning parities with noise (as a consequence of \citep{bshouty2002using,bshouty2005learning}), and agnostic learning of juntas \citep{arpe2008agnostically}.
Similar to the membership query setting, random classification noise can be tolerated whenever there exists an algorithm for the realizable case. 
In the context of this paper, the random walk model can be viewed as a specific case of \rdspac. 


\section{Proofs for \fpds}\label{app:fpds-nonstandard}
\begin{proof}[Proof of \Cref{thm:fpds}]
If we have freedom to change the network architecture, activation, initialization, and learning rate, we can use the PAC-universality of neural networks to show \Cref{thm:fpds}.

Note that it suffices to show that we can transmit $\poly(d)$ bits by encoding them into a distribution $\tilde{\mathcal{D}}$. We can use this to decode the network using a PAC learning algorithm. Simulating this decoding algorithm with GD on the non-standard network gives us the desired learning algorithm.

We first show the subroutine of this PAC algorithm for transmitting bits using a distribution.

\begin{lemma}[Transmitting $r$ bits using a distribution]\label{app:lemma:fpds-transmit-bits}
    Let $\theta$ be a set of $r=\poly(d)<2^{d-1}$ elements from $\{0,1\}^{d}$. There exists a $\poly(m,d)$ procedure $\textbf{DECODE}$ such that for any $d$ and any $r\le \poly(d)$ there are a distribution $\cD$ over $\{0,1\}^d$ and $m\le \poly(r,d,\log \frac{1}{\eps})$ such that for $S=\{x_i\}_{i=1}^{m}\sim (\cD,f)^m$ and $\hat{\theta} = \textbf{DECODE}(S)$ we have $\hat{\theta}=\theta$ with probability at least $1-\eps$.
\end{lemma}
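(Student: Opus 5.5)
We encode the set $\theta=\{\theta_1,\dots,\theta_r\}\subseteq\{0,1\}^d$ directly in the support of the distribution. We take $\cD$ to be uniform over $\theta$ and use the decoder that outputs the set of distinct points seen in the sample. The labels produced by $f$ play no role, so label noise is irrelevant. If one prefers a distribution whose support is not exactly $\theta$, for example so that the decoding is robust or the ordering of $\theta$ is recoverable, a variant puts mass $\frac{1}{2r}$ on each $\theta_j$ and spreads the remaining $\frac12$ over a fixed reserved marker region. The decoder then keeps points whose empirical frequency exceeds a threshold such as $\frac{1}{4r}$. The condition $r<2^{d-1}$ guarantees there is room, e.g.\ by reserving one coordinate as a flag. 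Since $r$ arbitrary elements of $\{0,1\}^d$ already carry $\poly(d)$ bits, this suffices for the later use in the proof of \Cref{thm:fpds}.

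\textbf{Step 1 (construction).} Set $\cD=\Unif(\theta)$. Define $\textbf{DECODE}(S)=\{x_i: i\le m\}$, computed by sorting or deduplicating the $m$ strings of length $d$. This takes time $O(md\log m)=\poly(m,d)$.

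\textbf{Step 2 (coupon collector).} Decoding fails only if some $\theta_j$ never appears in the sample. For a fixed $j$, this happens with probability $(1-1/r)^m\le e^{-m/r}$. A union bound over the $r$ elements gives total failure probability at most $re^{-m/r}$. Choosing $m=\lceil r\ln(r/\eps)\rceil$ makes this at most $\eps$. This value of $m$ is $\poly(r,d,\log\frac1\eps)$, as required.

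\textbf{Step 3 (thresholded variant).} For the variant, Hoeffding bounds each empirical frequency to within $\frac{1}{8r}$ of its true value. Each such bound fails with probability $2e^{-m/(32r^2)}$. Union bounding over the $r$ support points and, crucially, over the event that any single marker point exceeds the threshold gives $m=O(r^2\log(r/\eps))$. The union bound is safe because the marker mass per point is at most $2^{-(d-1)}\ll\frac{1}{4r}$.

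The main obstacle is not the probability estimate, which is routine, but ensuring that the decoded object is in the form needed downstream. The set must be unambiguous, and, if an ordered bit string is required, the order must be recoverable. I would handle ordering by letting each $\theta_j$ carry its index $j$ in $\lceil\log r\rceil$ of its coordinates. This is possible because $r<2^{d-1}$ leaves enough coordinates free for a payload once $d$ is moderately large. The payload bits are then read off in index order.
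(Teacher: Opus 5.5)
Your proof is correct, and Steps 1--2 alone establish the lemma, but your route differs from the paper's. The paper identifies $\{0,1\}^d$ with $\{0,\dots,2^d-1\}$, reads $\theta$ as an $r$-bit string, and stores bit $i$ in the \emph{mass} of the fixed atom $i$: the mass is $\frac{2}{5r}$ or $\frac{3}{5r}$, and atom $0$ absorbs the rest. Its decoder estimates the $r$ frequencies to within $\frac{1}{20r}$ via Hoeffding and a union bound, then thresholds at $\frac{1}{2r}$, giving $m=O(r^2\log(r/\eps))$. You instead store $\theta$ in the \emph{support} and decode by deduplication, so the only failure is a missed coupon and $m=\lceil r\ln(r/\eps)\rceil$ suffices.

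Your version has several advantages:
\begin{itemize}
\item It saves a factor of $r$ in samples.
\item It needs no threshold calibration. The paper's displayed decoding rule actually thresholds at $\frac{3}{5r}$, which has to be read as $\frac{1}{2r}$.
\item Your decoder does not need to know $r$.
\item It proves $\hat\theta=\theta$ literally for a set of elements, as the lemma is stated.
\item It has the property used later in the proof of \Cref{thm:fpds}, that every support point of $\cD_\theta$ lies in $\theta$. The frequency construction, whose support is $\{0,\dots,r\}$, lacks this property.
\end{itemize}

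The paper's coding, in turn, keeps the support fixed and independent of $\theta$ and transmits a bit string directly. Your indexing remark covers that reading as well: send bit $i$ as the string whose first $d-1$ coordinates encode $i$ and whose last coordinate is $\theta[i]$, which fits because $r<2^{d-1}$.

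There is one slip, only in the optional Step 3. A Hoeffding union bound over the exponentially many marker points needs $m=O(r^2(d+\log(r/\eps)))$ rather than $O(r^2\log(r/\eps))$, which is still polynomial. Also, a flag coordinate cannot be reserved inside arbitrary given $\theta_j$ without re-encoding them. Neither point affects your main argument.
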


\begin{proof}[Proof of \Cref{app:lemma:fpds-transmit-bits}]
    Associate each $x \in \{0,1\}^{d}$ with an integer $0,1,\dots, 2^{d}-1$. Define $\cD$ as follows: for $i=1,\dots, r$, $P(x=i) = \begin{cases}
        \frac{2}{5r} \text{, if } \theta[i]=0 \\
        \frac{3}{5r} \text{, if } \theta[i]=1.
    \end{cases}$. For $i>r$, $P(x=i)=0$, and $P(x=0) = 1- \sum_{i\neq 0} P(x=i)$. We will need $O \left( \frac{r(\log r+\log \frac{1}{\eps})}{(\frac{1}{r})^2}\right) = O \left(  r^3 (\log r+\log \frac{1}{\eps})\right)$ samples from $\cD$. Let $\hat{P}$ be the empirical distribution we get on $\{0,1\}^d$ from the sample $S=(x_1,\dots ,x_m)$. With this many samples, we have that with probability $\ge 0.99$, we have $\hat{P}(x=i) \in (P(x=i) - \frac{1}{20r},P(x=i) + \frac{1}{20r})$, so we can set the threshold $\hat{P}(x=i) = \frac{2.5}{5r}$ to recover the bit. The runtime of this procedure is $O(m r d ) = \poly(r, \log \frac{1}{\eps})$. To see why this many samples are sufficient, let $Y_i = \sum_{j=1}^{m} \frac{\mathbf{1}_{x_j = i}}{m}$. Note that $\E \left( \mathbf{1}_{x_i=i}\right) = P\left(x=i \right)$, so by Hoeffding inequality applied to $Y_i-P\left( x=i\right)$ with $0\le \frac{\mathbf{1}_{x_j = i}}{m}\le \frac{1}{m}$
    \begin{align*}
        P\left(\cap_{i=1}^{r}|Y_i-P\left( x=i\right)| <\frac{1}{20r}\right) \ge 1- 2r \exp \left( -\frac{\frac{1}{20^2 r^2}}{m \frac{1}{m^2}}\right) = 1-2r \exp(-\frac{m}{20^2r^2}).
    \end{align*}
    Therefore, if we recover bits using $\hat \theta_i = \begin{cases}
        1 &\text{ if } Y_i \ge \frac{3}{5r}\\
        0 &\text{ if } Y_i < \frac{3}{5r}\\
    \end{cases}$, then with the same probability as above we recover the bits correctly, $P(\hat \theta = \theta) \ge  1-2r \exp(-\frac{m}{20^2r^2})$. Taking $m=O(r^2 (\log r+\log \frac{1}{\eps}))$ suffices to have $2r \exp(-\frac{m}{20^2r^2}) \le \eps$, i.e. to transmit the bits with probability $\ge 1-\eps$.
\end{proof}

Consider learning without noise first. We will add the label noise later.

\textbf{Encoding the network.} Let $f:\{0,1\}^d\to \{0,1\}$ be boolean circuit satisfying of poly size. Then $f$ can be encoded in $r=\poly(d)$ atoms of $\{0,1\}^d$.  

For the circuits (i) we use the following encoding: we encode the adjacency matrix of size $M \times M$ of the DAG representing the boolean circuit, where $M=\poly(d)$ is the total number of nodes of the DAG,  with the value at entry $(i,i)$ states whether the node is input, output, or which of the gate node types, and the value at $(i,j)$ entry represents if there is an edge and if yes what direction the edge is between node $i$ and node $j$. Note that from the adjacency matrix we can reconstruct the function in time $\poly(d)$. So, for elements $(x^1,\dots,x^d) \in \{ 0,1\}^d$ we encode
\begin{itemize}
    \item Use the first $\log \poly(d)$ bits to encode the the size of the circuit, $M$, and the total number of bits needed to transmit, $r$
    \item Use the next $2\log \poly(d)$ bits to encode the value of the coordinates $i$ and $j$ of the entry $(i,j)$ in the adjacency matrix. 
    \item Use the next $4$ bits to encode the value of the entry. 
    \item Use the remaining bits to mark that the atom is coming from this distribution by setting them to $1$, note that there is at least $cn$ of these for some $c$ (that can be taken to be $\frac{1}{4}$ for large enough $n$).
\end{itemize}

For \poly-sized networks we use the following encoding:
 We can do the encoding similarly: (i) Use the first $3\log \poly(d)$ bits to encode the depth $L$, width $W$, and total number of bits $r$, (ii) Use the next $3\log \poly(d)$ bits to location of the weight in the format $(i,j)$, which represents depth and the width of that layer in which the weight is located, (iii) use the next $d/2$ bits encode the weight value (e.g. if we want rational weights, we can use the first $d/4$ bits for the numerator and the next $d/4$ bits for the denominator, (iv) use the remaining bits to mark that the atom is coming from this distribution by setting them to $1$, note that there is at least $cn$ of these for some $c$ (that can be taken to be $\frac{1}{4}$ for large enough $n$). 

Note that the width and the depth are encoded in all $\bx \in \theta$ (and if the whole layer is missing, we can tell from other points). So, similarly, in the case that we do not recover $\theta$, we can infer that we didn't recover it by counting the number of distinct weight positions recovered. In this case, we will return the zero predictor.

The rest of the proof is analogous for circuits and networks.

Since there is $\poly(d)$ gates, this encoding procedure requires $r = \poly(d)$ elements $\bx_i \in \{0,1\}^d$.


\textbf{Learning Algorithm For \fpds.} Call the  set of $r$ elements $\bx_i \in \{0,1\}^n$ that encode the circuit $\theta$. Now using \Cref{app:lemma:fpds-transmit-bits}, there is a distribution $\cD_{\theta}$ and a PAC learning algorithm $\cA$ which using $m=O\left( r^2(\log r+\log \frac{1}{\varepsilon})\right)$ samples  $S$ from $S\sim\cD_{\theta}^m$ recovers $\cA(S)=\hat{\theta}$ such that $\hat\theta=\theta$ with probability at least $1-\eps$. 

With probability $\eps$ it happens that we do not recover all of $\theta$. Note that by the construction of $\cD_{\theta}$ from \Cref{app:lemma:fpds-transmit-bits}, any $\bx \sim \cD_{\theta}$ from the support of this distribution is in $\theta$. Note that the size of the adjacency matrix is encoded in any of the elements in the support of $\cD_{\theta}$, so we can tell if do not recover $\theta$. In this case, we will return the zero predictor.

This gives a PAC learning algorithm $\cA'$ to recover $f$ using $m=O\left( r^2(\log r+\log \frac{1}{\varepsilon})\right)$ samples from $\cD_{\theta}$ as described above, $\cA'(S)=\begin{cases}
    \cA(S)& \text{ if we recover all of $\theta$} \\
    \hat{f}\equiv 0 & \text{ if we don't recover $\theta$}.
\end{cases}$

Note that with $m':=4m+\log(1/\eps)=O\left( r^2(\log r+\log \frac{1}{\varepsilon})\right)$ 
samples from
$\cD_{f}:=\frac{1}{2}\cD_{\theta}+\frac{1}{2}\cD$ the algorithm $\cA'$ has the same output with probability at least $1-2\eps$, i.e. recovers $f$, and with probability $2\eps$ returns $0$. This implies that the square loss of $\cA'$ over any test distribution is at most $2\eps$, so the square loss over $\cD_f$ is at most $2\eps$ as well. We can rename here $\eps = \frac{\eps}{2}$. 

Adding the noise back, note that $\cA'$ does not use the labels, so the same proof shows that with $m =O\left( r^2(\log r+\log \frac{1}{\varepsilon})\right) $ samples from $\cD_f$, $\cA'$ learns $f$ exactly with probability $1-\eps$ and with probability $\eps$ outputs $0$. 

So with noise, $\cA'$ achieves population risk of at most $\E_{S\sim (\cD_f,f)^m}[L_{(\cD,f}(\cA'(S))]\le \eps (1-\eta) + (1-\eps) \eta\le \eps + \eta$. Furthermore, with the same number of samples $\cA'$ achieves expected square loss of at most $\eps+\eta$.

Note that $\cA'$ is a tractable \pac learning algorithm for learning $f$ on $\cD_f$.  More imporantly, $\cA'$ is an \fpds learning algorithm for learning $f$ on $\cD$ with samples from $\cD_f$, since $\cA'$ exactly learns $f$ with probability $1-\eps$ and returns $0$ otherwise.

We summarize the \fpds result in the following lemma and then focus on simulating this with GD on a non-standard network.

\begin{lemma}[\fpds Tractable Algorithm For Learning Any Circuit]
    Let $f: \{0,1\}^d \to \{0,1\} $ be a function that satisfies either (i) $f$ is a circuit of any depth and of size $s=\poly(d)$ OR (ii) $f$ is a fully connected feed forward ReLU network of size $s=\poly(d)$ bit precision $\frac{d}{2}$. In both cases (i) and (ii), there exists an \fpds algorithm $\cA$ such that for any $f:\{0,1\}^d\to \{0,1\}$ with label noise $\eta$ that satisfies either (i) or (ii) respectively, and for any $\cD$ there exists a distribution $\cD_f$ so that $\cA$ \fpds learns $f$ with error $\eps+\eta$ in $m=\poly(d, \frac{1}{\eps})$ and runtime $T=\poly(d, \frac{1}{\eps})$ with samples from $\cD_f$.
\end{lemma}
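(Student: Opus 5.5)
The lemma summarizes the tractable f-PDS algorithm just constructed, so the proof will simply assemble the pieces: encode $f$ into a finite set $\theta$ of atoms, transmit those atoms through the training distribution via \Cref{app:lemma:fpds-transmit-bits}, decode, and fall back to the zero predictor on failure. The key design choice is that the training distribution $\cD_f$ depends on $f$ but the algorithm $\cA$ does not. Concretely, I take $\cD_f := \frac12 \cD_\theta + \frac12 \cD$, where $\cD_\theta$ is the distribution from \Cref{app:lemma:fpds-transmit-bits} built from the encoding $\theta$ of $f$.

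\textbf{Step 1 (encoding).} For case (i), encode the $M\times M$ adjacency matrix of the circuit DAG with $M=\poly(d)$. Each entry becomes one atom of $\{0,1\}^d$: $O(\log d)$ bits for $M$ and $r$, $O(\log d)$ bits for the index $(i,j)$, a constant number of bits for the gate or edge type, and the remaining $\Omega(d)$ bits set to $1$ as a marker. For case (ii), the same scheme stores the depth, width, position and weight value of each parameter, using $d/2$ bits of precision per weight. In both cases $r=\poly(d)$ atoms suffice, and $f$ can be reconstructed from $\theta$ in $\poly(d)$ time. Because every atom carries $M$ (or $L,W$) and $r$, the decoder can tell from the count of distinct recovered positions whether recovery is complete.

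\textbf{Step 2 (decoding under the mixture).} Run \textbf{DECODE} on the samples from $\cD_f$. Only the samples that fall in the $\cD_\theta$ component matter. By a Chernoff bound, $m' = 4m + O(\log(1/\eps))$ samples from $\cD_f$ contain at least $m$ such samples with probability $1-\eps$, where $m = O(r^2(\log r + \log(1/\eps)))$.

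This is the main obstacle. Samples from $\cD$ can land on the encoding atoms or on the zero atom and so perturb the empirical frequencies. I would handle this in one of two ways:
\begin{enumerate}
\item filter samples by the marker bits and the consistency of the header, or
\item note that the $\cD$ component shifts each atom's probability by the same unknown amount, and estimate that shift from the zero atom, or alternatively widen the gap between the $\frac{2}{5r}$ and $\frac{3}{5r}$ levels.
\end{enumerate}
In the cleanest version, I would mark encoding atoms with a reserved header pattern so that they can be identified. With that, the analysis from \Cref{app:lemma:fpds-transmit-bits} gives exact recovery $\hat\theta=\theta$ with probability at least $1-2\eps$.

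\textbf{Step 3 (error bound).} The algorithm $\cA$ never uses labels, so label noise is irrelevant to recovery. On the recovery event it outputs exactly $f$, which incurs error $\eta$ on any $\cD$; otherwise it outputs $0$, which incurs error at most $1$. Hence
\[
\E[L_{\cD,f}(\cA(S))] \le (1-2\eps)\eta + 2\eps \le \eta + 2\eps .
\]
Rescaling $\eps$ gives the stated bound $\eps+\eta$, with sample size and runtime $\poly(r,d,\log(1/\eps)) = \poly(d,1/\eps)$.
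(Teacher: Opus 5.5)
Your outline follows the paper's proof step for step. You encode $f$ into $r=\poly(d)$ atoms, transmit them through $\cD_\theta$ via \Cref{app:lemma:fpds-transmit-bits}, decode, fall back to the zero predictor, note that labels are never read, and train on $\cD_f=\frac12\cD_\theta+\frac12\cD$ with $m'=4m+O(\log(1/\eps))$ samples.

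The gap is in your Step 2. You are right that the $\cD$-component contaminates what \textbf{DECODE} sees; the paper's proof simply asserts that $\cA'$ ``has the same output'' on the mixture. But none of your patches repairs this, because \fpds quantifies over \emph{every} $\cD$ while $\cA$ is fixed.
\begin{itemize}
\item No fixed reserved header or marker pattern is immune. $\cD$ may put mass on marker-bearing, header-consistent points, e.g.\ an atom asserting a wrong value for some adjacency entry. Samples carry no tag identifying their mixture component, so no filter can discard them.
\item The shift is not the same across atoms. Atom $x$ gains $\frac12\cD(x)$, which is arbitrary, so the zero atom tells you nothing about the others.
\item Widening the gap cannot help. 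The $r$ atoms share total mass at most $1$, so the decoding margin is $O(1/r)$, whereas $\cD$ can add up to $\frac12$ to a single atom.
\end{itemize}
So exact recovery with probability at least $1-2\eps$ does not follow as written.

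For this lemma the repair is simply not to mix. \fpds allows $\cD_f$ to depend on $f$ and $\cD$ arbitrarily and does not require $\cD$ to appear in training. On the success event your algorithm outputs exactly $f$, whose error is $\eta$ under every test distribution. Taking $\cD_f=\cD_\theta$, the analysis of \Cref{app:lemma:fpds-transmit-bits} applies verbatim with $m=O(r^2(\log r+\log(1/\eps)))$ samples. The Chernoff step disappears, and your Step 3 gives $(1-\eps)\eta+\eps\le\eta+\eps$.

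The mixture with $\cD$ is needed only later, in the gradient-descent simulation, whose guarantee controls the loss only on the training distribution. There the contamination must be handled genuinely. One way is to give $\cD$ weight a small constant times $1/r$ instead of $\frac12$, so its perturbation of any atom stays below the decoding margin. This costs a polynomial factor (of order $r$) when transferring the excess loss back to $\cD$.
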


Note that both in the case of circuits and networks, the runime of $\cA'$ is $\poly(d,\frac{1}{\eps})$

\paragraph{Simulating with Gradient Descent.} Now we will use the result on simulating \pac learning algorithm with non-standard gradient descent on neural networks to show that $\cA'$ can also be turned into a gradient descent based algorithm. We will use the result  Theorem 1a (for mini-batch stochastic GD) from \citep{abbe2021power} or equivalently Theorem $17$ and Remark $17$ in \citep{abbe2020poly} for the simulation. The theorem shows that for any dimension $d$, runtime $T$, and $q$ bits of randomness, and sample size $m$ of a PAC learning algorithm, there exists a set of neural nets $\cN$ that depend on $d$ and the runtime $T=\poly(d)$, with $p'$ parameters, a initialization with $q'=r+O(T\log b)$ bits of randomness, a poly time computable activation functions $\sigma$, and some stepsize $\gamma$, so that for training using $\rho<\min\{1/8b, 1/12\}$ approximate gradients over batches of size $b$ on the neural networks from this set $\cN$ with stepsize $\gamma$ the following claim holds. For any $\delta>0$ and $p'=\poly(d,m,r,T,\rho^{-1}, b,\delta^{-1})$ with $T'=O(mn/\delta)$ if any such PAC learning algorithm $\cA$ learns a distribution $ \cD$ with square loss $\eps+\eta$ then there is a neural net in this set that learns $ \cD$ with square loss at most $\eps+\eta+\delta$. Taking $\cA$ to be the previously described PAC learning algorithm for \fpds $\cA'$, there exists a fixed neural network $N\in \cN$ such that running mini batch SGD on this network we learn $f$ with respect to $\cD_f$ with square loss $\eps+\eta+\delta$. Taking $\eps = \frac{\eps}{4}$ and $\delta = \frac{\eps}{4}$, $b=1$, $\rho = \frac{1}{d}$, we have that for $p' = \poly(d,\frac{1}{\eps})$, $T'=O(\frac{r^2(\log r+\log \frac{1}{\eps}) d}{\eps})$, and $r'=O(\frac{r^2(\log r+\log \frac{1}{\eps}) d}{\eps})$, the network $N\in \cN$ achieves square loss at most $\eps/2+\eta$ on learning $f$ over $\cD_f$, i.e. $\E_{S\sim (\cD_f,f)}[\E_{(\cD_f,f)}(N(S)-f)^2]\le \eps/2+\eta$, where the expectation is over the initialization and the mini batches from $\cD_f$.

In the theorem that we used, the training and testing distributions are the same and equal to $\cD_f$, so we want to bound the error of the neural network $N$ on $\cD$ based on its error on $\cD$. Note that $\cD_f = \frac{1}{2} \cD+\frac{1}{2} \cD_{\theta}$ so we can write 
\begin{align*}
    \E_{S\sim (\cD_f,f)^m}[\E_{D_f}[(N(S)-f)^2)]] &= \frac{1}{2}\E_{S\sim (\cD_f,f)^m}[\E_{D}[(N(S)-f)^2]] \\
    &+\frac{1}{2}\E_{S\sim (\cD_{f},f)^m}[\E_{D_{\theta}}[(N(S)-f)^2]].
\end{align*}
Note that both of the errors are at least $\eta$, so $\E_{S\sim (\cD_{f},f)^m}[\E_{D_{\theta}}[(N(S)-f)^2]]\ge \eta$. Therefore, we have that $\E_{S\sim (\cD_f,f)^m}[\E_{\cD}[(N(S)-f)^2]]\le 2\E_{S\sim \cD_f^m}[\E_{D_f}[(N(S)-f)^2]] - \eta$ which implies that $N$ achieves square loss at most $\eps+\eta$ for leaning $f$ over $\cD$ if we take samples from $\cD_f$, i.e. 
\[
\E_{S\sim (\cD_f,f)^m}[\E_{\cD}[(N(S)-f)^2]]\le \eps+\eta.
\]

This implies that $0-1$ error is also bounded by $\eps+\eta$, which finishes the proof.


\paragraph{Remark} Note that we can extend the set of allowed functions to include all $f:\{0,1\}^d\to \{0,1\}$ functions that can be encoded on $\poly(d)$ bits of $\{0,1\}^d$. For example, this set includes all short description functions.

\end{proof}

\section{Learning parity functions}\label{app:parity}

\begin{proof}[Proof of \Cref{thrm:main:parity-efficiently-ddspac}]
     Let $S$ be the support of the parity. Recall that under random classification noise $y = \xi \chi_S (\bx)$ with $\xi ={\rm Rad} (1-\eta) $. Set the distribution $\cD' =  \frac{1}{2}{\rm Rad} (1-\frac{1}{d})^{\otimes d}+\frac{1}{2}\Unif(\{\pm 1\}^d)$ and denote $\Tilde \cD'$ the distribution of $(y,\bx)$ with $\bx \sim \cD'$. Let's compute the correlation of the label with each coordinate $x_i$, for $i=1,\ldots, d$: using that $\E_{{\rm Rad} (1-\frac{1}{d})}[x_i] = 1 - 2/d$,
    \[
    \E_{\Tilde \cD'} (y x_i)  = (1 - 2\eta)\E_{\cD'}(\chi_S(\bx) x_i) = \begin{cases}
        \frac{1}{2}(1-2\eta)(1-\frac{2}{d})^{|S|+1} & \text{ if } i\notin S\\
        \frac{1}{2}(1-2\eta)(1-\frac{2}{d})^{|S|-1} & \text{ if } i \in S. \\
    \end{cases}
    \]
    Then, $\Delta = \frac{1}{2}(1-2\eta)(1-\frac{2}{d})^{|S|-1} (1- (1-\frac{2}{d})^2)$ is the difference in the correlation between $i\in S$ and $i\notin S$. Note that $\Delta = \Theta ((1-2\eta)/d)$.
    
    Let $\hat T_i = \frac{1}{m}\sum_{j=1}^{m} y^j x_i^j$ be the empirical estimate of $\E_{\Tilde \cD'}[y x_i]$. We estimate the support $\hat{S}$ of the parity by taking the set of $i \in [d]$ such that $\hat T_i > \frac{1}{2}(\min_i \hat T_i + \max_i \hat T_i)$. By Hoeffding and union bound, 
    \begin{align*}
        \Pr[\sup_{i\in [d]}|\hat T_i - \E_{\Tilde \cD'}(yx_i)|>t]\le 2d \exp(-2mt^2).
    \end{align*}
    Taking $t=\frac{\Delta}{8}$, and $m=\frac{64}{\Delta^2} \log \frac{2d}{\eps}$, with probability $1-\eps$ we have that for all $i$ $|\hat T_i - \E_{\Tilde \cD'}(yx_i)|<\frac{\Delta}{8}$ so in particular $\hat T_i > \frac{1}{2}(\min_i \hat T_i + \max_i \hat T_i)$ if and only if $i \in S$. Thus we recover the parity function with number of sample $m = O(\frac{d^2 \log \frac{2d}{\eps}}{(1-2\eta)^2})$. Therefore, for the $0-1$ loss it holds that it is at most $\eps+\eta$ in this case.
\end{proof}

    

\begin{proof}[Proof of \Cref{thm:main:parity-GD-ddspac}]
We consider a parity function $\chi_S (\bx)$ on the hypercube, with $ k := |S| = d/2$. Other $k$ will follow similarly. The goal is to learn this function with a two-layer neural network with activation $\ReLu (x) = (x)_+$. 

We consider a two-layer neural network as follow:
\[
f_{\NN} (\bx ;\btheta) =  \sum_{j \in[N] } a_j \ReLu ( \< \bw_j,\bx\> + b_j),
\]
where $\btheta = (\ba,\bW,\bb) \in \R^{N(d+2)}$ are the parameters of the network. We will choose as initialization $a^0_j\in\{\pm 1\}$ and $b_j^0 \in \{-d-1,-d,\dots, d, d+1\}$ such that all combinations of $(a^0,b^0)$ are covered. 
We can either use $N$ sufficiently large (but polynomial in $d$) or and initialize $a_j^0$ and $b_j^0$ iid over these sets but for simplicity we take exactly $(4d+6)$ neurons, one for each combination $(a^0,b^0)$. Note that with this choice, 
\[
f_{\NN} (\bx ;\btheta^0) = 0. 
\]
We will consider a layerwise training procedure, where we first train the $\bw_j$'s with one gradient step with step size $s$ and $\ell_1$ regularization with parameter $\lambda_1$, followed by training the $a_j$'s with (online) SGD with $\ell_2$ regularization with parameter $\lambda_2$. Layerwise training is standard in the theory of neural network learning literature~\citep{abbe2023sgd,barak2022hidden,bietti2022learning,dandi2023two}. 

We will use the following two lemmas in our proof. First, we analyze the one step population gradient on the first layer weights.

\begin{lemma}[First Layer Population Gradient Update]\label{app:lemma:first-layer-population-GD}
    A one step population gradient update on the first layer weights $\bW$ with $\ell_1$-regularization $\lambda$ and gradient step $s$ updates the weights as
    \[
\bw_j^1 = s \cdot \rho_\lambda \left( (1 - 2\eta) \E_{\cD'}[\chi_S (\bx)\bx \ind[b_j\ge 0]] \right),
\]
where $\rho (x;\lambda) = \sign (x) ( |x| - \lambda)_+$ denotes soft-thresholding and applies component wise.
\end{lemma}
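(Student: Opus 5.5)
Since every neuron starts at $\bw^0_j=\bzero$ and the $(a^0,b^0)$ pairs are paired symmetrically, the network output at initialization is $f_{\NN}(\bx;\btheta^0)=0$. The plan is therefore to write out the one-step population update directly. The square-loss residual at initialization is simply $-y$. For neuron $j$, the gradient of $\tfrac12\E_{\Tilde\cD'}[(y-f_{\NN})^2]$ with respect to $\bw_j$, evaluated at $\btheta^0$, is
\[
-\E_{\Tilde \cD'}\big[y\, a_j\, \ReLu'(\<\bw_j^0,\bx\>+b_j)\,\bx\big] = -a_j\,\ind[b_j\ge 0]\,\E_{\Tilde\cD'}[y\,\bx],
\]
because $\<\bw_j^0,\bx\>=0$. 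We adopt the convention $\ReLu'(0)=1$, so that neurons with $b_j=0$ are counted; this should be stated explicitly, since otherwise the indicator would read $b_j>0$. The indicator does not depend on $\bx$ and can be pulled out of the expectation. It is kept inside in the lemma's statement, which is harmless.

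The second step is to remove the noise. Under random classification noise $y=\xi\chi_S(\bx)$, where $\xi$ is independent of $\bx$ and $\E\xi=1-2\eta$. This gives $\E_{\Tilde\cD'}[y\bx]=(1-2\eta)\E_{\cD'}[\chi_S(\bx)\bx]$, exactly as in the proof of \Cref{thrm:main:parity-efficiently-ddspac}.

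The third step is the $\ell_1$-regularized step. I would interpret a gradient step with $\ell_1$ penalty as a proximal step from $\bw^0=\bzero$, which amounts to componentwise soft-thresholding of the gradient step. With the normalization that makes the stated formula hold, we take the step size $s$ outside and the threshold $\lambda$ applied to the gradient magnitude, i.e. $\bw_j^1 = s\,\rho_\lambda(-\nabla_{\bw_j})$ in the appropriate scaling. This yields $\bw_j^1 = s\,\rho_\lambda\big(a_j(1-2\eta)\E_{\cD'}[\chi_S(\bx)\bx\,\ind[b_j\ge0]]\big)$.

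The last step is to handle the sign $a_j$. Since soft-thresholding is odd, $\rho_\lambda(a_j v)=a_j\rho_\lambda(v)$ for $a_j\in\{\pm1\}$. The lemma's displayed formula corresponds to $a_j=+1$; for $a_j=-1$ we obtain the negated vector, and I would note this explicitly or absorb it into the stated formula. The main obstacle is notational rather than mathematical: one has to fix the subgradient of $\ReLu$ at $0$, the prox/scaling convention for $\ell_1$, and the handling of the factor $a_j$ consistently with the statement. Once these are fixed, the computation is immediate.
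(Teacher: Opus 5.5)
Your proposal is correct and follows essentially the same route as the paper: use $\bw_j^0=\bzero$ (so $f_{\NN}(\bx;\btheta^0)=0$) to reduce the gradient to $-a_j\ind[b_j\ge 0]\E_{\Tilde\cD'}[y\bx]$, factor out $(1-2\eta)$ from the independent label noise, and realize the $\ell_1$ step as a proximal soft-thresholding step from zero. Your bookkeeping is tighter than the paper's, which writes $\mathrm{prox}_{s\lambda\|\cdot\|}(\bw_j^0-s\nabla\cL)=s\rho_\lambda(\bw_j^0-s\nabla\cL)$ and drops the sign, the factor $2$, and $a_j$; your point that the correct formula carries $a_j$ (via oddness of $\rho_\lambda$) matches how the lemma is used afterwards, $\overline{\bw}_j^1=a_j^0\bw_S\ind[b_j\ge 0]$.
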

\begin{proof}[Proof of \Cref{app:lemma:first-layer-population-GD}]
    For the population gradient descent with stepsize $s$ and reglarization $\lambda$ the weight update is 
    \[
    \bw_j^1 = \text{prox}_{s\lambda \|\cdot \|}\left(\bw_j^0 - s \nabla_{\bw_j} \cL \right) = s\rho_{\lambda}\left(\bw_j^0-s\nabla_{\bw_j}\cL\right).
    \]
    Since we are taking the population gradient, $\cL = \E_{\tilde{D}'}\left[\left(\chi_S(\bx) - f_{\NN}(\bx,\btheta)\right)^2\right]$ so we have that \[
    \nabla_{\bw_j} \cL = (1-2\eta) \E_{\cD'}\left[\chi_S(\bx) \bx \ind[b_j\ge 0] \right].
    \]
    We initialized $\bw_j^0=0$ so this finished the proof of the lemma.
\end{proof}

Second, we recall the following useful upper bound on online SGD on a ridge regularized linear regression with features $\phi (\bx) \in \R^p$. Denote the loss
\[
\cL (\ba) = \frac{1}{2} \E_{\bx} \left[\left(\E[y|\bx] - \<\ba, \phi (\bx) \> \right)^2\right], \qquad \cL_{\lambda} (\ba) = \cL (\ba)+ \frac{\lambda}{2} \| \ba \|_2^2.
\]
We run online SGD
\begin{equation}\label{eq:SGD-second-layer}
\ba^{t+1} = (1 - \lambda s)\ba^t + s (y^t - \<\ba^t,\phi (\bx^t) \>) \phi (\bx^t).
\end{equation}

\begin{lemma}[Online SGD on ridge regression \citep{abbe2023sgd}]\label{lemma:app:SGD-second-layer}
    There exists a universal constant $C>0$ such that the following holds. Suppose there exists $B_y,B_\phi \geq1$ such that $\| \varphi (\bx) \|_2 \leq B_\phi$ and $|y| \leq B_y$, then for any $\lambda \leq 1$ and $\ba_{\cert} \in \R^p$, we have
    \[
    \cL (\ba^t) \leq \cL_\lambda (\ba_{\cert} ) + C B_\phi^2 \left\{ (1 - \lambda s)^{2t} \left( \| \ba^0 \|_2^2 + \frac{B_y^2}{\lambda} \right) + \log\left( \frac{t}{\delta} \right) \frac{s B_\phi^4 B_y^2}{\lambda^2}\right\}
    \]
    with probability at least $1 - \delta$.
\end{lemma}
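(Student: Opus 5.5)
The plan is the standard strongly-convex SGD analysis, run around the ridge minimizer rather than around $\ba_{\cert}$. Let $\bH=\E[\phi(\bx)\phi(\bx)^\top]$, so that $\|\bH\|_{op}\le B_\phi^2$. Then $\cL_\lambda$ is a quadratic with Hessian $\bH+\lambda \bI$. Let $\ba_\lambda^*$ be its minimizer.

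\textbf{Reduction to a distance bound.} Since $\cL\le\cL_\lambda$ and $\cL_\lambda$ is quadratic,
\[
\cL(\ba^t)\le \cL_\lambda(\ba_\lambda^*)+\tfrac12(\ba^t-\ba_\lambda^*)^\top(\bH+\lambda\bI)(\ba^t-\ba_\lambda^*)\le \cL_\lambda(\ba_{\cert})+B_\phi^2\|\ba^t-\ba_\lambda^*\|_2^2,
\]
using $\lambda\le 1\le B_\phi^2$. So it suffices to bound $\|\bD_t\|_2^2$ with $\bD_t:=\ba^t-\ba_\lambda^*$, and this is where the prefactor $CB_\phi^2$ comes from.

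\textbf{A priori bounds.} Comparing with $\ba=0$ gives $\frac{\lambda}{2}\|\ba_\lambda^*\|^2\le\cL_\lambda(0)\le B_y^2/2$, hence $\|\ba_\lambda^*\|^2\le B_y^2/\lambda$. This accounts for the $B_y^2/\lambda$ term added to $\|\ba^0\|^2$, since $\|\bD_0\|^2\le 2\|\ba^0\|^2+2B_y^2/\lambda$.

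I would also use a stepsize restriction $s(B_\phi^2+\lambda)\le 1$. When $sB_\phi^2\gtrsim 1$, the term $\log(t/\delta)\,sB_\phi^4B_y^2/\lambda^2$ already exceeds the trivial scale of any reasonable bound, so this case can be handled separately. Under the restriction, $\bI-s(\phi\phi^\top+\lambda\bI)$ has operator norm at most $1-\lambda s$. Induction on \eqref{eq:SGD-second-layer} then gives $\|\ba^t\|\le\max(\|\ba^0\|,B_yB_\phi/\lambda)$. Consequently the stochastic gradient $\bg_t$ at $\ba^t$, and the noise $\bxi_t:=\bg_t(\ba_\lambda^*)$, both have norm $O(B_\phi^2B_y/\lambda)$ up to the $\|\ba^0\|$ contribution. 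Here $\bxi_t$ has mean zero because $\ba_\lambda^*$ is stationary.

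\textbf{Recursion and concentration.} The update can be written as
\[
\bD_{t+1}=(\bI-s(\phi_t\phi_t^\top+\lambda\bI))\bD_t-s\,\bxi_t .
\]
Squaring gives $\|\bD_{t+1}\|^2\le(1-\lambda s)\|\bD_t\|^2-2s\langle \bD_t',\bxi_t\rangle+s^2\|\bxi_t\|^2$, where $\bD_t'$ is the contracted vector. Unrolling yields three pieces:
\begin{itemize}
\item a deterministic decay term $(1-\lambda s)^{t}\|\bD_0\|^2$, which is converted to the stated form;
\item a variance term $\sum_j(1-\lambda s)^{t-j}s^2\|\bxi_j\|^2\lesssim sB_\phi^4B_y^2/\lambda^3$;
\item the martingale $M_t=\sum_j(1-\lambda s)^{t-j}\,2s\langle\bD_j',\bxi_j\rangle$.
\end{itemize}

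\textbf{Main obstacle.} The hard step is controlling $M_t$ with the right $\log(t/\delta)$ dependence and the right powers of $\lambda$. Its increments depend on $\|\bD_j\|$, which is itself the quantity being bounded. I would first try Freedman's inequality, whose conditional variance is $\lesssim s^2\|\bD_j\|^2B_\phi^4B_y^2/\lambda^2$. Combining it with the a priori iterate bound and a peeling/self-bounding argument (using $2ab\le a^2/2+2b^2$ to absorb $\max_j\|\bD_j\|^2$ into the left-hand side), then a union bound over the $t$ time indices, should give $M_t\lesssim\log(t/\delta)\,sB_\phi^4B_y^2/\lambda^2$ scaled appropriately. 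I expect the exponent bookkeeping to be delicate at this point. Multiplying the resulting bound on $\|\bD_t\|^2$ by $B_\phi^2$ gives the claimed inequality with a universal $C$.
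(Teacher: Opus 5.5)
The paper gives no proof of this lemma; it is imported from \citet{abbe2023sgd}. Your argument therefore has to stand on its own, and it has two genuine gaps.

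First, the regime $sB_\phi^2\gtrsim 1$ cannot be ``handled separately'' by a trivial bound. In that regime the iterates can diverge. For example, take $p=1$, $\phi\equiv B_\phi=2$, $y\equiv 0$, and $\lambda=s=1$. The update is $a^{t+1}=-4a^t$, so $\cL(a^t)=2\cdot 16^t(a^0)^2$. With $\ba_{\cert}=0$, the right-hand side is only $64C\log(t/\delta)$. So the inequality as transcribed is false without a step-size restriction. You must assume one, e.g.\ $s(B_\phi^2+\lambda)\le 1$, as a hypothesis rather than a case split. This holds in the paper's application, where $s_2B_\phi^2\ll 1$. Under it, your reduction $\cL(\ba^t)\le\cL_\lambda(\ba_{\cert})+B_\phi^2\|D_t\|_2^2$ with $D_t=\ba^t-\ba^*_\lambda$ is correct, and so is $\|\ba^*_\lambda\|_2^2\le B_y^2/\lambda$.

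Second, the concentration step fails as you set it up.

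\emph{The cross term is not a martingale difference.} Let $A_t=I-s(\phi_t\phi_t^\top+\lambda I)$. Both $A_t$ and $\xi_t$ depend on the same fresh sample, so the conditional mean of $\langle A_tD_t,\xi_t\rangle$ is $-s\,D_t^\top\E[\phi_t\phi_t^\top\xi_t]$. This is nonzero in general, so your $M_t$ is not a martingale and Freedman's inequality does not apply.

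\emph{Expanding around $D_t$ costs you the rate.} That expansion makes $\langle D_t,\xi_t\rangle$ a genuine martingale difference. But it leaves the biased term $2s^2\langle(\phi_t\phi_t^\top+\lambda I)D_t,\xi_t\rangle$, which must be paid for out of the contraction. This is how you end up with $(1-\lambda s)^t\|D_0\|^2$, and no universal constant converts that into the stated $(1-\lambda s)^{2t}$.

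\emph{The self-bounding step does not close.} You cannot absorb $\max_j\|D_j\|^2$ into $\|D_t\|^2$, because early iterates need not be small.

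\emph{The power of $\lambda$ is off.} Your variance term $sB_\phi^4B_y^2/\lambda^3$ is a factor $1/\lambda$ too large. Using $\|\ba^*_\lambda\|_2\le B_y/\sqrt{\lambda}$ instead gives $\|\xi_t\|_2\le 3B_\phi^2B_y/\sqrt{\lambda}$, which fixes this.

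All of these problems disappear if you unroll the linear recursion directly. Write $D_t=A_{t-1}\cdots A_0D_0-s\sum_{j<t}Z_j$ with $Z_j=A_{t-1}\cdots A_{j+1}\xi_j$.

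\emph{Bias term.} Since $\|A_j\|_{\mathrm{op}}\le 1-\lambda s$, the first term contributes at most $(1-\lambda s)^{2t}\|D_0\|_2^2$, which is exactly the stated exponent.

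\emph{Noise term.} Each $Z_j$ depends only on samples $j,\dots,t-1$ and has zero mean given samples $j+1,\dots,t-1$. So $\sum_j Z_j$, summed from $j=t-1$ down to $0$, is a martingale for the reversed filtration. Its increments satisfy $\|Z_j\|_2\le(1-\lambda s)^{t-1-j}\cdot 3B_\phi^2B_y/\sqrt{\lambda}$. A Hilbert-space Azuma (Pinelis) inequality then gives $s^2\|\sum_jZ_j\|_2^2\lesssim\log(2/\delta)\,sB_\phi^4B_y^2/\lambda^2$. This bound does not depend on the iterates, so no self-bounding is needed.

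Combining this with $\|D_0\|_2^2\le 2\|\ba^0\|_2^2+2B_y^2/\lambda$ and your reduction gives the claim.
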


\paragraph{Proof Outline.} First, we will show how the parity function can be represented using a 2-layer network of width $4d+6$. Then we will show that with layerwise training of the first layer, we can select the regularization and stepsize so that one step population gradient picks out the right features. Then,  using concentration in $\| \cdot \|_{\infty}$ norm of the gradient with $m$ samples, we will show that the empirical first layer weights are close to one step population gradient. Finally, we will show that the second layer weights learned by the SGD are close to the ones needed for representing the parity function using the previously established features. Crucially, this will also hold in $\ell_{\infty}$ norm, which will allow us to establish the result for any test distribution $\cD$.

\paragraph{First Layer Training with Population Gradient.} Denote $\bw_S$ the vector with $1$ on the support $S$ and $0$ otherwise, and $g_S : \R \to \R$ a function such that $\chi_S (\bx) = g_S (\< \bw_S,\bx\>)$ (i.e., $g_S (k - 2j) = (-1)^{j}$). For simplicity, we assume below that $k$ is odd. Note that we can take
\begin{equation}\label{eq:certificate}
\begin{aligned}
g_S (x) =&~ - \ReLu (x + k+1) + \ReLu ( x)+ 2 \sum_{j=0}^{(k-1)/2} (-1)^{j} \cdot \ReLu ( x +k - 2j)\\
&~ + \ReLu (-x + k+1) - \ReLu ( -x)- 2 \sum_{j=0}^{(k-1)/2} (-1)^{j} \cdot \ReLu ( -x +k - 2j).
\end{aligned}
\end{equation}

Let $\cD_1= {\rm Rad} (1-1/d)^{\otimes d}$, i.e. each coordinate is $1$ with probability $1 - 1/d$ and $-1$ with probability $1/d$. Let $\cD_2$ be defined as sampling of $\bx$ in the following way: Draw $k \sim \Unif( \{ d, d -2, d-4, \ldots , -d\})$, and then sample $x $ conditioned on ${\boldsymbol 1}^T \bx = k$ uniform on the hypercube (i.e. it is a reweighted distribution on the sliced hypercube, where each slice has equal distribution). Then take 
\[
\cD' = \frac{1}{2} \cD_1 + \frac{1}{2} \cD_2.
\]
Therefore, we have that
\[
\E_{\Tilde \cD'} [ y x_i] = (1 - 2\eta)\E_{\cD'}[\chi_S (\bx) x_i] = \frac{1 - 2\eta}{2}\left(\E_{\cD_1} [\chi_S (\bx) x_i  ] +  \E_{\cD_2} [ \chi_S(\bx) x_i] \right).
\]
Similarly as above, we have
\[
\E_{\cD_1} [ \chi_S (\bx) x_i ]= \begin{cases}
    (1 - 2/d)^{|S|+1} & \text{if $i \not\in S$}, \\
    (1 - 2/d)^{|S|-1} & \text{If $i \in S$}.
\end{cases}
\]
while the expectation with respect to the second distribution yields 
\[
\E_{\cD_2} [ \chi_S (\bx) x_i ]= \begin{cases}
    0 & \text{if $|S|$ is even}, \\
    \frac{1}{|S|-1} & \text{if $|S|$ odd and $i \in S$}\\
    \frac{1}{|S|+1} & \text{if $|S|$ odd and $i \notin S$}.\\
\end{cases}
\]

From \Cref{app:lemma:first-layer-population-GD}, a one step population gradient  $\ell_1$-regularization on the first layer weights with regularization parameter $\lambda_1$ gives
\[
\bw_j^1 = s_1 \cdot \rho_{\lambda_1} \left( (1 - 2\eta) \E_{\cD'}[\chi_S (\bx)\bx \ind[b_j\ge 0]] \right).
\]

From the computation above we can choose the regularization parameter $\lambda_1$ so that $\lambda_1$ is in between these values for $x_i \in S$ and $x_i\notin S$:
\[
\lambda_1= \begin{cases}
   (1 - 2\eta) (1-2/d)^{|S|} & \text{if $|S|$ is even}, \\
    (1 - 2\eta) (1-2/d)^{|S|}+\frac{1}{|S|} & \text{if $|S|$ odd}.\\
\end{cases}
\]
By the choice of $\lambda_1$ as above and stepsize $s_1$ as 
\[
s_1 =\begin{cases}
    [(1 - 2\eta)(1-\frac{2}{d})^{|S|-1}/d]^{-1}& \text{ if $|S|$ is even} \\
     [(1 - 2\eta)((1-\frac{2}{d})^{|S|-1}+\frac{1}{|S|-1})/d]^{-1} & \text{ if $|S|$ is odd}.
\end{cases} 
\](note that we fix the size the parameter to be $|S| = d/2$), we get the after one step of population gradient descent, we have 
\[
\overline{\bw}_j^1 = a_j^0 \bw_S \ind[b_j\ge 0].
\]
\paragraph{First Layer Empirical Gradient Update.} Now, let's consider the empirical gradient update, we have by Hoeffding inequality and union bound, concentration in $\|\cdot \|_\infty$ of the gradient with $m$ samples: it holds with probability at least $1-\delta$ (where $M$ is the width of the network)
\[
\sup_{j \in [M]}\| \bw_j^1-\overline{\bw}_j^1 \|_{\infty} \le \sqrt{\frac{1}{m} \log \frac{d M}{\delta}}.
\]
We will denote $\hat \bW$ the first layer weights after this first (empirical) gradient step, and $\overline{\bW}$ the first layer weights after this step with population gradient. In particular, after one population gradient step, we get the $2d +4$ neurons:
\[
\ReLu ( \<\bw_S, \bx\> + b_j ) ,\qquad \ReLu (- \<\bw_S, \bx\> + b_j ) , \qquad b_j \in \{0, 1 , 2 \ldots, d+1\}.
\]

\paragraph{Second Layer Empirical Gradient Update.} Let's run SGD on the second layer weights $a_j$'s with ridge regularization as in \eqref{eq:SGD-second-layer}. 
Let's apply Lemma \ref{lemma:app:SGD-second-layer} to our setting. We have $p = 2d + 4$. There exists a universal constant $C>0$ such that
\[
B_y = 1, \qquad B_\phi = C d , \qquad \|\ba^0 \|_2^2 \leq Cd.
\]
Consider $\ba_{\cert} = (1-2\eta) \ba^* $ with $\ba^*$ as defined in \eqref{eq:certificate}: we have $f_{NN}(\bx, \ba_{\cert}, \overline{\bW}) = (1 -2\eta) \chi_S(\bx)$. It follows that 
\begin{align*}
    \| f_{\NN}( \ba_{\cert} , \hat{\bW}) - (1 -2\eta)\chi_S\|_{L_2}^{2} &=  \| f_{\NN}(\ba_{\cert} , \hat{\bW}) - f_{\NN}(\ba_{\cert} , \overline{\bW})\|_{L_2}^{2} \\
    &\le M^2 \|\ba_{\cert} \|_{\infty}^2  \| \sigma(\langle \overline{\bw}_j,\cdot\rangle+b_j)-\sigma(\langle \hat{\bw_j},\cdot \rangle+b_j) \|_\infty^2\\
    &\le C d^2 M^2  \sup_{j \in [M]} \| \hat\bw_j-\overline{\bw}_j\|_{\infty}^2 .
\end{align*}
Thus we get
\[
\cL_{\lambda_2} ( \ba_{\cert}) \leq C \frac{d^2 M^2}{m} \log \frac{d M}{\delta} +   C \lambda_2 d.
\]
We choose $\lambda_2 = \frac{(1 - 2\eta)^2\eps}{C d}$, $s_2 = \frac{(1 - 2\eta)^4\eps^2}{Cd^8 \log (d/((1 - 2\eta)^2\eps\delta))}$ and $T = C\frac{d^9 \log^2 (d/((1 - 2\eta)^2\eps\delta))}{\eps^3}$ such that with probability at least $1 - \delta$,
\[
\| (1-2\eta) \chi_S  - f_{\NN} (\ba^{T}; \hat\bW) \|_{L^2}^2 + \lambda_2 \| \ba^{T} \|_2^2 \leq (1 -2\eta)^2\eps,
\]
with steps and sample complexity scaling as
\[
m, T = O \left(  \frac{d^9 \log^2 \left( \frac{d}{\eps\delta(1-2\eta)^2}\right)}{\eps^3 (1-2\eta)^6}\right).
\]

\paragraph{Final Bound on the Infinity Norm.} Let's show that this implies that 
\[
\| \chi_S  - \sign (f_{\NN} (\ba^{T}; \hat\bW)) \|_{L^\infty} \leq \eps,
\]
so that it implies a test error bounded by $\eps + \eta$ for any test distribution. First,
\begin{align*}
    \| f_{\NN}(\bx, \hat{\ba_T}, \hat{\bW})-f_{\NN}(\bx, \hat{\ba_T}, \overline{\bW})| \le d M \| \hat{\ba}_T\|_{\infty} \sup_j \|\hat{\bw}_j -\overline{\bw}_j \|_{\infty}.
\end{align*}
Furthermore
\[
\begin{aligned}
  &~|  f_{\NN}(\bx,\hat{\ba}_{T}, \overline{\bW}) - (1 - 2\eta)\chi_S (\bx) | \\
  =&~ \sum_{k=-|S|}^{k=|S|} \Pr(\<\bw_S,\bx\>=k) \left| \sum_{j=1}^{M} \hat{a}_j(k+b_j)_{+}- (1 - 2\eta)\chi_S(k)\right|^2.
\end{aligned}
\]
Thus, 
\begin{align*}
    \| f_{NN}(\hat{\ba}_T,\overline{\bW})- (1 - 2\eta)\chi_S(\bx)\|_{\infty} \le \frac{\| f_{\NN}(\hat{\ba}, \bw^*)-(1 - 2\eta)\chi_S\|_{L^2}}{\sqrt{\min_{k=-|S|,\dots,|S|}P(\<\bw_S,\bx\>=k)}}.
\end{align*}
Note that $P(\<\bw_S,\bx\>=k)\ge \cD_2(\<\bw_S,\bx\>=k) = \frac{1}{d}\sum_{i=-d}^{d}\Pr(\<\bw_S,\bx\>=k|\<{\boldsymbol 1}, \bx\>=i) = \frac{1}{d} \sum_{i=0}^{d} \Pr(k+1 \text{ in first $|S|$ coordinattes}|\text{$i$ total $+1$ coordinates})=\frac{1}{n} \sum_{i=k}^{d-|S|+k} \frac{{|S| \choose k}{d-|S|\choose i-k}}{{d\choose i}} = \frac{1}{d}\frac{d+1}{|S|+1}$. This implies that $P(\<\bw_S,\bx\>=k)\ge \frac{1}{|S|+1}\ge \frac{1}{d+1}$. Therefore, we have that 
\begin{align*}
    \| f_{\NN}(\bx,\hat{\ba}_T,\overline{\bW})- (1 - 2\eta)\chi_S(\bx)\|_{\infty} \le \sqrt{d+1}\| f_{\NN}(\hat{\ba}, \overline{\bW})-\chi_S\|_{L^2}.
\end{align*}
Note that
\[
\begin{aligned}
\| f_{\NN}(\hat{\ba}, \overline{\bW})-\chi_S\|_{L^2} \leq&~ \| f_{\NN}(\hat{\ba}, \hat{\bW})-\chi_S\|_{L^2} + \| f_{\NN}(\hat{\ba}, \hat{\bW})-f_{\NN}(\hat{\ba}, \overline{\bW})\|_{L^2}.
\end{aligned}
\]

Combining the above bounds, we obtain 

\begin{align*}
   &~ \| f_{\NN}(\hat{\ba}_T,\hat{\bW})-(1- \eta)\chi_S\|_{\infty}\\
   & \le \sqrt{d+1} \left( \| f_{\NN}(\hat{\ba}, \overline{\bW})-f_{\NN}(\hat{\ba}, \hat{\bW})\|_{L^2}+ \| f_{\NN}(\hat{\ba}, \overline{\bW})-\chi_S\|_{L^2} \right)\\
    &+  d M\|\hat{\ba}_T\|_{\infty}\sup_j \|\hat{\bw}_j -\bw_j^* \|_{\infty}\\
    &\le \sqrt{d+1} \left(dM \|\hat{\ba}_T\|_{\infty}\sup_j \|\hat{\bw}_j -\overline{\bw}_j \|_{\infty} +\| f_{\NN}(\ba_{\cert}, \hat{\bW})-f_{NN}(\ba_{\cert}, \overline{\bW})\|_{L^2}+\frac{1}{T^c}\right) \\
    &+\|\hat{\ba}_T\|_{\infty}\sup_j \|\hat{\bw}_j -\overline{\bw}_j \|_{\infty} d M.
\end{align*}
    Therefore, we have 
    \begin{align*}
        \| f_{NN}(\bx,\hat{\ba}_T,\hat{\bw})-\chi_S(\bx)\|_{\infty}& \le (1+2\sqrt{d+1}) \|\hat{\ba}_T\|_{\infty}\sup_j \|\hat{\bw}_j -\bw_j^* \|_{\infty} d M+\frac{\sqrt{d+1}}{T^c}\\
        &\le (1+2\sqrt{d+1}) \|\hat{\ba}_T\|_{\infty} d M\sqrt{\frac{1}{m}\log \frac{M}{\delta}}+\frac{\sqrt{d+1}}{T^c}.
    \end{align*}
We can take $m = \poly(d,\frac{1}{\eps}, \log \frac{1}{\delta})$ and $T=\poly(d,\frac{1}{\eps})$ (since $M=4d+6$) so that with probability at least $1-\delta$

 \begin{align*}
     \| f_{NN}(\bx,\hat{\ba}_T,\hat{\bw})-\chi_S(\bx)\|_{\infty}& \le \eps.
 \end{align*}

 This implies that the $0-1$ loss is bounded by $\eps$ with probability $\delta$, so we can take $\delta= \eps$ so that in expectation it's bounded by $2\eps$.
\end{proof}

We show that noisy parities are \fpds learnable with standard analyzable gradient descent on a neural network, namely using layerwise training procedure where we first train the bottom layer while holding the top layer fixed and then train the top layer while holding the bottom layer fixed. 

\begin{algorithm}[Layerwise SGD]\label{alg:app:layerwise-GD}
    We perform stochastic gradient descent (SGD) on a two-layer network
\[
f_{\NN} (\bx ;\btheta) =  \sum_{j \in[N] } a_j \ReLu ( \< \bw_j,\bx\> + b_j),
\]
where $\btheta = (\ba,\bW,\bb) \in \R^{N(d+2)}$ are the parameters of the network. We do layerwise training, where we first train the bottom layer while holding the top layer fixed and then train the top layer while holding the bottom layer fixed, with square loss and with stepsize $s$. We initialize hyperparameters as $a^0_j \in \{\pm 1\}$, $\bw^0_j = \bzero$, and $b^0_j \in  \{-d-1,-d, \ldots,d,d+1\}$, $s=2$, and take $N=(4d+6)$ neurons so that we get one of each combinations $(a^0,b^0)$.
\end{algorithm}

\begin{theorem}[Noisy Parities are \fpds Learnable with Analyzable GD]\label{thm:app:parity-GD-fpds}
    The parity class $\parityd$ with label noise $\eta<\frac{1}{2}$ is \fpds learnable with \Cref{alg:app:layerwise-GD}. Namely, if we train with $\cD' = \frac{1}{2} \Unif(\{\pm 1\}^{d})+\frac{1}{2}\cD_S$, where $\cD_S$ is uniform over the hypercube outside the support and on the support, all $x_i$ are $+1$ with probability $\frac{1}{2}$ and $-1$ with probability $\frac{1}{2}$, on  $m$ samples from $\cD'$ using $T$ steps of \Cref{alg:app:layerwise-GD} with $m, T = O \left(  \frac{d^9 \log^2(\frac{d}{\varepsilon^2(1-2\eta)^2})}{\varepsilon^3(1-2\eta)^6}\right)$, we have 
    that $\E_{S'\sim (\cD',f)^m}[L_{ \cD,f}(f_{NN}(x;\theta^{(T)}))]<\eps+\eta$. 
\end{theorem}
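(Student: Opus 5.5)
The plan follows the same route as the proof of \Cref{thm:main:parity-GD-ddspac}: one population-gradient step on the first layer extracts the direction $\bw_S$, concentration transfers this to the empirical step, and online ridge SGD on the second layer (\Cref{lemma:app:SGD-second-layer}) fits the certificate \eqref{eq:certificate}. The difference is that here the support is revealed by the $f$-dependent component $\cD_S$ rather than by a bias, and there is no $\ell_1$ thresholding.

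\textbf{First-layer step.} Since $\bw_j^0=\bzero$ and $f_{\NN}(\cdot;\btheta^0)\equiv 0$, the argument of \Cref{app:lemma:first-layer-population-GD} with $\lambda=0$ gives $\overline{\bw}_j^1 = s\,a_j^0\,\ind[b_j\ge0]\,(1-2\eta)\E_{\cD'}[\chi_S(\bx)\bx]$. Under $\Unif$ every correlation $\E[\chi_S x_i]$ vanishes once $|S|\ge2$. Under $\cD_S$, coordinates $i\notin S$ are independent uniform signs, so they still give $0$.

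For $i\in S$ the value depends on how $\cD_S$ is read. I would read the statement as: with probability $\tfrac12$ the support coordinates are all $+1$, and otherwise they are uniform. Then $\E_{\cD_S}[\chi_S x_i]=\tfrac12$ for every $i\in S$, independently of the parity of $k$. Under the alternative reading, where all support coordinates equal a common uniform sign $z$, the correlation is $\E[z^{k+1}]$, which vanishes for even $k$; I would restrict that reading to odd $k$. Either way,
\[
\overline{\bw}_j^1 = c\,a_j^0\,\ind[b_j\ge0]\,\bw_S, \qquad c=\tfrac{s(1-2\eta)}{4}.
\]
This yields the features $\ReLu(\pm c\langle\bw_S,\bx\rangle+b_j)$ for $b_j\in\{0,\dots,d+1\}$. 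Hoeffding and a union bound give $\sup_j\|\hat\bw_j-\overline{\bw}_j\|_\infty\le\sqrt{\log(dM/\delta)/m}$, exactly as before.

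\textbf{Second layer.} I first need a certificate $\ba_{\cert}$ with $f_{\NN}(\cdot;\ba_{\cert},\overline{\bW})=(1-2\eta)\chi_S$. Because $c\neq1$ in general, the breakpoints are $-b_j/c$ rather than integers. So instead of \eqref{eq:certificate} I would use the fact that a piecewise-linear interpolant of $g_S$ on the points $\{-k,\dots,k\}$ exists whenever there are $2k+2$ breakpoints, one in each gap and beyond the endpoints.

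I expect this to be the first delicate point. With $s=2$ fixed, $c=(1-2\eta)/2<1$, the breakpoints are spaced $1/c\ge1$, and they may fail to interleave the integer levels. My first attempt would be to rescale: take $s$ of order $2/(1-2\eta)$ (estimating $1-2\eta$ from $\E[y]$ on the all-$+1$ slice of $\cD_S$), so that $c\approx1$ up to the concentration error. The certificate then has $\|\ba_{\cert}\|_\infty=O(1)$. \Cref{lemma:app:SGD-second-layer} applies with $B_\phi=O(d)$ and $p=2d+4$, giving the same choices of $\lambda_2,s_2,T$ and an $L^2(\cD')$ bound of order $(1-2\eta)^2\eps$.

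\textbf{Main obstacle: transfer to arbitrary $\cD$.} The $L^2(\cD')$ bound has to become an $L^\infty$ bound on $\{\pm1\}^d$, because $\cD$ is arbitrary. As before, $f_{\NN}(\cdot;\hat\ba,\overline{\bW})$ depends only on $t=\langle\bw_S,\bx\rangle$, so the sup error is at most $\|\cdot\|_{L^2}/\sqrt{\min_t\Pr_{\cD'}(t)}$. Under $\Unif$, however, the extreme levels have mass about $2^{-k}$. That is fine for small $k$ but exponential for dense parities, and the $\cD_S$ component only adds mass at $t=k$.

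My first attempt would be a coefficient-space argument. The ridge objective is strongly convex, so SGD returns $\hat\ba$ close to the ridge minimizer. Then $|f(\hat\ba)-f(\ba_{\cert})|\le\|\hat\ba-\ba_{\cert}\|_1\max_j\|\phi_j\|_\infty$ bounds the sup error, provided the $\cD'$-Gram matrix of the $2d+4$ level features is well conditioned. I suspect that conditioning also degrades exponentially in $k$. If it does, I would state the $L^\infty$ guarantee with a $2^{k}$ factor, which is polynomial for $k=O(\log d)$.

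Finally, $\|f_{\NN}-(1-2\eta)\chi_S\|_\infty<1-2\eta$ forces $\sign f_{\NN}=\chi_S$ everywhere. Hence $L_{\cD,f}\le\eta$ on the good event, and taking $\delta=\eps$ gives the expectation bound $\eps+\eta$.
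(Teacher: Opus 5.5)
You follow the paper's route. Its proof only computes the population step $\E_{\cD'}[\chi_S(\bx)\bx]=\tfrac12\bw_S$, reading $\cD_S$ as a common uniform sign on $S$ and implicitly taking $k$ odd (it uses $g_S(k)-g_S(-k)=2$). It then asserts that the rest follows as in \Cref{thm:main:parity-GD-ddspac}.

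The genuine gap is the transfer to arbitrary $\cD$, which you flag yourself, and your proposal does not close it. It ends by conceding a $2^{k}$ factor, which is not the stated polynomial bound for all of $\parityd$ and every $\cD$. The paper's ``similar argument'' does not close it either. The $L^2\to L^\infty$ step of \Cref{thm:main:parity-GD-ddspac} relies on the slice component $\cD_2$ giving every level $t=\langle\bw_S,\bx\rangle$ mass of order $1/d$, and $\tfrac12\Unif+\tfrac12\cD_S$ has nothing of the kind.

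Your Gram-matrix variant cannot rescue this, because the obstruction is not in the analysis:
\begin{itemize}
\item Every live neuron gets $\hat\bw_j=\pm s\hat{\mathbf g}$ with $\hat{\mathbf g}=\frac1m\sum_i y^i\bx^i$. So the network is a function of the single scalar $u=s\langle\hat{\mathbf g},\bx\rangle\approx ct$, piecewise linear with breakpoints at the integers $\pm b$.
\item The second-layer update gives $\ba^T=(1-\lambda_2 s_2)^T\ba^0+\sum_i c_i\phi(u_i)$, where $u_i$ are the training values.
\item For $k=\Theta(d)$ and $m=\poly(d)$, with high probability no training point has $t\in[k/2,k-2]$.
\item On that band, each map $u\mapsto\langle\phi(u_i),\phi(u)\rangle$ (and likewise the initialization term) is a polynomial of degree at most $3$ at integer $u$, and linear in between. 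For example, one summand is $\sum_{b>u}(b-u)(b-u_i)$ for $u_i<u$.
\end{itemize}
Hence the output changes sign at most three times across $\Omega(d)$ levels on which $\chi_S$ alternates. A test $\cD$ spreading its mass evenly over those levels has excess error about $(1-2\eta)/2$. For this $\cD'$ and this algorithm, the claim for dense parities and arbitrary $\cD$ is false, not merely unproved.

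What your route does prove is the following.
\begin{enumerate}
\item[(i)] Any $\cD$ with $\cD\le R\,\cD'$ pointwise, in particular $\cD=\Unif$ with $R=2$. Pass from the $L^2(\cD')$ bound of \Cref{lemma:app:SGD-second-layer} directly to the $0$-$1$ error via Markov's inequality; no $L^\infty$ step is needed.
\item[(ii)] Arbitrary $\cD$ with your $2^{O(k)}$ loss, which is polynomial only for $k=O(\log d)$.
\item[(iii)] \fpds learnability for every $\cD$, if you change the training distribution (which \fpds allows to depend on $\cD$). Mixing in $\cD$ itself with weight $\alpha=1/\poly(d)$ reduces to (i) with $R=1/\alpha$. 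The price is an $O(s\alpha d)$ perturbation of the pre-activations, which you must absorb explicitly because \Cref{alg:app:layerwise-GD} has no $\ell_1$ thresholding.
\end{enumerate}

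Separately, you are right that the paper drops the factor $1-2\eta$ when it writes $\bw_j^1=\pm\bw_S$. With $s=2$ the breakpoints sit at $\pm b/(1-2\eta)$ in $t$, which is too sparse to interpolate the alternating values once $\eta>1/4$. Your rescaled step fixes this, but it modifies \Cref{alg:app:layerwise-GD}. Also, estimating $1-2\eta$ on the all-$+1$ slice requires knowing $S$, so either do it after the first step or let $s$ depend on $\eta$ as in \Cref{thm:main:parity-GD-ddspac}. Finally, under the common-sign reading the scale is $c=s(1-2\eta)/2$, not $s(1-2\eta)/4$.
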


\begin{proof}[Proof of \Cref{thm:app:parity-GD-fpds}]
We will define the distribution $\nu_S$ as the distribution where we have uniform over the hypercube outside the support, and all $x_i$ on the support equal to $+1$ with probability $1/2$, and $-1$ with probability $1/2$. We consider learning our $k$-parity using the distribution $\cD_S$ given by
\[
\cD_S = \frac{1}{2}\left( \nu_0 + \nu_S \right).
\]

We consider a two-layer neural network as follow:
\[
f_{\NN} (\bx ;\btheta) =  \sum_{j \in[N] } a_j \ReLu ( \< \bw_j,\bx\> + b_j),
\]
where $\btheta = (\ba,\bW,\bb) \in \R^{N(d+2)}$ are the parameters of the network. We will choose as initialization $a^0_j \sim_{i.i.d.} \Unif (\{\pm 1\})$, $\bw^0_j = \bzero$, and $b^0_j \sim_{i.i.d.} \Unif ( \{-d-1,-d, \ldots,d,d+1\})$. We can either use $N$ sufficiently large (but polynomial in $d$) or for simplicity we take exactly $(4d+6)$ neurons, one for each combination $(a^0,b^0)$. Note that with this choice,
\[
f_{\NN} (\bx ;\btheta^0) = 0. 
\]
We will consider a layerwise training procedure, where we first train the $\bw_j$'s with one gradient step, followed by training the $a_j$'s with (online) SGD.

We will use concentration over the gradient at initialization to do one gradient step and noise on the population loss (we will only require $m =O(d\log(d))$ samples to do so). For simplicity, we will simply write the gist of the argument.

After one-gradient step:
\[
\bw_j^1 = \eta(1-2\sigma) a_j^0 \E_{\cD_S} [\chi_S (\bx)  \bx] \cdot \ind[b_j \geq 0].
\]
We have
\[
\E_{\cD_S} [\chi_S (\bx)  \bx] = \frac{1}{2} \E_{\nu_S} \left[ g_S (\< \bw_S , \bx \>) \bx \right] = \frac{1}{4} \left[ g_S (k) \bw_S - g_S (-k) \bw_S\right] = \frac{1}{2} \bw_S . 
\]
We consider (for simplicity) the step size $\eta = 2$. Hence, for $a^0_j=+1$ and $b^0_j \geq 0$, we get $\bw_j^1 = \bw_S$, and for $a^0_j = -1$ and $b^0_j \geq 0$, we get $\bw_j^1 = - \bw_S$. For the rest $\bw_j^1 = \bzero$. Hence, after one gradient step, we get the $2d +4$ neurons:
\[
\ReLu ( \<\bw_S, \bx\> + b_j ) ,\qquad \ReLu (- \<\bw_S, \bx\> + b_j ) , \qquad b_j \in \{0, 1 , 2 \ldots, d+1\}.
\]
The rest of the proof follows from a similar argument as the previous proof of \Cref{thm:main:parity-GD-ddspac}.
\end{proof}

\section{Learning k-juntas}\label{app:juntas}
\subsection{\ddspac learnability by CSQ}
We recall that Correlational Statistical Queries (CSQ) algorithms~(\cite{bendavid1995learning,bshouty2002using}) access the data via queries $\phi:\mathbb{R}^d \to [-1,1]$ and return $\E_{x,y}[\phi(x)y]$ up to some error tolerance $\tau$. Here, we give a construction that shows \ddspac learnability of juntas by CSQ algorithms. 
\begin{theorem}[Juntas are \ddspac Learnable]
    Let $\juntakd$ be the class of $k$-juntas. There exists an input distribution $\cD'$ over $\{ \pm 1 \}^d$ and a CSQ algorithm such that for any $f^* \in \juntakd$, and for any label noise ratio $\eta<1/2$, after $n =O(dk+2^k)$ queries on $\cD'$ of error tolerance $\tau \leq c \frac{1-2\eta}{2^{k} \log k}$, for a universal constant $c>0$, outputs an estimator $\hat f$ such that: 
    \begin{align}
        \hat f(x) = f^*(x), \qquad \forall x \in \{ \pm 1 \}^d.
    \end{align}
\end{theorem}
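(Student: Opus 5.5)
The plan has two phases: first recover the relevant set $S$ of $f^*$ with $O(dk)$ correlational queries, then recover the truth table $g$ of $f^*=g(x_S)$ with $2^k$ queries. Throughout I use that under random classification noise $\E[y\mid x]=(1-2\eta)f^*(x)$. So every CSQ answer is $(1-2\eta)$ times the corresponding noiseless correlation, and this factor is where the $(1-2\eta)$ in the tolerance comes from. I take $\cD'=\tfrac12\,\Unif(\{\pm1\}^d)+\tfrac12\,\cD''$. Here $\cD''$ is a fixed, $f$-independent mixture of biased product distributions, built in the spirit of the biased components used for parities in \Cref{thrm:main:parity-efficiently-ddspac}. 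Since $\cD'$ is known to the learner, every marginal probability $\cD'(x_T=z)$ can be computed exactly.

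\textbf{Truth-table phase (given $S$).} For each $z\in\{\pm1\}^{S}$ I query $\phi_z(x)=\ind[x_S=z]$. The answer equals $(1-2\eta)\,\cD'(x_S=z)\,g(z)$, because $f^*$ is constant on the event $x_S=z$. The uniform half of $\cD'$ gives $\cD'(x_S=z)\ge 2^{-k-1}$. With tolerance $\tau\le c(1-2\eta)2^{-k}$, the sign of the answer, normalized by the known $\cD'(x_S=z)$, therefore equals $g(z)$ exactly. This uses $2^k$ queries and yields $\hat f=f^*$ on all of $\{\pm1\}^d$.

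\textbf{Support phase.} For a product measure $\cD_{\bmu}$ I use the identity
\[
\E_{\cD_{\bmu}}\big[f^*(x)(x_i-\mu_i)\big]=(1-\mu_i^2)\,\E_{\cD_{\bmu}}[\partial_i f^*(x)],\qquad \partial_i f^*(x)=\tfrac12\big(f^*(x^{i\to 1})-f^*(x^{i\to -1})\big).
\]
The right-hand side is a multilinear polynomial in $\bmu_{S\setminus\{i\}}$ of degree at most $k-1$, and it is not identically zero exactly when $i\in S$. Its coefficients are Fourier coefficients of $\partial_i f^*$ and hence multiples of $2^{-(k-1)}$. A nonzero such polynomial therefore takes a value of size $\Omega(2^{-k})$ at some point of a grid $\{\beta_0,\dots,\beta_{k}\}^{k-1}$ of $k+1$ well-separated bias values. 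This is the step where I would try to bring in the $\log k$ loss allowed in the tolerance, via an interpolation/anti-concentration bound over the grid.

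The construction of $\cD''$ is then the following. I take $K=O(k)$ components $\cD_{\bmu^{(t)}}$, $t\in[K]$, whose bias vectors assign grid values to coordinates through a family of hash functions $[d]\to\{0,\dots,k\}$. The family should have the covering property that for every $S$ of size $\le k$ and every target assignment of grid values to $S\setminus\{i\}$, some $t$ realizes it. For each pair $(i,t)$ I would issue one query $\phi_{i,t}(x)=\tfrac12(x_i-\mu_i^{(t)})\,w_t(x)$, giving $O(dk)$ queries in total. Here $w_t$ is a bounded reweighting meant to isolate component $t$ from the mixture. I declare $i$ relevant if $\max_t|\text{answer}_{i,t}|\gtrsim(1-2\eta)2^{-k}$. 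For $i\notin S$ every answer is zero up to tolerance, because $\partial_i f^*\equiv 0$; the uniform half contributes no bias.

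\textbf{Main obstacle.} The hard step is isolating individual components of the mixture using only correlational queries. A CSQ returns an expectation under all of $\cD'$, not under one $\cD_{\bmu^{(t)}}$, and in the mixture the contributions of different components to the correlation with $x_i$ can cancel. My first attempt would be to choose the biases so that the components have nearly disjoint typical sets, for example biases close to $\pm1$ on disjoint coordinate blocks acting as component tags. I would then take $w_t$ to be the indicator of component $t$'s typical set and control the leakage from the other components by a Chernoff bound. If this cannot be made to work within the $O(dk)$ query budget, the fallback is to drop the requirement that $\E[\cdot\mid\text{component}]$ be isolated exactly. Instead I would argue directly that the mixture-level polynomial $\sum_t \pi_t(1-(\mu^{(t)}_i)^2)\E_{\cD_{\bmu^{(t)}}}[\partial_i f^*]$ cannot vanish for relevant $i$, for a suitable choice of the mixture weights.
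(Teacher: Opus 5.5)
The support phase has a genuine gap: the covering family you describe cannot have $K=O(k)$ members. Each component realizes only one assignment of grid values on $S\setminus\{i\}$, so the covering property you state needs $K\ge (k+1)^{k-1}$, and even a two-point grid would need $K\ge 2^{k-1}$. The grid lemma itself is harmless: at $\bmu_{S\setminus\{i\}}\in\{\pm1\}^{k-1}$ the polynomial equals $\partial_i f^*$, which is $\pm1$ somewhere. The problem is that the point witnessing a large value depends on $f^*$.

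This is not an artifact of your construction. Take any $K\le 2^{k-1}$ product components and consider the $2^{2^{k-1}}$ maps $h:\{\pm1\}^{S\setminus\{i\}}\to\{0,1\}$. Pigeonhole their vectors $(\E_{\cD_{\bmu^{(t)}}}[h])_{t\in[K]}\in[0,1]^K$ into cubes of side $2^{1-2^{k-1}/K}$. Two distinct $h,h'$ share a cube, so $g=h-h'$ is a nonzero $\{-1,0,1\}$-valued function with $\max_t|\E_{\cD_{\bmu^{(t)}}}[g]|\le 2^{1-2^{k-1}/K}$. Now set $f^*=g(x_{S\setminus\{i\}})\,x_i+1-|g(x_{S\setminus\{i\}})|$. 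This is a Boolean $k$-junta with $\partial_i f^*=g$ and $i\in S$. Yet every quantity $\E_{\cD_{\bmu^{(t)}}}[\partial_i f^*]$ your test measures is at most $2^{1-2^{k-1}/K}$.

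With $K=O(k)$ this bound is doubly exponentially small in $k$, far below a $(1-2\eta)2^{-k}$ threshold. Clearing that threshold forces $K\gtrsim 2^k/k$, i.e.\ $\Omega(d\,2^k/k)$ queries in any per-coordinate scheme of this kind. Biases $\pm(1-\delta)$ with $\delta\approx 1/k$, placed on a $(d,k)$-universal family of sign patterns of size $O(k2^k\log d)$, do make your test sound. But that costs roughly $d\,k\,2^k\log d$ queries and a correspondingly smaller tolerance. Your truth-table phase is fine, and the uniform half of $\cD'$ is exactly what lets a $2^{-k}$ tolerance suffice there.

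The isolation obstacle you flag has a clean fix, and it is the paper's device. Since $\cD'=\sum_t\pi_t\cD_{\bmu^{(t)}}$ is known, the likelihood ratio $w_t(x)=\pi_t p_t(x)/p(x)$ lies in $[0,1]$, so it is a legal query weight. It gives $\E_{\cD'}[y\,(x_i-\mu^{(t)}_i)\,w_t(x)]=\pi_t\,\E_{\cD_{\bmu^{(t)}}}[y\,(x_i-\mu^{(t)}_i)]$ exactly, at the cost of the factor $\pi_t$; no typical sets or Chernoff bounds are needed.

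The paper keeps the count at $k$ components by giving every coordinate the same bias $\mu_r$ (at Chebyshev nodes) and interpolating the univariate $P_i(\mu)=\sum_{T\ni i}\hat f(T)\mu^{|T|-1}$. But $P_i$ is just the diagonal restriction of your multilinear polynomial, and it can vanish for a relevant $i$. For example, $f^*(x)=\tfrac12 x_1(x_2-x_3)+\tfrac12(1+x_2x_3)$ is a Boolean $3$-junta with $\partial_1 f^*=\tfrac12(x_2-x_3)$. Hence $P_1\equiv 0$ and $x_1$ is never detected, consistent with the pigeonhole bound above.

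So your insistence on heterogeneous biases is the right instinct, and importing the paper's univariate shortcut would not close your gap. What is missing is a way to certify every relevant coordinate within the stated $O(dk+2^k)$ budget. Neither your covering argument nor the paper's interpolation step supplies it.
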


\begin{proof}
    Let $f^*: \{ \pm 1 \}^d \to \{ \pm 1 \}$ be a $k$-junta. Fix a label noise ratio $\eta \in [0,1/2)$. Choose $k$ distinct interpolation nodes $\mu_1,...,\mu_k$ defined as:
    \begin{align}
        \mu_r =\frac 12 \cos((2r-1)\pi/(2k)), \qquad r=1,...,k.
    \end{align}
    For $r \in [k]$ let $\cD_r := {\rm Rad}(\frac{1+\mu_r}{2})^{\otimes d}$ be the product distribution on $\{ \pm 1 \}^d$ with $\E[x_i]=\mu_r$ for all $i \in [d]$. Let $\cD':=\frac 1k \sum_{r=1}^k\cD_r$. 
    
For $r \in [k]$, let $p_r(x)$ be the density of $\cD_r$ and let $p(x)=\frac 1k \sum_{r=1}^k p_r(x)$ be the density of $\cD'$. Define $\phi_r(x)=\frac 1k \frac{p_r(x)}{p(x)}$ and note that $|\phi_r(x)| \leq 1$, thus these are valid CSQ queries. Let $\Tilde \cD'$ (resp. $\Tilde \cD_r$) denote the joint distribution over $x \sim \cD'$ (resp. $x \sim \cD_r$) and $y = f(x) \xi$, with $\xi \sim {\rm Rad}(1-\eta)$. For each coordinate $i \in [d]$, and for each $r \in [k]$, denote:
\begin{align}
    m_r:= k \cdot \E_{\Tilde \cD'}[y \phi_r(x)] = \E_{\Tilde \cD_r}[y], \qquad v_{i,r} := k\cdot \E_{\Tilde \cD'}[y x_i \phi_r(x)] = \E_{\Tilde \cD_r}[y x_i ].
\end{align}
Define the `denoising' function $s_i(r) := v_{i,r} - \mu_r m_r$ and note that 
\begin{align}
    s_i(r) = (1-2\eta) (1-\mu_r^2) \cdot \sum_{T:i \in T} \hat f(T) \mu_r^{|T|-1} = (1-2\eta)(1-\mu_r^2) P_i(\mu_r),
\end{align}
where $\hat f(T)$ are the Fourier-Walsh coefficients of $f$ under the standard basis, and where we defined $P_i(\mu_r):=\sum_{T\subset [d]:i \in T} \hat f(T) \mu_r^{|T|-1} $. Note that $P_i(\mu)$ is a polynomial of degree at most $k-1$. Furthermore, $P_i(\mu) \neq 0$ if and only if $i $ is in the support of $f$. Run the following algorithm to identify the support of $f$:
\begin{itemize}
    \item For $r \in [k]$ and $i \in [d]$ get $m_r$ and $v_{i,r}$ with $k+dk$ queries, and compute $s_i(r)$.
    \item For each coordinate $i$, form evaluations: $\hat Z_i(r)/(1-\mu_r^2)$.
    \item Interpolate the unique polynomial $Q_i$ of degree $<k$ that passes through $(\mu_r,\hat Z_i(r))$. 
    \item Output the support estimate: $\hat S=\{ i \in [d]: Q_i \text{ is not identically zero}\}$.
\end{itemize}
Suppose each query is perturbed by at most $\tau$. Then, $\mu_r$ and $v_{i,r}$ are each $\tau$-close to their true values, so $s_i(r)$ has error at most $2\tau$. Dividing by $1-\mu_r^2 \geq 3/4$ (since $|\mu_r| \leq 1/2$), the error in each evaluation $\hat Z_i(r)$ is at most $(8/3)\tau$. Note that $\mu_r$, $r \in [k]$, are Chebyshev interpolation nodes, thus the polynomial interpolation operator has Lebesgue constant $\Lambda_k=\theta(\log(k))$ at these nodes. Hence, the reconstructed polynomial $Q_i$ differs from $(1-2\eta) P_i$ by at most $C \tau \log(k)$, for a universal constant $C>0$, that does not depend on $d,k$ or $f^*$. Because $f^*$ is a k-junta, its nonzero Fourier-Walsh coefficients have magnitude at least $2^{-k}$. Therefore, if $C \tau \log(k) < \frac{1}{2}(1-2\eta)2^{-k}$, then every relevant coordinate $i$ has some coefficient of $Q_i$ remaining strictly nonzero, while every irrelevant coordinate's coefficients remain below threshold. Thus, $\hat S = \supp(f^*)$. Since $f^*$ is fully specified by its restriction to $\{ \pm 1 \}^{\hat S}$, these values can be recovered by an additional $O(2^k)$ CSQ queries $\E_{\Tilde \cD'}[\mathds{1}(x_{\hat S}=z)y]$, each revealing the sign of $\E[y|x_{\hat S}=z]$ for $z \in \{ \pm 1 \}^{\hat S}$. Hence, the algorithm outputs a hypothesis $\hat f$ that equals $f^*$ on all inputs.
\end{proof}

\subsection{GD on neural networks}


\begin{theorem}[Formal statement of Theorem~\ref{thm:main:juntats-rdspac-GD}] \label{thm:junta_RDSPACGD_formal}
Let $\cF:=\{ f \in \juntakd: \E_{\cD}[f(x)]=0\}$, where $\cD$ denotes the uniform distribution over $\{ \pm 1 \}^d$.
    Consider a two-layer network with a $\ReLU$ activation function: $f_{\NN}(x;\theta) =\sum_{j \in [N]} a_j \sigma(\langle w_j, x\rangle +b_j)$, where $ \theta=(a,W,b) \in \R^{N(d+2)}$. We use a layerwise training procedure, where we train $w_j$'s with one gradient step and $a_j$'s with SGD with the covariance loss (Def~\ref{def:covariance_loss}). 
    For any $\epsilon>0$ and any noise level $\eta<1/2$, there exist $N=O(\log(1/\epsilon)\epsilon^{-1}(1-2\eta)^{-1})$, an initialization, stepsizes, a meta-distribution $\cM_{\tilde {\mathcal{D}}}$ over distributions $\tilde {\mathcal D}$ over $\{ \pm 1\}^d$, $m =\Tilde O ( d \log(1/\epsilon)) , T = O \left(\frac{1}{\epsilon^2(1-2\eta)^{2}}\right)$ such that for any $f^* \in \cF$, after training the neural network with $T$ steps of gradient descent and $m$ samples from $\tilde{ \mathcal D}$, we have:
    \begin{align*}
        \E_{\tilde{\mathcal{D}} \sim \cM_{\tilde{\mathcal{D}}}} \P_{x \sim \cD} \left( f^*(x)\neq  f_{NN}(x;\theta^T)\right) \leq C(\eta+\epsilon^c),
    \end{align*}
    for some constants $c,C>0$ that depend only on $k$.
\end{theorem}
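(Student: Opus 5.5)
Our plan is to reduce to the analysis of \citet[Theorem~5]{cornacchia2025low}, which treats layerwise training with the covariance loss on a random product measure $\cD_{\bmu}$, $\bmu\sim\Unif[-1,1]^{\otimes d}$. We need three additions: label noise, the mixture $\cD'=\tfrac12\cD_{\bmu}+\tfrac12\cD_{\boldsymbol 0}$, and transfer of the error back to the uniform $\cD$.

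\textbf{First layer.} At initialization $w_j^0=0$, $b_j^0=\kappa$ and $a_j^0=\pm\kappa$, so every neuron is active with value $\kappa$ and the network output is $0$. One gradient step of the covariance loss therefore moves each $w_j$ by a multiple of $\hat{\E}[(1-cy\bar y)\,y\,x]$. Under the noise model $y=\xi f^*(x)$ with $\xi\sim{\rm Rad}(1-\eta)$, the population version of the $i$-th coordinate factorizes through $(1-2\eta)$. Because the mixture is linear, it equals $\tfrac12$ times the $\cD_{\bmu}$-term plus $\tfrac12$ times the $\cD_{\boldsymbol 0}$-term. The $\cD_{\boldsymbol 0}$-term is $(1-2\eta)\hat f(\{i\})$, up to the centering correction induced by $\bar y$; since $\E_{\cD}f^*=0$ by assumption, this term introduces no spurious direction. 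The $\cD_{\bmu}$-term is $(1-2\eta)(1-\mu_i^2)\sum_{T\ni i}\hat f(T)\prod_{l\in T\setminus i}\mu_l$ plus the covariance correction. This is exactly the quantity studied in \citet{cornacchia2025low}, scaled by $(1-2\eta)$ and shifted by a $\bmu$-independent perturbation.

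Hence the first-layer direction $w^1$ is, up to a $(1-2\eta)$ rescaling absorbed into the stepsize, a vector supported on $\supp(f^*)$. Its coordinates are polynomials in $\bmu$ that are not identically zero. Anti-concentration of low-degree polynomials over $\Unif[-1,1]^k$, via Carbery--Wright, then shows that with probability $1-\epsilon^{c_k}$ over $\bmu$ these coordinates are bounded away from each other and from zero, at a scale $\epsilon^{c_k}$. Generic distinct weights make $x\mapsto\langle w^1,x\rangle$ injective on $\{\pm1\}^{\supp f^*}$ with gaps of order $\epsilon^{c_k}$. Hoeffding's inequality with $m=\tilde O(d\log(1/\epsilon))$ samples controls the empirical gradient in $\ell_\infty$, uniformly over coordinates.

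\textbf{Second layer.} After re-randomizing the biases $b_j^1\sim\Unif[-L,L]$, the features $\sigma(\langle w^1,x\rangle+b_j)$ span, for width $N=\tilde O(\epsilon^{-1}(1-2\eta)^{-1})$, a piecewise-linear interpolant of any function of the scalar $\langle w^1,x\rangle$. In particular they contain a certificate $a_{\cert}$ with $\mathrm{sign}\, f_{\NN}=f^*$ on all $2^k$ relevant patterns and margin $\Omega(1)$. SGD on the convex hinge-type covariance loss in $a$ is then analyzed by the standard online convex regret bound. This gives expected loss within $\epsilon$ of the certificate's loss after $T=O(\epsilon^{-2}(1-2\eta)^{-2})$ steps. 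Noise contributes an additive $O(\eta)$ term, because the loss of the certificate under flipped labels is $O(\eta)$.

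\textbf{Transfer back to $\cD$.} We convert the surrogate loss on $\cD'$ into 0-1 error on $\cD$. Since $\cD'\ge\tfrac12\cD_{\boldsymbol 0}=\tfrac12\cD$ pointwise, any error under $\cD'$ bounds twice the error under $\cD$, so misclassification on $\cD$ is at most $2\cdot$(hinge error on $\cD'$), which is at most $C(\eta+\epsilon)$. Averaging over $\bmu$ adds the failure probability $\epsilon^{c_k}$ of the anti-concentration event, giving $C_k(\eta+\epsilon^{c_k})$.

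\textbf{Main obstacle.} The hardest step is the first one: confirming that the $\cD_{\boldsymbol 0}$ component and the $\bar y$-centering term do not destroy the generic separation of the coordinates of $w^1$. The mixture is not a product measure, so the covariance-loss correction mixes both components. We would handle this by writing the correction explicitly as $c\bar y\cdot\E_{\cD'}[x]$. This is a bounded, $\bmu$-polynomial perturbation of the same low degree, so Carbery--Wright still applies to the total polynomial. We would then check that this polynomial is not identically zero by exhibiting its leading $\prod\mu_l$ monomial coming from the top Fourier coefficient containing $i$.
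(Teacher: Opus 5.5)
Your overall route matches the paper's: one covariance-loss step giving a first-layer direction on $\supp(f^*)$ with $\bmu$-generic coefficients, a bias-gap certificate, convex SGD on $a$, and transfer to $\cD$ via $\cD'\ge\tfrac12\cD$. The transfer step is fine and more explicit than in the paper. However, the argument has three gaps.

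\textbf{First: the $\bar y$-correction must cancel exactly, not be anti-concentrated.} For $j\notin\supp(f^*)$ the population gradient coordinate is proportional to
\[
\E_{\cD'}[yx_j]-c\,\E_{\cD'}[y]\,\E_{\cD'}[x_j]=\tfrac{1-2\eta}{2}\bigl(1-\tfrac{c}{2}\bigr)\mu_j\,\E_{\cD_{\bmu}}[f^*].
\]
So $w^1$ is supported on $\supp(f^*)$ only when $c=2$ exactly. This is why the paper fixes $c=2$ and needs $\E_{\cD}f^*=0$. With $c=2$, every coordinate equals $\tfrac{1-2\eta}{2}\bigl(\hat f(\{j\})+\E_{\cD_{\bmu}}[f^*(x)(x_j-\mu_j)]\bigr)$, which vanishes identically off the support. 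For any other $c$, all $d-k$ irrelevant weights are generically nonzero. Applying Carbery--Wright to ``the total polynomial'' would only confirm that, since there you need identical vanishing, not non-vanishing.

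\textbf{Second: separated coordinates do not give an injective map.} Coordinates bounded away from each other and from $0$ do not make $s\mapsto\langle w^1,s\rangle$ injective on $\{\pm1\}^k$. For example, weights $1,2,3$ give $1+2-3=-1-2+3$. The certificate needs $\bigl|\sum_j u_j\alpha_j(\bmu)\bigr|\gtrsim\epsilon$ for every nonzero $u\in\{-1,0,1\}^k$. This is \Cref{lem:main_first_phase}, which the paper imports from \citet{cornacchia2025low}. If you prove it by Carbery--Wright plus a union bound over the $3^k$ combinations, you must show each combination is a nonzero polynomial. The squarefree monomial $\prod_{l\in T\setminus\{j\}}\mu_l$ you suggest can cancel between $\alpha_j$ and $\alpha_{j'}$ when $T\setminus\{j\}=T'\setminus\{j'\}$. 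Instead, take $u_j\neq0$, $T\ni j$ and $\hat f(T)\neq 0$; the monomial $\mu_j^2\prod_{l\in T\setminus\{j\}}\mu_l$, coming from the factor $1-\mu_j^2$, occurs only in $\alpha_j$ and does the job.

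\textbf{Third: you never control the estimation error on the $d-k$ irrelevant coordinates.} This is the only reason $m=\tilde O(d)$ is needed; $\ell_\infty$ accuracy on the $k$ relevant coordinates costs only logarithmically many samples in $d$. What perturbs the features is $\sum_{j\notin\supp(f^*)}w^1_{ij}x_j$. Your $\ell_\infty$ bound $\sqrt{\log(Nd)/m}$ only gives $O\bigl(d\sqrt{\log(Nd)/m}\bigr)$ for this sum, which swamps gaps of order $\epsilon$. The paper handles it in three steps:
\begin{enumerate}
\item it bounds $|w^1_{ij}|<1/(\sqrt d\log d)$;
\item it controls $\sum_{j>k}w^1_{ij}(x_j-\mu_j)$ by its second moment;
\item it propagates the perturbation through $\|a^*\|=O(\epsilon^{-1}(1-2\eta)^{-1})$ into the certificate's loss, see \eqref{eq:45}.
\end{enumerate}

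Be careful when you fill this in: the errors are not independent across coordinates, nor independent of $\bmu$. The shared $\bar y$ and the empirical fraction of $\cD_{\bmu}$-samples put a common term $\Delta\mu_j$, with $\Delta=O(1/\sqrt m)$, into every irrelevant coordinate. The paper's sub-Gaussian bound on $m_i=\sum_{j>k}w^1_{ij}\mu_j$ does not account for this. On the $\cD$ component this term is of order $\sqrt{d/m}$, like the independent part. On the $\cD_{\bmu}$ component, however, it adds up coherently to about $\Delta\sum_{j\notin\supp(f^*)}\mu_j^2\asymp d/\sqrt m$. So with $m=\tilde O(d)$ you cannot simply bound the certificate's loss on all of $\cD'$ and then use $\cD'\ge\tfrac12\cD$. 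You need either $m\gtrsim d^2$, or an argument that controls the loss on the $\cD$ component directly.
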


\subsection{Proof of Theorem~\ref{thm:junta_RDSPACGD_formal}}


\paragraph{Training distribution.}
For $\bmu \in [-1,1]^d$, denote:
\begin{align}
    \cD_{\bmu} :=\otimes_{i \in [d]} {\rm Rad}\left( \frac{\mu_i+1}{2} \right),
\end{align}
where ${\rm Rad}(p)$, $p \in [0,1]$, denotes the Rademacher distribution with parameter $p$ (i.e. $z \sim {\rm Rad}(p)$ if and only if $ \P(z=1) =1-\P(z=-1) = p$). Let us denote $\Tilde \cD_{\bmu} = \frac{1}{2}\cD + \frac{1}{2} \cD_{\bmu}$. We consider the meta-distribution $\cM_{\Tilde \cD}$ such that 
\begin{align}
    \cM_{\tilde \cD} = \Unif_{\bmu \in [-1,1]^{d}}\left[\Tilde \cD_\bmu \right].
\end{align}
In words, we first sample $\bmu \sim \Unif[-1,1]^{d} $, and then we draw the training samples from a mixture of the uniform (target) distribution and the shifted $\cD_{\bmu}$.

\paragraph{The covariance loss.}

We use the covariance loss, defined as follows (see also~\cite{abbe2023provable}).
\begin{definition}[Covariance loss]\label{def:covariance_loss}
Let $(X,Y)=\{(x^s,y^s)\}_{s \in [m]}$ be a dataset with $x^s \in \cX$ and $y^s \in \{\pm 1 \}$, and let $\bar y=\frac{1}{m}\sum_{s\in [m]}y^s$.
    Let $\hat f: \cX \to \mathbb{R}$ be an estimator. 
    The covariance loss is defined as:
    \begin{align}
        L_{\rm cov} (x,y,\bar y,\hat f)= (1-c y\bar y)\cdot (1-y \hat f(x))_+ ,
    \end{align}
    where $c$ is a positive constant such that $c \cdot \bar y < 1 $.
\end{definition}

This choice of loss is particularly convenient because it allows us to get non-zero initial population gradients on weights incident to the coordinates in the support of the target $k$-junta, and zero initial gradients outside the support, simplifying our construction. For binary classification tasks, a small covariance loss implies a small classification error (see e.g. Proposition 1 in~\cite{abbe2023provable}). In particular, we note that if $\E_\cD [f(x)]=0$, then $L_{\rm cov}$ corresponds to the standard hinge loss. In our case, for any training distribution $\Tilde \cD_{\bmu} \sim \cM_{\Tilde \cD}$, we have $|\E_{\Tilde \cD_\bmu}[f(x)] |<1/2$,  and since we assumed $\E_\cD[f(x)]=0$. Therefore, we take $c=2$. Our experiments in Figure~\ref{fig:junta_main} use the more common squared loss, and confirm our theoretical findings. 


\begin{lemma} \label{lem:concentration_juntas}
    Let $\{x^s\}_{s \in [m]}$ be i.i.d. inputs from $\cD'$, for an input distribution $\cD'$. 
    Let $\bar y = \frac{1}{B}\sum_{s \in [B]} y^s$. If $B\geq 2C\log(d)/\zeta^2$, with probability at least $1-d^{-C}$,
    \begin{align}
        |\bar y -  \E_{(\cD',f)}[y]| \leq \zeta.
    \end{align}
\end{lemma}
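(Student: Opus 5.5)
This is a direct Hoeffding bound. The labels $y^s = f(x^s)\xi^s$ take values in $\{\pm 1\}$. Since the inputs $x^s$ are i.i.d.\ from $\cD'$ and the label noise is drawn independently per sample, the pairs $(x^s,y^s)$ are i.i.d.\ from $(\cD',f)$. Hence $y^1,\dots,y^B$ are i.i.d.\ bounded random variables, each with mean $\E_{(\cD',f)}[y]$. I read the statement's $m$ as the batch size $B$ used to form $\bar y$. Any fixed batch is fine; if the batch is a subset of the $m$ samples, it must be chosen independently of the labels.

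\textbf{Step 1 (Hoeffding).} Each $y^s$ lies in an interval of length $2$. Hoeffding's inequality then gives
\[
\Pr\left(|\bar y - \E_{(\cD',f)}[y]| > \zeta\right) \le 2\exp\left(-\frac{2B^2\zeta^2}{B\cdot 4}\right) = 2\exp\left(-\frac{B\zeta^2}{2}\right).
\]

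\textbf{Step 2 (plug in $B$).} With $B \ge 2C\log(d)/\zeta^2$, the right-hand side is at most $2\exp(-C\log d) = 2d^{-C}$.

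\textbf{Step 3 (absorb the factor 2).} The factor $2$ is the only minor point. One option is to require $B \ge 2(C\log d + \log 2)/\zeta^2$. Alternatively, since $d \ge 2$, we have $2d^{-C} \le d^{-(C-1)}$, so the failure probability is at most $d^{-C'}$ after renaming the constant $C' = C-1$. Consistent with the lemma's generic use of $C$, I will state it as holding for $B \ge 2C\log(d)/\zeta^2$ up to this adjustment of the constant.

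No step here is a real obstacle. The only thing to check is that the per-sample noise makes the labels independent, which follows from the footnote convention in the introduction that each label is drawn independently given its $x_i$.
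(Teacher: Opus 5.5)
Your proof is correct and follows the same route as the paper: Hoeffding applied to the $\{\pm 1\}$-valued labels gives $2\exp(-B\zeta^2/2)\le 2d^{-C}$. The paper stops at $2d^{-C}$ and never deals with the factor of $2$, so your Step 3 (adding $2\log 2/\zeta^2$ to $B$, or shifting the constant) is a small but valid tightening of what the paper writes.
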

\begin{proof}
    By Hoeffding's inequality, 
    \begin{align}
        \P(|\bar y - \E_{(\cD',f)}[y]| \geq \zeta) \leq 2 \exp\left( - \frac{B \zeta^2}{2} \right)\leq 2 d^{-C}.
    \end{align}
\end{proof}

\paragraph{Setup and algorithm.}
Without loss of generality, we assume that $S = [k]$, where $S$ denotes the set of relevant coordinates. We assume label noise with parameter $\eta$, i.e. for each sample $s$, $y^s = f(x^s) \xi_s$, where the $\xi_s \sim {\rm Rad} (1-\eta)$ and are independent across samples.
We train our network with layer-wise stochastic gradient descent (SGD), defined as follows:
\begin{align*}
    &w_{ij}^{t+1} = w_{ij}^t - \gamma_t \frac{1}{B} \sum_{s=1}^B\partial_{w_{ij}^t} L_{\rm cov}(x^s,y^s,\bar y,f_{\NN}(x^s;\theta^t) ), \\
    & a_i^{t+1} = a_i^t - \xi_t \frac{1}{B} \sum_{s=1}^B\partial_{a_{i}^t} L_{\rm cov}(x^s,y^s,\bar y,f_{\NN}(x^s;\theta^t) ),
\end{align*}
where $L_{\rm cov}$ is the covariance loss, $B \in \mathbb{N}$ is the batch size, and $\gamma_t,\xi_t \in \mathbb{R}$ are appropriate learning rates.
We set $\gamma_t = \gamma \mathds{1}(t=0)$ and $\xi_t = \xi \mathds{1}(t>0)$, for $\xi ,\gamma \in \mathbb{R}$, which means that we train only the first layer for one step, and then only the second layer until convergence. 
We initialize the first layer weights $w_{ij}^0 =0$ for all $i \in [N],j\in [d]$, and the second layer weights $a_i^0 =\kappa $, for $i \in [N/2]$ and $a_i^0 =-\kappa $, for $i \in [N/2]$, where $\kappa>0$ is a constant. Thus, at initialization $f_{\NN}(x;\theta^0) =0$ for all $x \in \{ \pm 1 \}^d$. The biases are initialized to $b_i^0 =\kappa$, for all $i\in [N]$. After the first step of training, we drawn $ b_i^1 \overset{i.i.d.}{\sim} \Unif [-L,L]$, where $L \geq \kappa $.


\paragraph{First layer training.}  
Let us fix our training distribution to be $\Tilde \cD_{\bmu}$, and let us denote by $\Tilde \cD_{\bmu,f}$ the joint distribution over the input $x \sim \cD_{\bmu}$ and label $y =f(x)\xi, \xi \sim {\rm Rad}(1-\eta)$.
Let $ f(x) = \sum_{S \subseteq [d]}\hat f_{\bmu}(S) \chi_{S,\bmu}(x)$ be the Fourier-Walsh expansion of $f$ with orthonormal basis elements under $\cD_{\bmu}$ (see~\cite{o'donnell_2014}), where $\chi_{S,\bmu}(x):= \prod_{i\in S} \frac{x_i-\mu_i}{\sqrt{1-\mu_i^2}}$ are the basis elements and $\hat f_{\bmu}(S):= \E_{\cD_{\bmu}}[f(x) \chi_{S,\bmu}(x)]$ are the Fourier-Walsh coefficients of $f$. We further denote by $\hat f(S) := \hat f_{\boldsymbol{0}}(S)$ the coefficients under the uniform distribution.
Let us first compute the initial \textit{population gradients} of the first layers' weights, which we denote by $\bar G_{w_{ij}^0}$.
Given our assumptions on the initialization, we have:
\begin{align*}
    \bar G_{w_{ij}^0}& :=\\
   & = \E_{\Tilde \cD_{\bmu,f}} \left[ \partial_{w_{ij}^0} L_{\rm cov}(x^s,y^s,\bar y,f_{\NN}(x^s;\theta^t) )\right] \\
    & = \E_{\Tilde \cD_{\bmu,f}}[ a_i^0 \mathds{1}(w_i^0 x+b_i^0>0) x_j \cdot y] -2\E_{\Tilde \cD_\bmu} [ a_i^0 \mathds{1}(w_i^0 x+b_i^0>0) x_j]\cdot \bar y  \\
    & \overset{(a)}{=} (1-2\eta) \cdot \left(\E_{\Tilde \cD_\bmu}[ a_i^0 \mathds{1}(w_i^0 x+b_i^0>0) x_j \cdot f(x)] -2\E_{\Tilde \cD_\bmu} [ a_i^0 \mathds{1}(w_i^0 x+b_i^0>0) x_j]\cdot \E_{\Tilde \cD_\bmu} [f(x)] \right)+r \\
   & \overset{(b)}{=} \kappa (1-2\eta) \cdot \left(\E_{\Tilde \cD_\bmu}[x_j f(x)] - 2 \E_{\Tilde \cD_\bmu}[x_j]\E_{\Tilde \cD_\bmu} [f(x)]\right) +r \\
   &  \overset{(c)}{=} \kappa (1-2\eta) \cdot \left(\frac12 \hat f(\{j\}) + \frac 12\E_{\cD_\bmu}[x_j f(x)] - \frac 12 \E_{\cD_\bmu}[x_j]\E_{ \cD_\bmu} [f(x)]\right)+r  \\
    & = \frac{1}{2} \kappa (1-2\eta) (\hat f(\{j\})+\cdot \E_{\Tilde \cD_\bmu}[f(x) (x_j-\mu_j)])+r \\
    & \overset{(d)}{=} \frac{1}{2}\kappa (1-2\eta) (\hat f(\{j\})+\hat f_{\bmu}( \{ j \}) \sqrt{1-\mu_j^2})+r  \\
    & \overset{(e)}{=} : \alpha_j +r 
\end{align*}
where: $(a)$ follows from Lemma~\ref{lem:concentration_juntas} with $|\eta|\leq \zeta$, $(b)$ holds because of the initialization that we have chosen, in $(c)$ and $(d)$ we used the definitions of the Fourier coefficients of $f$, and in $(e)$ we defined $\alpha_j:=  \frac{1}{2}\kappa (1-2\eta) (\hat f(\{j\})+\hat f_{\bmu}( \{ j \}) \sqrt{1-\mu_j^2})$.
The following lemma bounds the discrepancy between the effective gradients, estimated through $B$ samples, and the population gradients.
\vspace{0.5em}
\begin{lemma} \label{lem:estimated_grad}
    Let $G_{w_{ij}^0}: = \frac 1B \sum_{s=1}^B \partial_{w_{ij}^0} L_{\rm cov} (x^s,y^s,\bar y,f_{\NN}(x^s;\theta^0) ) $ denote the effective gradient. For $\epsilon>0$, if $B \geq 2 \zeta^{-2} \kappa^2 \log\left( \frac{Nd}{\epsilon} \right)$, with probability $1-2\varepsilon$, then
    \begin{align*}
        | G_{w_{ij}^0} - \bar G_{w_{ij}^0} | \leq \zeta, 
    \end{align*}
    for all $i \in [N]$ and for all $j \in [d]$.
\end{lemma}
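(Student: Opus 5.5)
The plan is a Hoeffding bound for each pair $(i,j)$, followed by a union bound over all $Nd$ pairs. First I will write the per-sample gradient explicitly. At initialization $w_i^0=0$ and $b_i^0=\kappa>0$, so the indicator $\mathds{1}(w_i^0 x+b_i^0>0)$ equals $1$ for every sample. Also $f_{\NN}(x;\theta^0)=0$, so the hinge factor $(1-y f_{\NN})_+$ lies in its linear regime. This gives
\[
\partial_{w_{ij}^0} L_{\rm cov}(x^s,y^s,\bar y,f_{\NN}(x^s;\theta^0)) = -(1-2y^s\bar y)\, a_i^0\, x^s_j\, y^s ,
\]
which I will expand as $a_i^0\,(2\bar y\, x_j^s - x_j^s y^s)$ up to sign, using $(y^s)^2=1$. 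So $G_{w_{ij}^0}$ is $\pm\kappa$ times $\frac1B\sum_s x_j^s y^s - 2\bar y\cdot\frac1B\sum_s x_j^s$. It is a product-type statistic, not a plain average of i.i.d. terms, because $\bar y$ is computed from the same batch.

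Second, I will separate the two empirical averages. Each of $\frac1B\sum_s x^s_j y^s$ and $\frac1B\sum_s x^s_j$ is an average of i.i.d. variables bounded in $[-1,1]$. By Hoeffding, each is within $\zeta'$ of its mean with probability $1-2\exp(-B\zeta'^2/2)$. For $\bar y$ I will use Lemma~\ref{lem:concentration_juntas}, which gives the same kind of bound. The product term is handled by the elementary inequality $|\hat u\hat v-uv|\le|\hat u-u||\hat v|+|u||\hat v-v|\le 2\zeta'$ for quantities in $[-1,1]$. Combining these, $|G_{w_{ij}^0}-\bar G_{w_{ij}^0}|\le C\kappa\zeta'$ for an absolute constant $C$. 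Choosing $\zeta'=\zeta/(C\kappa)$ then gives the claimed deviation $\zeta$; the $\kappa^2$ in the batch condition comes from exactly this rescaling. Here $\bar G$ should be read with $\bar y$ replaced by its expectation, consistent with step (a) of the population computation. Any leftover discrepancy is the remainder $r$, already controlled at scale $\zeta$.

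Third, I will apply the union bound. The averages $\frac1B\sum_s x_j^s y^s$ and $\frac1B\sum_s x_j^s$ depend only on $j$, and $\bar y$ is shared. So there are at most $2d+1$ concentration events; to match the stated $\log(Nd/\epsilon)$, I can simply union bound over all $Nd$ pairs. With $B\ge 2\zeta^{-2}\kappa^2\log(Nd/\epsilon)$, each failure probability is at most $2\epsilon/(Nd)$ after absorbing constants into $\zeta$, so the total failure probability is at most $2\epsilon$.

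The main obstacle is bookkeeping the constants. The coupling through $\bar y$ and the factor $2$ in the covariance loss inflate the Hoeffding deviation by a constant. That constant has to be absorbed, either into the constant in front of $\zeta^{-2}$ or by redefining $\zeta$ up to a constant, so that the stated threshold on $B$ exactly suffices.
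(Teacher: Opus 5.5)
Your argument is correct up to an absolute constant. It follows the same Hoeffding-plus-union-bound template as the paper, but more carefully. The paper's proof is one line: it applies Hoeffding directly to $G_{w_{ij}^0}$ as if it were an average of $B$ i.i.d.\ terms, citing $|G_{w_{ij}^0}|\le 2\kappa$ and the tail $2\exp(-\zeta^2B/(2\kappa^2))\le 2\varepsilon/(Nd)$, and then union bounds over the $Nd$ pairs. This overlooks exactly the point you raise: each summand $a_i^0(2\bar y x_j^s-x_j^sy^s)$ depends on the whole batch through $\bar y$. Moreover, the exponent it uses corresponds to summands in $[-\kappa,\kappa]$, not $[-2\kappa,2\kappa]$. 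Your split into the genuinely i.i.d.\ averages $\hat u=\frac1B\sum_s x_j^sy^s$ and $\hat v=\frac1B\sum_s x_j^s$ (with means $u,v$) is what makes the Hoeffding step legitimate.

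You can simplify slightly. The paper's $\bar G_{w_{ij}^0}$, as first written in the population computation, keeps the batch $\bar y$ inside the expectation. Then $G_{w_{ij}^0}-\bar G_{w_{ij}^0}=\pm\kappa\bigl(2\bar y(\hat v-v)-(\hat u-u)\bigr)$, so you need neither Lemma~\ref{lem:concentration_juntas} nor the product inequality here. Replacing $\bar y$ by its mean is exactly the separate remainder $r$ in step (a).

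The one thing to fix is the constant bookkeeping. Neither your argument nor the paper's actually yields the literal threshold $B\ge 2\zeta^{-2}\kappa^2\log(Nd/\epsilon)$. With your choice $\zeta'=\zeta/(5\kappa)$, that threshold only gives a per-event failure probability of $2(\epsilon/(Nd))^{1/25}$. So your claim that each failure probability is at most $2\epsilon/(Nd)$ ``after absorbing constants into $\zeta$'' is false as written. Absorbing the constant into the coefficient of $\zeta^{-2}$ changes the stated threshold rather than preserving it. State the conclusion instead with $B\ge C\zeta^{-2}\kappa^2\log(Nd/\epsilon)$ for an absolute constant $C$, or with $\zeta$ replaced by $C\zeta$. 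This is harmless downstream, since the lemma is only invoked with $B=\Omega(d\log^2(d)\log(Nd/\epsilon))$.
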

\begin{proof}
    We apply Hoeffding's inequality, noticing that $|G_{w_{ij}^0}| \leq 2 \kappa  $,
    \begin{align*}
        \mathbb{P} \left( | G_{w_{ij}^0} - \bar G_{w_{ij}^0} | > \zeta  \right)  \leq 2\exp \left( - \frac{\zeta^2 B}{2 \kappa^2} \right) \leq \frac{2\varepsilon }{Nd}.
    \end{align*}
    The result follows by a union bound.
\end{proof}
We make use of the following Lemma, whose proof can be found in~(\cite{cornacchia2025low}[Lemma 10]), which guarantees enough diversity among the hidden features after the first step.
\vspace{0.5em}
\begin{lemma}[\cite{cornacchia2025low}, Lemma 10]
\label{lem:main_first_phase}
Let $\alpha_j = \frac{\kappa (1-2\eta) }{2} (\hat f\{j\})+\hat f_\mu( \{ j \}) \sqrt{1-\mu_j^2}) $. For $\epsilon>0$, there exists a constant $C>0$ such that,
with probability $1-O(\epsilon^{\frac{1}{k+1}})$ over $\bmu$:
    \begin{itemize}
    \item For all $ s,t \in \{ \pm 1 \}^k$, such that $s \neq t$, $ \Big| \sum_{j=1}^k \alpha_j (s_j-t_j)  \Big| \geq C \kappa (1-2\eta) \epsilon $.
\end{itemize}
\end{lemma}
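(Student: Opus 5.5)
Write $g(\bmu):=\sum_{j=1}^k \alpha_j(\bmu)\, v_j$ with $v := s-t \in \{0,\pm 2\}^k\setminus\{0\}$. Since there are at most $3^k$ such vectors, it suffices to show that for each fixed nonzero $v$,
\[
\P_{\bmu}\bigl(|g(\bmu)|<C\kappa(1-2\eta)\epsilon\bigr)\le O(\epsilon^{1/(k+1)}),
\]
and then take a union bound, absorbing $3^k$ into the constant.

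The key structural fact is that $\alpha_j$ is a polynomial in $\bmu$ restricted to the relevant coordinates $\mu_1,\dots,\mu_k$ (we assume $S=[k]$). From the definition of $\chi_{S,\bmu}$ we get
\[
\hat f_{\bmu}(\{j\})\sqrt{1-\mu_j^2}=\E_{\cD_{\bmu}}[f(x)(x_j-\mu_j)]=(1-\mu_j^2)\sum_{T\ni j}\hat f(T)\prod_{i\in T\setminus\{j\}}\mu_i,
\]
which is the same computation as the denoising function $s_i(r)$ in the CSQ proof. Consequently
\[
g(\bmu)=\tfrac{\kappa(1-2\eta)}{2}\,p_v(\mu_1,\dots,\mu_k),
\]
where $p_v$ is a polynomial of degree at most $k+1$ whose coefficients depend only on $f$ and $v$.

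The main obstacle is showing that $p_v$ is not identically zero and that, after normalization, it has a coefficient bounded below by a constant depending only on $k$. I would pick $j$ with $v_j\neq 0$. Every relevant coordinate $j$ has some $T\ni j$ with $\hat f(T)\neq 0$, and $|\hat f(T)|\ge 2^{-k+1}$ because $f$ is a $\pm1$-valued $k$-junta. Take such a $T$ of maximal size. The monomial $\mu_j^2\prod_{i\in T\setminus\{j\}}\mu_i$ appears only in the $j$-th summand: in any other summand $\ell$, the squared variable is $\mu_\ell$ rather than $\mu_j$, and cross terms from sets $T'$ could only match if $|T'|$ were larger, which maximality rules out. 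I expect this bookkeeping to be the delicate point, and if it fails I would instead use the top-degree monomial of maximal $T$ in lexicographic order. Either way, $p_v$ has a coefficient of magnitude at least $2\cdot 2^{-k+1}$, so $\sup_{[-1,1]^k}|p_v|\ge c_k>0$ by equivalence of norms on the finite-dimensional space of polynomials of degree at most $k+1$ in $k$ variables.

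With this in hand, I would apply the Carbery--Wright anti-concentration inequality for polynomials of degree $D=k+1$ under the uniform (log-concave) measure on $[-1,1]^k$:
\[
\P\bigl(|p_v|\le \delta\,\|p_v\|_{L^2}\bigr)\le C_k\,\delta^{1/D}.
\]
Since $\|p_v\|_{L^2}\ge c'_k$, choosing $\delta\asymp\epsilon$ gives probability $O(\epsilon^{1/(k+1)})$ for the event $|g(\bmu)|<C\kappa(1-2\eta)\epsilon$, with $C$ depending on $k$. The union bound over the $3^k$ vectors $v$ then completes the proof.
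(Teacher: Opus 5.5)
Your proof is correct, and it supplies something the paper does not. The paper gives no argument for this lemma. It imports it from \citet{cornacchia2025low}, where it is stated for the pure product measure $\cD_{\bmu}$. Here, however, $\alpha_j$ carries an extra constant $\hat f(\{j\})$ coming from the uniform half of the mixture $\frac12\cD+\frac12\cD_{\bmu}$, and the paper applies the cited lemma to this modified quantity without comment.

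Your route works directly for the modified $\alpha_j$:
\begin{itemize}
\item write $\sum_j\alpha_j(s_j-t_j)=\frac{\kappa(1-2\eta)}{2}\,p_v(\mu_1,\dots,\mu_k)$ with $\deg p_v\le k+1$;
\item exhibit a coefficient of size at least $2^{-k+2}$;
\item convert this to $\|p_v\|_{L^2}\ge c_k$ by norm equivalence on the fixed finite-dimensional space of such polynomials;
\item apply Carbery--Wright for the log-concave law $\Unif[-1,1]^k$;
\item take a union bound over the $3^k-1$ vectors $v$.
\end{itemize}
This gives exactly the exponent $1/(k+1)$ in the statement. It also shows the transfer to the mixture is legitimate, because the added $\hat f(\{j\})$ only touches the constant monomial of $p_v$.

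The step you flagged as delicate is easier than you make it. Write
\[
p_v=\sum_\ell v_\ell\bigl(\hat f(\{\ell\})+(1-\mu_\ell^2)Q_\ell\bigr),\qquad Q_\ell=\sum_{T\ni\ell}\hat f(T)\prod_{i\in T\setminus\{\ell\}}\mu_i .
\]
The only squared variable that can occur in the $\ell$-th summand is $\mu_\ell$. So for \emph{any} $T\ni j$ with $\hat f(T)\neq 0$, the monomial $\mu_j^2\prod_{i\in T\setminus\{j\}}\mu_i$ has coefficient exactly $-v_j\hat f(T)$. Maximality of $|T|$ and the lexicographic fallback are unnecessary, and your remark about cross terms from larger $T'$ is not the actual mechanism.

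Two points are worth stating explicitly:
\begin{itemize}
\item The argument needs every $j\in[k]$ to be a relevant coordinate. Otherwise $\alpha_j\equiv 0$, and the lemma fails for $s,t$ differing only in coordinate $j$. This matches the paper's convention that $S=[k]$ is the set of relevant coordinates.
\item All constants, including $C$ and the one hidden in $O(\epsilon^{1/(k+1)})$, depend on $k$. This is consistent with how the lemma is used downstream.
\end{itemize}
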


\paragraph{Second layer training.}
We show that the previous lemmas imply that there exists an assignment of the second layer's weights that achieves small error.

\vspace{0.5em}
\begin{lemma} \label{lem:main_second_phase}
    Assume that $b_i \sim \Unif[-L,L]$, with $L\geq \kappa $. Let $\alpha_j = \kappa (1-2\eta) (\hat f(\{j\})+\hat f_{\bmu}(\{j\})\sqrt{1-\mu_j^2} )$, for $j \in [k]$. Assume $\kappa, \gamma = \theta(1)$. For $\varepsilon_1, \varepsilon_2>0$, if the number of hidden neurons $N>\Omega(L\log(1/\varepsilon_2)\varepsilon_1^{-1}(1-2\eta)^{-1})$, with probability $1-O(\varepsilon_1^{\frac{1}{k+1}} +\epsilon_2)$, there exists a set of hidden neurons $\{i\}_{i \in [2^k]}$ and a vector $a^* \in \mathbb{R}^{2^k}$ with $\|a^*\|_\infty \leq O(\varepsilon_1^{-1}(1-2\eta)^{-1}) $ such that for all $x \in \{ \pm 1 \}^d$,
    \begin{align} \label{eq:35}
       f(x) = \sum_{i=1}^{2^k} a_i^*  \ReLU\left(\gamma  \sum_{j=1}^k \alpha_j x_j + b_i \right).
    \end{align}
\end{lemma}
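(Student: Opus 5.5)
Since $f$ depends only on $x_S$ with $S=[k]$, it suffices to represent $g(z):=f(z)$ for $z\in\{\pm1\}^k$ as a function of the scalar projection $u(z):=\gamma\sum_{j=1}^k\alpha_j z_j$. By \Cref{lem:main_first_phase}, with probability $1-O(\varepsilon_1^{1/(k+1)})$ over $\bmu$, the $2^k$ values $u(z)$ are pairwise separated by at least $\delta:=C\gamma\kappa(1-2\eta)\varepsilon_1$. Since $\kappa,\gamma=\Theta(1)$ and $|\alpha_j|\le \kappa$, all values lie in $[-\gamma k\kappa,\gamma k\kappa]$. I will rescale $\kappa$ (or $\gamma$) by a factor depending only on $k$ so that this range sits inside $(-L,L)$; this is the step where the hypothesis $L\ge\kappa$ enters. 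The task is then one-dimensional: interpolate an arbitrary $\{\pm1\}$-valued function on $2^k$ points $u_1<\dots<u_{2^k}$ in $(-L,L)$ with gaps $\ge\delta$, using ReLUs with random biases.

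\textbf{Construction.} Let $h$ be the piecewise-linear interpolant of the data with breakpoints at the $u_l$. On $[u_1,u_{2^k}]$, $h$ is a sum of at most $2^k$ ReLUs $\ReLU(u-\beta)$ with coefficients equal to slope changes, each of size at most $4/\delta$. I place breakpoints $\beta_l\in(u_{l-1},u_l)$ and not at the data points. The resulting continuous piecewise-linear function matches the data provided each kink lies strictly between consecutive data points and the slopes are chosen accordingly. Concretely, use one ReLU with bias $b$ slightly below $u_1$ together with a constant term, and one kink in each gap $(u_l,u_{l+1})$.

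\textbf{Realizing the breakpoints with random biases.} A neuron with bias $b_i$ gives $\ReLU(u+b_i)$, whose kink is at $-b_i$. For each of the at most $2^k$ gaps of length $\ge\delta$, I need some $i$ with $-b_i$ in the middle third of that gap, and also one bias placing the kink below $u_1$. Each such interval has probability $\ge \delta/(6L)$ under $\Unif[-L,L]$. With $N$ i.i.d. biases, the probability that a given interval is missed is at most $(1-\delta/6L)^N\le e^{-N\delta/6L}$. A union bound over $2^k+1$ intervals gives failure probability $\le\varepsilon_2$ once $N\gtrsim (L/\delta)\log(2^k/\varepsilon_2)=\Omega(L\log(1/\varepsilon_2)\varepsilon_1^{-1}(1-2\eta)^{-1})$ for fixed $k$, which matches the stated width. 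The constant offset can be absorbed by a neuron whose kink lies below $-\gamma k\kappa$, since it is linear on the whole range. Alternatively, take two such neurons and combine them to produce a constant.

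\textbf{Solving for $a^*$.} Given the chosen kinks $\beta_0<u_1<\beta_1<u_2<\dots$, the function $\sum_l a_l\ReLU(u-\beta_l)$ evaluated at $u_1,\dots,u_{2^k}$ gives a lower-triangular linear system. Solving it successively yields $a_l$ equal to the slope differences, each bounded by $O(1/\delta)=O(\varepsilon_1^{-1}(1-2\eta)^{-1})$ because function values differ by at most $2$ and gaps are at least $\delta/3$. I expect the main obstacle to be making the first and last pieces exact, namely getting $h(u_1)=f$ with only ReLUs and no explicit bias. I plan to handle this with the linear neuron below the range: two such neurons with distinct kinks give an affine function of $u$, which supplies both the constant and the initial slope. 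Finally, intersect the good events over $\bmu$ and the biases to obtain total probability $1-O(\varepsilon_1^{1/(k+1)}+\varepsilon_2)$.
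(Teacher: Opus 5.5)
Your proposal is correct and is essentially the paper's argument: order the $2^k$ projections, invoke \Cref{lem:main_first_phase} for the separation, use a union bound to find one random bias in each gap, and solve the resulting lower-triangular system $M_{n,m}=\ReLU(v_n-b_m)$ by forward substitution. Two of your steps tighten points the paper leaves implicit: the middle-third placement keeps the diagonal entries at least $\delta/3$, so $\|a^*\|_\infty\le C_k/\delta$ with a $k$-dependent constant (not literally $4/\delta$), and the check that all projections lie in $(-L,L)$ is needed because $L\ge\kappa$ alone does not guarantee it. The constant-offset neurons, however, are unnecessary: with the first kink below $u_1$, the square triangular system on exactly $2^k$ neurons already matches the value at $u_1$.
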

\begin{proof}
    For all $s \in \{ \pm 1 \}^k$, let $v_s := \gamma \sum_{j =1}^k \alpha_j s_j$, and let us order the $(v_{s_l})_{l \in [2^k]} $ in increasing order, i.e. such that $v_{s_l}<v_{s_{l+1}}$ for all $l \in [2^k-1]$. For simplicity, we denote $ v_l=v_{s_l} $. By Lemma~\ref{lem:main_first_phase}, we have that with probability $1-O(\varepsilon_1^{\frac{1}{k+1}})$ over $\bmu$, $\min_{l \in [2^k-1]} v_{l+1}-v_l > C \gamma \kappa \varepsilon_1 (1-2\eta)$, for some constant $C>0$. If $N>\Omega(L\log(1/\varepsilon_2) \varepsilon_1^{-1}) $, then with probability $ 1-O(\varepsilon_2)$ there exists a set of $2^k$ hidden neurons $(b_l)_{l \in [2^k]}$ such that for all $l \in [2^k]$, $b_l \in (v_{l-1},v_l)$, where for simplicity we let $v_0 = -L$. Let us define the matrix $M \in \mathbb{R}^{2^k \times 2^k}$, with entries:
    \begin{align*}
        M_{n,m} = \ReLU(v_n -b_m), \qquad n, m \in [2^k].
    \end{align*}
    Then, by construction, $M$ is lower triangular, i.e. $ M_{n,m}=0$ if $m\geq n+1$. Furthermore, by the construction above, the diagonal entries of $M$ are non-zero. Thus, $M$ is invertible. Let us denote by $F \in \mathbb{R}^{2^k}$ the vector such that for all $l \in [2^k]$, the $l$-th entry is given by $F_{l} = f(s_l) $. Then, $a^* = M^{-1}F$ and
    \begin{align*}
        \|a^*\|_\infty &\leq \| M^{-1}\|_{\infty} \| F \|_{\infty} \\
        &\leq C \cdot \frac{1}{\gamma \kappa \varepsilon_1 (1-2\eta)},
    \end{align*}
    for a constant $C>0$.
\end{proof}

\noindent 
By combining the lemmas above, we obtain that for $\kappa, \gamma,L = \theta(1)$,  $B \geq  \Omega(d \log(d)^2 \log(Nd/\varepsilon))$, $N \geq \Omega(\log(1/\varepsilon)  \varepsilon^{-1}(1-2\eta)^{-1})$ with probability $1- O(\varepsilon^{c})$, for some $c>0$, there exists a set of $2^k$ hidden neurons $\{ i\}_{i \in [2^k]}$ such that 
\begin{align*}
    &\forall j \in [k], i \in [2^k]: | w_{ij}^1 - \gamma \alpha_j| < \frac{1}{\sqrt{d}\log(d)};\\
    &\forall j \not\in [k], i \in [2^k]: | w_{ij}^1| < \frac{1}{\sqrt{d}\log(d)};
\end{align*}
and the $b_i$ are such that~\eqref{eq:35} holds.
\noindent
By a slight abuse of notation, let us denote by $a^* \in \mathbb{R}^N$ the $N$-dimensional vector whose entries corresponding to the hidden neurons $\{ i\}_{i \in [2^k]}$ are given by Lemma~\ref{lem:main_second_phase}, and the other entries are zero. For all $i \in [N]$, let $w^*_i \in \mathbb{R}^d$ be such that $w_{ij}^* = \gamma \alpha_j \mathds{1}(j \in [k])$. Let $\hat \theta = (a^*_i, w_i^1,b_i)_{i \in [N]} $ and $\theta^* = (a^*_i, w^*_i, b_i)_{i \in [N]}$. Then, for all $(x,y)\sim \tilde \cD_{\bmu,f}$ we have:
\begin{align}
    \Big| 1- y f_{\NN}(x;\hat \theta)| & = \Big|y-f_{\NN}(x;\hat \theta)\Big|\\
     &\leq \Big| y-f(x)\Big| +  \Big|f(x) - f_{\NN}(x;\theta^*)\Big| +\Big| f_{\NN}(x;\theta^*)-f_{\NN}(x;\hat \theta)\Big| \label{eq:43}\\
    & \overset{(a)}{\leq} 2 \mathds{1}(y\neq f(x) ) +  \Big| \sum_{i=1}^N a_i^* \left( \ReLU( w_i^1 x +b_i) - \ReLU(w_i^* x +b_i)  \right)\Big| \nonumber
\end{align}
where $(a)$ follows because, by Lemma~\ref{lem:main_second_phase}, the second term of~\eqref{eq:43} is zero. Note that,
\begin{align}
    \Big| \ReLU( w_i^1 x +b_i) - \ReLU(w_i^* x +b_i) \Big| 
    &\leq  \Big| \sum_{j=1}^k (w_{ij}^1 - \gamma \alpha_j) x_j \Big|+ \Big| \sum_{j=k+1}^d w_{ij}^1 x_j \Big| \\
    & \leq \frac{2k}{\sqrt{d}\log(d)} +\Big| \sum_{j=k+1}^d w_{ij}^1 x_j \Big|.
\end{align}
Now, $\sum_{j=k+1}^d w_{ij}^1 x_j = \sum_{j=k+1}^d w_{ij}^1 \mu_j + \sum_{j=k+1}^d w_{ij}^1 (x_j-\mu_j): =m_i+Z_i $. Then, 
\begin{align}
    \E Z_i^2 = \sum_{j=k+1}^d w_{ij}^2 \Var(x_j) =O(1/\log(d)^2 ).
\end{align} 
One the other hand, since $\mu_j$ are i.i.d. in $[-1,1]$, $m_i$ is sub-Gaussian with $\Var(m_i) \leq 1/\log(d)^2$. Hence, 
\begin{align}
    \mathbb{P}(|m_i|>t) \leq 2 \exp(-t^2\log^2(d) /2).
\end{align}
Setting $t= O(1/\sqrt{\log(d)})$, we get that with probability $1-1/d$ over $\mu$, $|m_i| <1/\sqrt{\log(d)}$. Let $\cE$ denote such event. Combining terms,
\begin{align}
    \E_{\cD_\mu} \left[\Big| \sum_{j=k+1}^d w_{ij}^1 x_j \Big| \Big| \cE \right] = O(1/\sqrt{\log(d)}).
\end{align}


\noindent 
Thus, 
\begin{align}
    \E_{\cD_{\bmu,f}} [L_{\rm cov}&(x,y,\bar y, f_{\NN}(x;\hat \theta))] \\
    & \leq 2 \cdot \E_{\Tilde\cD_{\bmu,f}}| 1-y f_{\NN}(x;\hat \theta)|\\
    & \leq 2 \left( 2 \eta + \sum_{i=1}^N |a_i^*| \E_{\Tilde \cD_{\bmu}}\Big| \ReLU( w_i^1 x +b_i) - \ReLU(w_i^* x +b_i)  \Big| \right)\\
    & \leq 2\left( 2\eta +  \| a^*\|_2 \cdot \sum_{i=1}^N \E_{\Tilde \cD_{\mu}}\left[\left( \ReLU( w_i^1 x +b_i) - \ReLU(w_i^* x +b_i)  \right)^2\right]\right) \\
    & = 4\eta+ O \left(\frac{\varepsilon^{-2}(1-2\eta)^{-2}}{\log(d)^{1/2}}\right) \label{eq:45}
\end{align}


\noindent 
We then use the following well-known result on the convergence of SGD on convex losses, to show that training only the second layer with a convex loss achieves small error.
\vspace{0.5em}
\begin{theorem}[\cite{shalev2014understanding}] \label{thm:convergence_SGD_martingale}
    Let $\cL$ be a convex function and let $a^* \in \argmin_{\|a\|_2 \leq \cB} \cL(a) $, for some $\cB>0$. For all $t$, let $\alpha^{t}$ be such that $\E\left[ \alpha^{t} \mid a^{t} \right] =  -\nabla_{a^{t}} \cL(a^{t}) $ and assume $\| \alpha^{t} \|_2 \leq A $ for some $A>0$. If $a^{(0)} = 0 $ and for all $t \in [T]$ $a^{t+1} = a^{t} +\gamma \alpha^{t} $, with $\gamma = \frac{\cB}{A \sqrt{T}}$, then
    \begin{align*}
        \frac{1}{T} \sum_{t=1}^T \cL(a^{t}) \leq \cL(a^*) + \frac{\cB A}{\sqrt{T}}.
    \end{align*}
\end{theorem}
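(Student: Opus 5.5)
Since $\gamma$ is constant, we may put $\eta_t := \gamma$ in the standard online-convex-optimization argument and analyze a potential based on the distance to the comparator $a^*$.

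\emph{Step 1 (one-step expansion).} From the update $a^{t+1}=a^t+\gamma\alpha^t$,
\begin{equation*}
\|a^{t+1}-a^*\|_2^2=\|a^t-a^*\|_2^2+2\gamma\langle \alpha^t,a^t-a^*\rangle+\gamma^2\|\alpha^t\|_2^2 .
\end{equation*}
Rearranging isolates the inner product:
\begin{equation*}
\langle -\alpha^t,a^t-a^*\rangle=\frac{\|a^t-a^*\|_2^2-\|a^{t+1}-a^*\|_2^2}{2\gamma}+\frac{\gamma}{2}\|\alpha^t\|_2^2 .
\end{equation*}
Summing over $t$ telescopes the first term. Using $a^{(0)}=0$ (so the initial distance is $\|a^*\|_2^2\le\cB^2$), dropping the final nonnegative distance, and bounding $\|\alpha^t\|_2\le A$, the sum is at most $\frac{\cB^2}{2\gamma}+\frac{\gamma T A^2}{2}$.

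\emph{Step 2 (pass to expectations).} Condition on $a^t$ and use $\E[\alpha^t\mid a^t]=-\nabla\cL(a^t)$. Then $\E\langle-\alpha^t,a^t-a^*\rangle=\E\langle\nabla\cL(a^t),a^t-a^*\rangle$ by the tower property. Convexity gives $\cL(a^t)-\cL(a^*)\le\langle\nabla\cL(a^t),a^t-a^*\rangle$. Hence
\begin{equation*}
\E\frac1T\sum_t\cL(a^t)-\cL(a^*)\le\frac{\cB^2}{2\gamma T}+\frac{\gamma A^2}{2},
\end{equation*}
and substituting $\gamma=\cB/(A\sqrt T)$ makes the right-hand side equal to $\cB A/\sqrt T$.

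\emph{Step 3 (index bookkeeping).} The statement averages over $t=1,\dots,T$, whereas the telescoping naturally covers $t=0,\dots,T-1$. This is either a harmless reindexing (the same argument applied to iterates $a^1,\dots,a^T$) or absorbed into constants. For a nonsmooth loss such as the hinge-type covariance loss, $\nabla\cL$ should be read as a subgradient, and the convexity inequality above holds for any subgradient.

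The main obstacle is not technical but one of interpretation. The guarantee holds in expectation over the stochastic gradients, so the left-hand side should be read as $\E[\cdot]$ (this is how \cite{shalev2014understanding} states it). I would make this explicit rather than claim a pathwise bound.
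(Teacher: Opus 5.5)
Your proof is correct. It is exactly the standard argument behind the result the paper cites from \cite{shalev2014understanding} without proof: telescope $\|a^t-a^*\|_2^2$, then apply the tower property and the (sub)gradient inequality. Conditioning on $a^t$ alone suffices here, since $a^t-a^*$ is $a^t$-measurable.

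You are right that the conclusion must be read in expectation. On the indexing, the exact constant $\cB A/\sqrt{T}$ comes from counting the zero iterate as the first averaged one, as in the textbook convention $w^{(1)}=0$. If you instead restart the telescoping at a nonzero $a^1$, you pick up an extra lower-order term, and that cannot be ``absorbed'' into a constant the statement gives explicitly.
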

We choose $\cB = \Omega(\varepsilon^{-1}(1-2\eta)^{-1})$. In our case we have $\| \alpha^t\|_2 \leq 2$. 

By~\eqref{eq:45}, $\cL(a^*)\leq 4\eta + \delta/2$, for $d$ large enough. Thus, to achieve loss at most $4\eta+\delta$, we need at least $T = \Omega\Big(\frac{1}{\delta^2 \varepsilon^2(1-2\eta)^2}\Big)$ training steps. Since we assume $\E_{\cD}[f(x)]=0$, then $f$ cannot be a constant function, thus $ c\cdot |\bar y|< 1-1/C$, for some $C>0$.
We then use Proposition 1 in~\cite{abbe2023provable} to conclude that since $|\bar y| <(1-2\eta)$, then $ \mathds{1}( \sign(f_{\NN}(x;\hat \theta)) \neq f(x)) < C(\eta+\delta)$. Since the conditions of Lemma~\ref{lem:main_second_phase} happen with probability $1-\epsilon^c$, for some $c>0$, taking expectation over $\cM_{\cD}$ we get the theorem statement. 


\section{DS-PAC and Membership Queries: Proofs}\label{app:DSPAC-MQ}

\begin{proof}[Proof of \Cref{thm:ddspac-namq}]
    Assume we have a \ddspac learning algorithm $A$ for \(\mathcal{H}\), where the algorithm draws its
    training data $S'$ from the training distribution $\cD'$. By definition, the expected excess
    error is small:
    $
\mathbb{E}_{S' \sim (\mathcal{D}',f)^{m(\varepsilon)}}\!\left[
    L_{\mathcal{D},f}(A(S')) - L_{\mathcal{D},f}(\mathcal{H})
\right] \le \varepsilon .
    $
Then an \namq algorithm can simply query those sample points \(S'\) and simulate the algorithm \(A\).

     Moreover, learnability in the realizable setting in \ddspac also implies learnability in the presence of label noise: after sampling a training set, we can resample each query sufficiently many times so that, by the coupon collector argument, every queried example is observed multiple times, and the true label can be recovered by majority vote.
\end{proof}

\begin{proof}[Proof of \Cref{thm:namq-rdspac}]
Let $A$ be the given NA-MQ learner. Run $A$ only to generate its query list $Q=(x_1,\dots,x_m)$ and $m \le m_0(\varepsilon/2),$
Let $U=\mathrm{supp}(Q)$, so $|U|\le m$. Define the training distribution $D'_Q$ to be the uniform distribution over $U$.
Let $\mathcal{M}_{\cD}$ be the distribution over training distributions induced by running $A$ (which can be randomized) to generate its query list $Q$ and setting $D'_Q$ uniform over $Q$.

Choose failure probability $\delta=\varepsilon/2$. By the coupon-collector bound, if $\tilde m  \ge\ \Omega(|U|(\log |U| + \log(1/\delta)))$,
then with probability at least $1-\delta$ every $u\in U$ appears in $S$. On this event, the learner reconstructs the full labeled set $\{(u,f(u)) : u\in U\}$ (taking the first occurrence for each $u$). It then runs $A$ on this set to obtain $\hat h$.

Conditioned on full coverage of $U$, the distribution of the reconstructed labeled query list is exactly the NA-MQ transcript that $A$ expects. Thus
\[
\mathbb{E} \left[L_{\cD,f}(\hat h)  \middle| \text{coverage}\right]
 \le \inf_{h\in\cH}L_{\cD,f}(h) + \varepsilon/2 .
\]
On the failure event (probability $\le \delta$), the learner outputs a fixed $h_0\in\cH$, which adds at most $\delta$ to the error. Therefore,
\[
\mathbb{E}[L_{\cD,f}(\hat h)]
\le\ \inf_{h\in\cH}L_{\cD,f}(h) + \varepsilon/2 + \delta
\le\ \inf_{h\in\cH}L_{\cD,f}(h) + \varepsilon .
\]

Using $|U|\le m_0(\varepsilon/2)$ and $\delta=\varepsilon/2$, we have
\[
\tilde m = O\!\Big(m_0(\varepsilon/2)\,\big(\log m_0(\varepsilon/2)+\log(1/\varepsilon)\big)\Big).
\]
This gives the RDSPAC sample bound. The procedure is polynomial-time assuming $A$ is.

\paragraph{Deterministic case.}  
If $A$ is deterministic, then $Q$ (and hence $U$) is fixed given $\cD$, so the training distribution can be chosen deterministically as $D':=D'_Q$. This yields DDSPAC learnability with the same sample bound.

\end{proof}




\section{Classifying Hypothesis Classes Across Frameworks}\label{app:hypotheses}

In this section, we elaborate on the classification of hypothesis classes within the PDS framework and related learning models.

\begin{itemize}
    \item \textbf{Parities}: For $a\in\{0,1\}^d$, define the parity 
    $\chi_a(x)=\bigoplus_{i=1}^d a_i x_i = \langle a,x\rangle \bmod 2$.
    The parity class over $d$ bits is defined as $ \parityd= \{\chi_a: a\in\{0,1\}^d \}.$ We consider also $k$-sparse parities, $\parityd^k =\{ \chi_a: a\in \{0,1\}^d, \|a\|_0=k\}$.
    \item \textbf{Juntas}: For $k\le d$, the $k$-junta class is $\juntakd= \{f:\{0,1\}^d \to \{0,1\}: \exists S\subseteq[d], |S|\le k, \exists g:\{0,1\}^S \to\{0,1\} \text{ s.t.}\ f(x)=g(x_S)\}.$
    \item {Boolean circuits}:
    Let $B$ be the set of gates $\{$AND, OR, NOT$\}$.
    A Boolean circuit $C$ on $d$ inputs is a finite directed acyclic graph (DAG) with input nodes $x_1,\dots,x_d$, internal nodes labeled by
    gates in $B$, and one output node. For $x\in\{0,1\}^d$, values propagate along edges to define
    $f_C(x)\in\{0,1\}$. The size of the circuit is the number of gate nodes. 
    Let $\circuittsd$ be the set of all Boolean functions on $d$ variables that are computable by
    Boolean circuits of depth at most $t$ and size at most $s$.
    \item \textbf{Disjunctive normal form  (DNF)}: A term is a conjunction of literals. A DNF with $m$ terms is a function $f(x)=T_1(x)\vee \cdots \vee T_m(x)$ where each $T_j$ is a term over $\{x_1,\dots,x_d\}$.
    The class of DNF formulas with at most $s(d)$ terms over $d$ variables is 
    $\dnfsd = \{f:\{0,1\}^d \to\{0,1\}: \exists m\le s(d) \text{ s.t. }f(x)=\bigvee_{j=1}^m T_j(x)\}.$
    \item \textbf{Decision Tree}: A (binary) decision tree on variables $x_1,\dots,x_d$ is a rooted tree whose internal nodes are labeled by variables
    and whose edges correspond to outcomes $x_i=0$ or $x_i=1$, where leaves are labeled by outputs in $\{0,1\}$.
    It computes the function given by evaluating the tested variables along the unique root-to-leaf path.
    Let $\dttd$ be the set of all Boolean functions on $d$ variables that are computed by
    (binary) decision trees of depth at most $t$ and size at most $s$. 
    %
    %
    \item  \textbf{Sparse functions}: Let $f : \{0,1\}^d \to \{-1,1\}$ be a Boolean function with Fourier expansion $f(x) = \sum_{S \subseteq [d]} \hat{f}(S) \chi_S(x)$,
where $\chi_S(x) = (-1)^{\sum_{i \in S} x_i}$ are the parity functions.
We say that $f$ is \emph{$s$-sparse} if the number of nonzero Fourier coefficients is at most $s$, i.e., $\big| \{ S \subseteq [d] : \hat{f}(S) \neq 0 \} \big| \;\leq\; s$.
    \item \textbf{Deterministic finite automaton (DFA)}:
Let $\Sigma=\{0,1\}$. A DFA is a tuple $M=(Q,\Sigma,\delta,q_0,F)$ with finite state set $Q$,
transition function $\delta:Q\times\Sigma\to Q$, start state $q_0\in Q$, and accept set $F\subseteq Q$.
For length $d$, $M$ induces a Boolean function $f_M:\{0,1\}^d\to\{0,1\}$ by
$f_M(x_1,\ldots,x_d)=\mathbb{I}\{\delta^{(d)}(q_0,x_1,\ldots,x_d)\in F\}$,
where $\delta^{(d)}$ extends $\delta$ to strings.
Let $\dfasd$ be the set of all Boolean functions on $d$ variables that are computed
    by a DFA with at most $s$ states. 
\end{itemize}


\paragraph{Juntas.}
$\log(d)$-Juntas are known to be deterministically \namq learnable in the realizable setting \citep{bshouty2016exact}. Because deterministic \namq learners fix their queries in advance, random classification noise can be overcome by repeating each query sufficiently many times (ensuring, by the coupon collector argument, that every queried example is observed multiple times) and taking the majority label. This in turn implies that noisy juntas are also \ddspac learnable. We provide a direct proof of this fact by a different method (using correlation statistical queries), presenting a simple Correlational Statistical Query (CSQ) algorithm for \ddspac learning juntas with random label noise.

In the agnostic setting, this class is learnable in the random walk model \cite{arpe2008agnostically}, and hence also in \namq. Since we established the equivalence between \rdspac and \namq, it follows that juntas are \rdspac learnable in the agnostic setting as well.

\paragraph{DNFs.}
\citet{feldman2007attribute} showed that DNFs are learnable under the uniform distribution in the realizable case using non-adaptive membership queries. Consequently, DNFs are also \rdspac learnable, including in the presence of random classification noise. It remains an open question whether this class is \ddspac learnable. \citet{bshouty2005learning} further established that DNFs are learnable in the random walk model, which is a more “passive” setting than \namq, while \citet{bartlett1994exploiting} had earlier shown the result for 2-term DNFs. Prior to these works, DNFs were known to be \amq learnable under the uniform distribution \citep{blum1994weakly,jackson1997efficient}. Most recently, \citet{alman2025dnf} gave a quasi-polynomial time algorithm for DNFs using (adaptive) membership queries that applies to arbitrary distributions.
%


\paragraph{Decision trees.}
\citet{bshouty2018exact} showed that decision trees of logarithmic depth in the dimension are distribution-free learnable in the realizable case using non-adaptive membership queries.  Consequently, they are also \rdspac learnable, including in the presence of random classification noise. It remains an open question whether this class is \ddspac learnable. \citet{bshouty2005learning} further established that decision trees are learnable in the random walk model.
This class has long been known to be learnable using adaptive membership queries \citep{kushilevitz1993learning}, with subsequent improvements by \citet{bshouty2019adaptive}. In the agnostic setting, decision trees are learnable under the uniform distribution in \amq \citep{gopalan2008agnostically}.


%
\paragraph{Sparse functions.} 
\cite{kushilevitz1993learning} showed that for all functions $f:\{0,1\}^n\to \{0,1\}$ that are $\eps$ approximated by a $t$ sparse function in $L_2$, there exists a randomized polynomial time algorithm using (adaptive) membership queries that on input $f$ and $\delta$ returns $h$ such that with probability $1-\delta$, $h$ $O(\eps)$ approximates $f$ in $L_2$.
This implies that sparse functions are \amq learnable for the uniform distribution.
%
%
\paragraph{DFAs.} 
A seminal result by \citet{angluin1987learning} showed that this class can be learned using adaptive membership queries. Whether it is also \namq learnable, or whether a separation exists, remains an open question (to the best of our knowledge).
%
\paragraph{Circuits.} 
Learning general circuits remains computationally intractable even with membership queries. For instance, it is known that constant-depth Boolean circuits (AC$^0$) and threshold circuits (TC$^0$) are hard to learn under the uniform distribution, even in the \amq model \citep{kharitonov1993cryptographic}. While the precise hardness assumptions for Boolean circuits are not always considered ``standard'', threshold circuits already provide strong evidence of intractability: depth-4 TC circuits can implement pseudorandom functions \citep{krause2001pseudorandom,naor2004number}, which implies hardness of learning with membership queries for every non-trivial input distribution. 
Moreover, recent work \citep[Section~6]{chen2022hardness} shows that such pseudorandom function constructions also yield hardness results for learning real-valued neural networks of depth 5 or 6 under natural distributions such as the Gaussian. Together, these results highlight that circuit classes capable of expressing pseudorandom functions are not learnable, even with powerful query access, unless one is willing to break widely believed cryptographic assumptions.

\section{Experiment Details and Additional Experiments}
\label{sec:experiments}

\subsection{Experiment Details}
All experiments were performed using the PyTorch framework~(\cite{paszke2019pytorch}) and they were executed on NVIDIA Volta V100 GPUs. Each experiment is repeated $5$ times and we plot the mean; shaded regions denote $\pm 1 $ standard deviation.

\paragraph{Architecture.} For the results presented in the main, we used a 2-layer MLP architecture trained by SGD with the square loss. In this Section, we also present some experiments obtained with a 4-layer MLP trained by SGD with the squared loss. 
\begin{itemize}
     \item \textbf{2-layer MLP.} This is a fully-connected architecture, with $1$ hidden layer of $1024$ neurons, and $\ReLU$ activation.
    \item \textbf{4-layer MLP.} This is a fully-connected architecture of $3$ hidden layers of neurons of size $512, 512, 64$, and $\ReLU$ activation. 
\end{itemize}
We use PyTorch's default initialization, which initializes weights of each layer with $\Unif[\frac{1}{\sqrt{\rm dim}_{\rm in}},-\frac{1}{\sqrt{\rm dim}_{\rm in}}] $, where ${\rm dim}_{\rm in}$ is the input dimension of the corresponding layer. 

\paragraph{Training procedure.} We consider the $\ell_2$ loss: $L_{\ell_2}(\hat y, y) : = (\hat y-y)^2$. We sample fresh batches of samples at each iterations. We stop training either when the training loss is less than $0.01$, or when $10^6$ iterations are performed. We compare PDS with no-PDS, where for PDS is defined as follows for parities and juntas:
\begin{itemize}
    \item \textbf{PDS for Parities}: We select samples from $\cD' = \frac 12 \Unif \{ \pm 1 \}^d + \frac 12 {\rm Rad}(1-1/d)^{\otimes d}$. The test error is computed on the uniform distribution.
    \item \textbf{PDS for Juntas}: For each experiment, we independently draw $\bmu \sim \Unif [-1,1]^{\otimes d}$. We select training samples from $\cD' = \frac 12 \Unif \{ \pm 1 \}^d + \frac 12 \otimes_{i \in [d]} {\rm Rad}((\mu_i+1)/2)$. The test error is computed on the uniform distribution. The test error is computed on the uniform distribution.
\end{itemize}
In the \textbf{no-PDS} experiments, we select both training and test samples from $\Unif\{ \pm 1 \}^d$, for both parities and juntas. In all experiments, the test-set is of size $8192$.

\paragraph{Hyperparameter tuning.} The primary goal of our experiments is to conduct a fair comparison between PDS and no-PDS training. Thus, we did not engage in extensive hyperparameter tuning. We tried different batch sizes and learning rates, and we did not observe significant qualitative difference. We chose to report the experiments obtained for a standard batch size of $64$ and a learning rate of $0.01$ for 2-layer MLP and of $0.05$ for 4-layer MLP.

\subsection{Additional Experiments}
We complement the main experiments with two additional figures. Figure~\ref{fig:parity5_noise} uses the same PDS training distribution as Figure~\ref{fig:main:parity-high-dim} but on a sparse (rather than dense) parity, and again shows clear gains from PDS. Figure~\ref{fig:junta_4layers} studies juntas under the same PDS as Figure~\ref{fig:junta_main}, now with a 4-layer network. Interestingly, in the left plot and for no-PDS, some seeds learn faster with label noise than without; nonetheless, across seeds and noise levels, PDS consistently yields faster and more reliable learning even with the deeper architecture. 
\begin{figure}[h]
    \centering
    \includegraphics[width=0.49\linewidth]{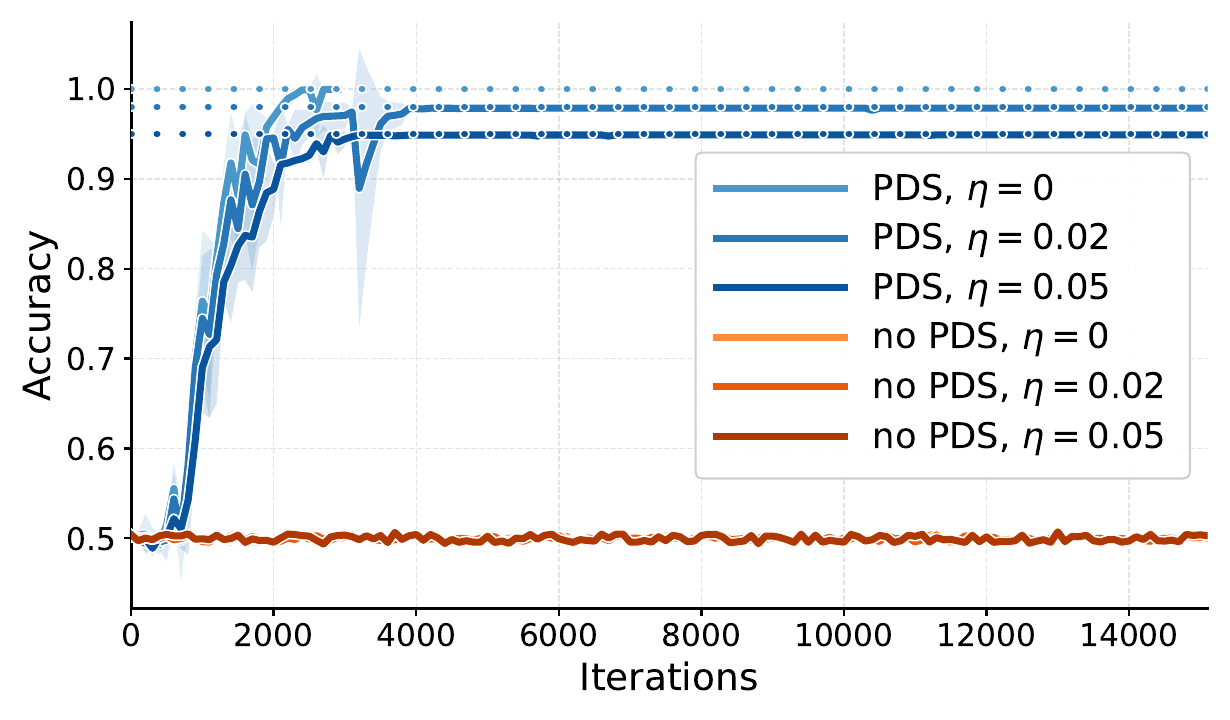}
    \includegraphics[width=0.49\linewidth]{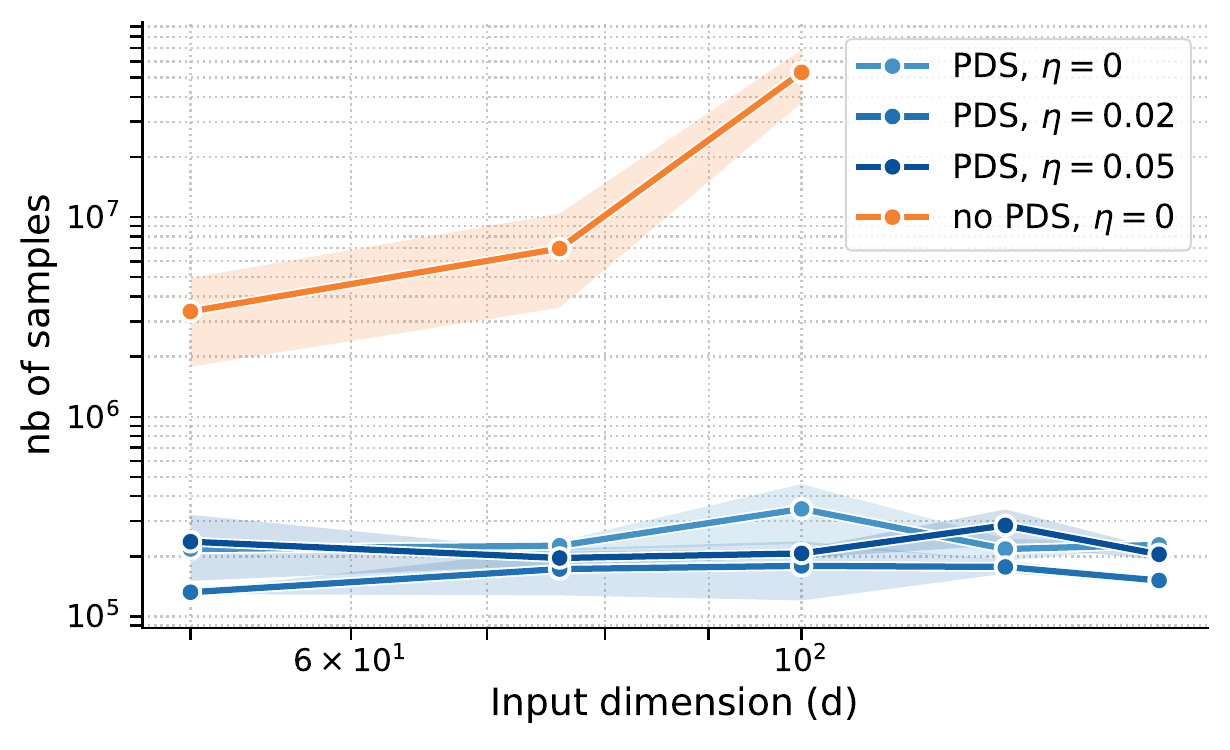}
    \caption{\textbf{Sparse parity with noise.} We compare PDS and standard (no-PDS) learning for a $5$-parity with label noise $\eta \in \{0,0.02,0.05\}$. (Left) For $d=50$, we plot the test accuracy on $\cD$ versus gradient descent steps for a 4-layer ReLU network trained with SGD (batch size $b=64$) on fresh samples from $\cD'$ (PDS) and from $\cD$ (no PDS). Dotted lines show Bayes accuracy (i.e., $1-\eta$). (Right) We plot the sample complexity to reach within $0.01$ of Bayes error versus input dimension. We report the simulations that converged within $10^6$ training steps. In both figures, we see that PDS training is markedly more efficient.}
    \label{fig:parity5_noise}
\end{figure}

\begin{figure}[h]
    \centering
    \includegraphics[width=0.49\linewidth]{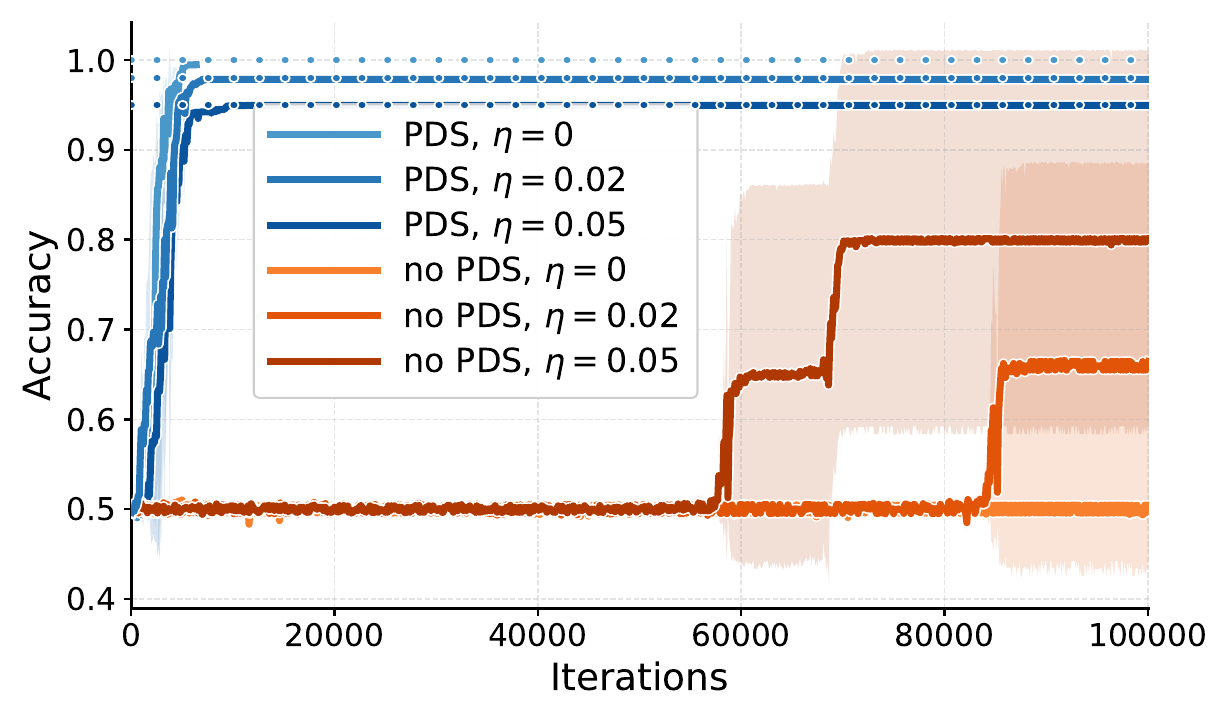}
    \includegraphics[width=0.49\linewidth]{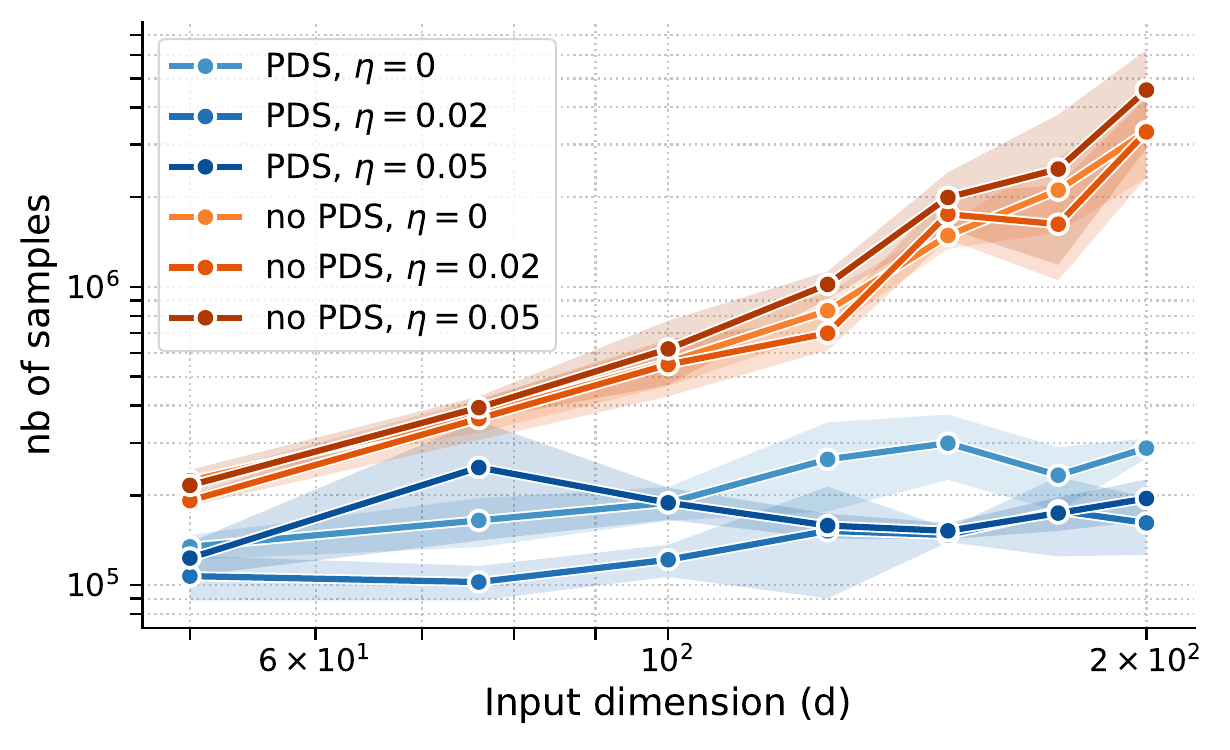}
    \caption{\textbf{Sparse juntas with noise.} We compare PDS and standard (no-PDS) learning for $f_9$ (see~\eqref{eq:junta_def_exp}) with label noise $\eta \in \{0,0.02,0.05\}$. (Left) For $d=50$, we plot the test accuracy on $\cD$ versus gradient descent steps for a 4-layer ReLU network trained with SGD (batch size $b=64$) on fresh samples from $\cD'$ (PDS) and from $\cD$ (no PDS). Dotted lines show Bayes accuracy (i.e., $1-\eta$). (Right) For $f_7$, we plot the sample complexity to reach within $0.01$ of Bayes accuracy versus input dimension. In both figures, we see that PDS training is markedly more efficient.}
    \label{fig:junta_4layers}
\end{figure}

\end{document}

%% file: math_commands.tex
\usepackage{amsmath,amsfonts,bm}

\def\1{\bm{1}}

\def\eps{{\epsilon}}

\makeatletter
\newcommand{\ve}{\@ifnextchar\bgroup{\velong}{{\bm{e}}}}
\newcommand{\velong}[1]{{\bm{#1}}}
\makeatother

\DeclareMathAlphabet{\mathsfit}{\encodingdefault}{\sfdefault}{m}{sl}
\SetMathAlphabet{\mathsfit}{bold}{\encodingdefault}{\sfdefault}{bx}{n}

\newcommand{\E}{\mathbb{E}}
\newcommand{\R}{\mathbb{R}}

\newcommand{\Var}{\mathrm{Var}}

\DeclareMathOperator*{\argmin}{arg\,min}

\DeclareMathOperator{\sign}{sign}

\newcommand{\twowayarrows}{%
  \mathrel{%
    \mathop{\raisebox{-0.25ex}{$\nleftarrow$}}%
    \limits^{\smash{\raisebox{-0.25ex}{$\rightarrow$}}}%
  }%
}

\newcommand{\removed}[1]{}

%% file: symbols.tex
\newcommand{\cA}{\mathcal{A}}
\newcommand{\cB}{\mathcal{B}}

\newcommand{\cD}{\mathcal{D}}
\newcommand{\cE}{\mathcal{E}}
\newcommand{\cF}{\mathcal{F}}

\newcommand{\cH}{\mathcal{H}}

\newcommand{\cL}{\mathcal{L}}
\newcommand{\cM}{\mathcal{M}}
\newcommand{\cN}{\mathcal{N}}

\newcommand{\cQ}{\mathcal{Q}}

\newcommand{\cX}{\mathcal{X}}
\newcommand{\cY}{\mathcal{Y}}

\newcommand{\bmu}{\boldsymbol{\mu}}

\newcommand{\abs}[1]{\lvert #1 \rvert}

\newcommand{\ReLU}{\mathrm{ReLU}}

\newcommand{\poly}{\mathrm{poly}}

\newcommand{\pac}{\text{PAC}\xspace}

\newcommand{\fpds}{\text{f-PDS}\xspace}
\newcommand{\dspac}{\text{DSPAC}\xspace}
\newcommand{\ddspac}{\text{D-DS-PAC}\xspace}
\newcommand{\rdspac}{\text{R-DS-PAC}\xspace}
\newcommand{\namq}{\text{NA-MQ}\xspace}
\newcommand{\amq}{\text{A-MQ}\xspace}

\newcommand{\circuittsd}{\mathsf{Circuit}^{t,s}_{d}}           
\newcommand{\parityd}{\mathsf{Parity}_{d}}                   
\newcommand{\juntakd}{\mathsf{Junta}^{k}_{d}}               
   
\newcommand{\dnfsd}{\mathsf{DNF}^{s}_{d}}                       
\newcommand{\dttd}{\mathsf{DT}^{t,s}_{d}}                         
\newcommand{\dfasd}{\mathsf{DFA}^{s}_{d}}   


%% file: def.tex
\renewcommand{\P}{\mathbb{P}}

\newcommand{\<}{\langle}
\renewcommand{\>}{\rangle}

\def\bzero{{\boldsymbol 0}}

\newcommand{\h}{\hat}

\newtheorem*{theorem*}{Theorem}

\theoremstyle{definition}

\newtheoremstyle{myremark} 
    {\topsep}                    
    {\topsep}                    
    {\rm}                        
    {}                           
    {\bf}                        
    {.}                          
    {.5em}                       
    {}  

\theoremstyle{myremark}

\DeclareSymbolFont{rsfs}{U}{rsfs}{m}{n}
\DeclareSymbolFontAlphabet{\mathscrsfs}{rsfs}

\def\bW{{\boldsymbol W}}

\def\ba{{\boldsymbol a}}
\def\bb{{\boldsymbol b}}

\def\bw{{\boldsymbol w}}
\def\bx{{\boldsymbol x}}

\def\bmu{{\boldsymbol \mu}}

\def\btheta{{\boldsymbol \theta}}

\def\supp{{\rm supp}}

\def\Unif{{\rm Unif}}

\def\cQ{{\mathcal Q}}
\def\cL{{\mathcal L}}
\def\cF{{\mathcal F}}
\def\cE{{\mathcal E}}

\def\cH{{\mathcal H}}
\def\cA{{\mathcal A}}

\def\Unif{{\sf Unif}}

\def\NN{{\sf NN}}

\def\Unif{{\sf Unif}}

\def\NN{{\sf NN}}

\def\cE{{\mathcal E}}
\def\cD{{\mathcal D}}
\def\cX{{\mathcal X}}
\def\cF{{\mathcal F}}

\def\Unif{{\rm Unif}}

\def\cE{{\mathcal E}}

\def\cX{{\mathcal X}}

\def\btheta{{\boldsymbol \theta}}

\def\cM{{\mathcal M}}

\def\bb{{\boldsymbol b}}

\def\ind{\mathbbm{1}}

\def\cB{\mathcal{B}}

\def\cY{\mathcal{Y}}

\def\cB{\mathcal{B}}

\def\ReLu{{\sf ReLu}}
\def\cert{{\sf cert}}
\def\poly{{\rm poly}}